\documentclass{article}

\usepackage{microtype}
\usepackage{graphicx}

\usepackage[accepted]{icml2026}

\definecolor{navy}{rgb}{0.00,0.078,0.447}
\usepackage[pagebackref,breaklinks,colorlinks,allcolors=navy]{hyperref}
\usepackage{algorithm}
\usepackage{algpseudocode}
\usepackage{multirow}
\usepackage{booktabs,array}
\usepackage{adjustbox}
\usepackage{graphicx}
\usepackage{caption}
\usepackage{lipsum}
\usepackage{subcaption}
\usepackage{amsmath,amsfonts,amssymb}
\usepackage{amsthm}
\newtheorem{theorem}{Theorem}[section]

\newtheorem{proposition}[theorem]{Proposition}
\usepackage{subcaption}
\usepackage{enumitem}
\usepackage{siunitx}
\usepackage{placeins}    
\usepackage{dblfloatfix}
\usepackage{hyperref}

\makeatletter
\newlength{\eqtagwd}
\newcommand{\eqfit}[1]{%
  \settowidth{\eqtagwd}{\tagform@{\theequation}}%
  \adjustbox{max width=\dimexpr\linewidth-\eqtagwd-1.05em\relax}{$\displaystyle #1$}%
}
\makeatother
\usepackage[textsize=tiny]{todonotes}

\icmltitlerunning{Geometry-Correct Diffusion Posterior Sampling with Denoiser-Pullback Curvature Guidance and Manifold-Aligned Damping}

\begin{document}
\twocolumn[
  \icmltitle{Geometry-Correct Diffusion Posterior Sampling with Denoiser-Pullback Curvature Guidance and Manifold-Aligned Damping}



  \icmlsetsymbol{equal}{*}
  \begin{icmlauthorlist}
    \icmlauthor{Seunghyeok Shin}{equal,sch}
    \icmlauthor{Minwoo Kim}{equal,sch}
    \icmlauthor{Dabin Kim}{sch}
    \icmlauthor{Hongki Lim}{sch}
  \end{icmlauthorlist}
  \icmlaffiliation{sch}{Department of Electrical and Computer Engineering, Inha University, Incheon, 22212, South Korea}
  \icmlcorrespondingauthor{Hongki Lim}{hklim@inha.ac.kr}
  \icmlkeywords{Machine Learning, ICML}

  \vskip 0.3in]
\begin{figure*}
\centering
\captionsetup{type=figure}
\includegraphics[width=1.0\linewidth]{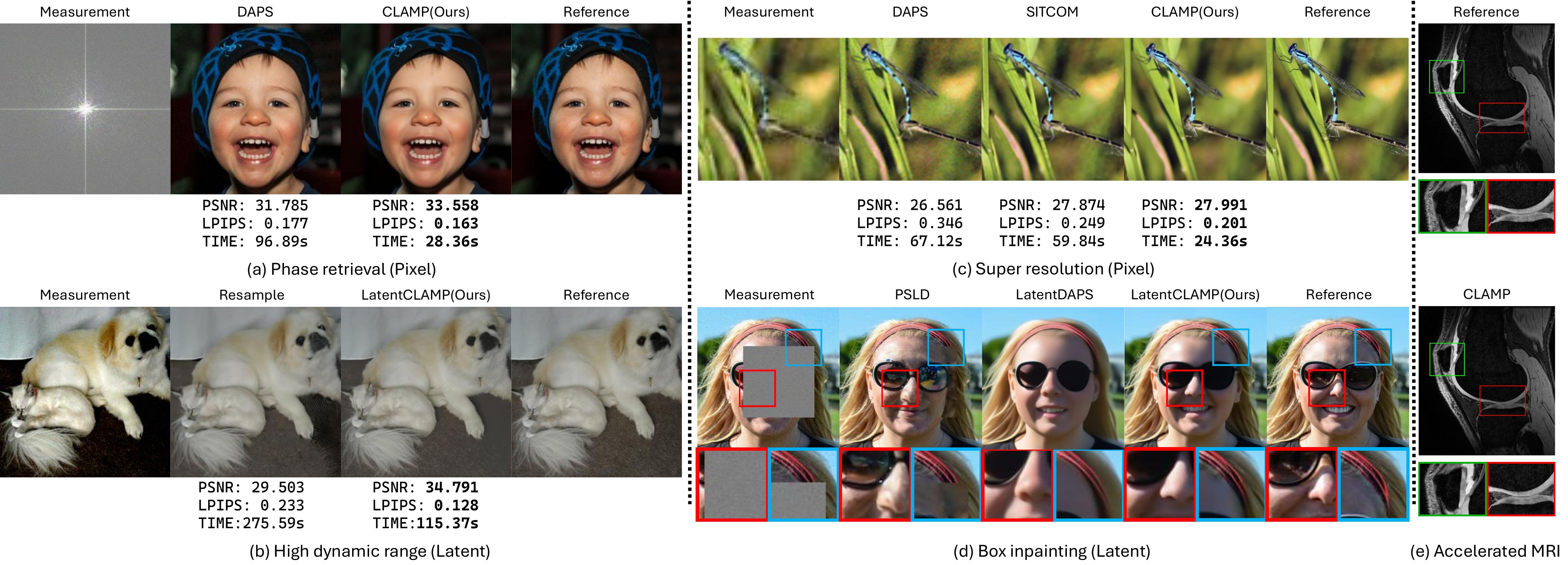}
\captionof{figure}{\textbf{Qualitative results on diverse inverse problems.}
(a--d) Qualitative comparisons on FFHQ and ImageNet at \(256\times256\), covering both pixel- and latent-space variants of \textsc{clamp} and baseline methods across a range of general inverse problems. For each task, we report the best competing results for a representative visual comparison. 
(e) Visualization of \textsc{clamp} on accelerated MRI reconstruction.
Overall, \textsc{clamp} produces high-fidelity, visually plausible reconstructions and remains effective across pixel/latent settings, linear/nonlinear operators, and both natural and medical imaging domains.}
\label{fig:main}
\end{figure*}



\printAffiliationsAndNotice{\icmlEqualContribution}

\begin{abstract}
Diffusion posterior sampling conditions diffusion priors on measurements, but data-consistency updates are typically scaled by hand-tuned guidance weights and can destabilize sampling under stiff, operator-dependent curvature. We replace scalar guidance with a per-noise-level damped Gauss--Newton correction computed in diffusion-state coordinates. The correction pulls likelihood gradients back through the denoiser, uses a one-sided curvature model that avoids forward denoiser Jacobians, and applies diffusion-calibrated rank-one damping aligned with the denoiser residual. Each correction is solved with matrix-free GMRES using automatic differentiation, and sampling proceeds with a variance-preserving Langevin transition with a closed-form drift/noise split. On FFHQ and ImageNet across inverse problems, it achieves competitive PSNR/SSIM/LPIPS while running markedly faster than most of the compared baselines; on accelerated MRI reconstruction, it achieves the best PSNR/SSIM among the compared baselines. Code is available at \href{https://github.com/Seunghyeok0715/CLAMP}{\nolinkurl{github.com/Seunghyeok0715/CLAMP}}.

\end{abstract}

\section{Introduction}

Inverse problems aim to infer an unknown signal from indirect and noisy measurements. We consider the standard additive Gaussian model
\begin{equation}
\mathbf{y}=A(\mathbf{x}_0)+\mathbf{n},\qquad \mathbf{n}\sim\mathcal{N}(\mathbf{0},\beta_{\mathbf{y}}^2\mathbf{I}),
\label{eq:intro_meas}
\end{equation}
where $A$ is a (possibly nonlinear) forward operator and $\mathbf{x}_0$ is the unknown clean image. Because the mapping $\mathbf{x}_0\mapsto \mathbf{y}$ is typically ill-posed, many $\mathbf{x}_0$ can explain the same observation~\cite{bertero1998inverse,calvetti2007bayesian}. In such settings, posterior sampling can be more informative than returning a single reconstruction: it generates multiple plausible explanations of the data and supports principled uncertainty quantification \cite{stuart2010bayesian,dashti2017bayesian}.

Pretrained diffusion models are strong natural-image priors and are widely used in training-free inverse problem solvers~\cite{feng2023principled_priors}. They generate samples by progressively denoising noisy states \cite{pmlr-v37-sohl-dickstein15,ho2020ddpm,song2019gradients,song2021sde}, and EDM provides a practical noise-conditioned denoiser interface and schedule \cite{karras2022edm}. Because runtime is dominated by denoiser calls, acceleration (e.g., distillation) is relevant under fixed compute budgets \cite{salimans2022distillation}.

A widely used conditional strategy is to run a diffusion sampler while injecting likelihood-driven data-consistency updates, as in diffusion posterior sampling (DPS) \cite{chung2023dps}. Under \eqref{eq:intro_meas}, the physical Gaussian negative log-likelihood is
$\Phi_{\mathrm{meas}}(\mathbf{x})=\tfrac{1}{2\beta_{\mathbf{y}}^2}\|A(\mathbf{x})-\mathbf{y}\|_2^2$.
Interleaving ``prior'' diffusion steps with likelihood corrections is conceptually straightforward, but the likelihood correction is often fragile: it must be strong enough to reduce measurement error yet small enough to avoid pushing iterates into low-density regions of the diffusion prior.

Two coupled issues make this correction hard to calibrate across operators and noise levels.
First, the local geometry of the likelihood can be strongly anisotropic: a single scalar step size must be small enough to remain stable in stiff directions, which can slow progress in flatter directions (especially for nonlinear $A$ or latent parameterizations).
Second, measurement consistency is defined on a clean variable (the input of $A$), while the sampler state is a noisy diffusion variable.
In practice, one evaluates the measurement residual on the denoised prediction produced at each noise level, so the effective mismatch depends on the composition of the forward model with the denoiser. As a result, clean-space likelihood information must be mapped back to the diffusion-state update; ignoring this mapping can mis-scale directions when the denoiser is anisotropic.

Existing work mitigates these failure modes in complementary ways, including trajectory modifications (e.g., DAPS \cite{Zhang_2025_CVPR}), per-step optimization (e.g., SITCOM \cite{pmlr-v267-alkhouri25b}), stabilization via preconditioning \cite{Garber_2024_CVPR} or projections \cite{chung2022mcg}, diffusion-bridge constructions \cite{chung2023ddb}, and likelihood-update redesigns \cite{yismaw2025gaussian,graikos2024fastconstrained,hamidi2025cdps}. Despite this progress, many practical samplers still rely on scalar guidance schedules, and they do not explicitly combine coordinate-correct likelihood geometry with direction-wise curvature scaling and diffusion-calibrated step control in a single per-noise-level correction.

This paper introduces \textsc{clamp} (\textbf{C}urvature-aware \textbf{L}angevin with \textbf{A}ligned \textbf{M}anifold \textbf{P}ullback), a diffusion posterior sampler that directly targets the two issues above.
To handle anisotropic likelihood geometry, \textsc{clamp} replaces scalar guidance with a curvature-scaled correction based on a damped Gauss--Newton (Fisher) quadratic model of the measurement residual at each noise level.
To ensure coordinate correctness, the correction is computed in diffusion-state coordinates by pulling likelihood sensitivity back through the denoiser, rather than applying clean-space derivatives directly.
For robustness and efficiency, we use a one-sided curvature approximation that avoids forward denoiser-Jacobian products inside curvature terms while retaining the pullback that maps measurement sensitivity into diffusion-state updates.
Step sizes are stabilized via a trust-region/Levenberg--Marquardt viewpoint \cite{nocedal2006numerical,conn2000trust}: we regularize the correction in a prior-aligned geometry and calibrate its strength to the diffusion noise level, avoiding per-task step-size schedules. The resulting update is computed with a fixed, lightweight matrix-free solve using Jacobian actions, yielding a favorable accuracy--runtime trade-off.
After the correction, \textsc{clamp} advances the diffusion schedule with a variance-preserving Langevin-style step with a closed-form drift/noise split, avoiding an additional per-step stochasticity parameter.
The same construction applies in latent space by composing the forward model with a decoder, replacing $A$ by $A\circ D$.

Our contributions are:
\begin{itemize}
\item \textbf{Curvature-scaled, coordinate-correct likelihood corrections.} We formulate per-noise-level data-consistency updates in diffusion-state coordinates using denoiser pullback geometry and a damped Gauss--Newton curvature model, addressing both anisotropic stiffness and coordinate mismatch.
\item \textbf{Diffusion-calibrated, prior-aligned anisotropic step control.}
We introduce a rank-one damping metric aligned with the denoiser residual direction and a schedule-calibrated regularization rule that stabilizes likelihood corrections across noise levels without an explicit guidance schedule.
\item \textbf{Variance-preserving Langevin propagation.}
After each correction, we propagate to the next noise level with a Langevin-style transition whose drift and injected-noise coefficients are set in closed form by variance matching and fluctuation--dissipation coupling, preserving the prescribed diffusion noise schedule.
\item \textbf{Fast matrix-free implementation in pixel and latent spaces.}
We solve each correction with a fixed-budget Krylov method using only Jacobian actions and evaluate \textsc{clamp} on natural-image and medical-imaging inverse problems, demonstrating improved reconstruction quality and sampling efficiency over representative baselines.
\end{itemize}
Section~\ref{sec:background} reviews diffusion denoisers, damped Gauss--Newton, and matrix-free Krylov solvers. Section~\ref{sec:Method} derives \textsc{clamp}, and Section~\ref{sec:Experiments} reports quantitative results and ablations on FFHQ/ImageNet inverse problems and MRI reconstruction; Section~\ref{sec:Conclusion} concludes.

\begin{figure*}
\centering
  \makebox[\textwidth][c]{\includegraphics[width=1.0\linewidth]{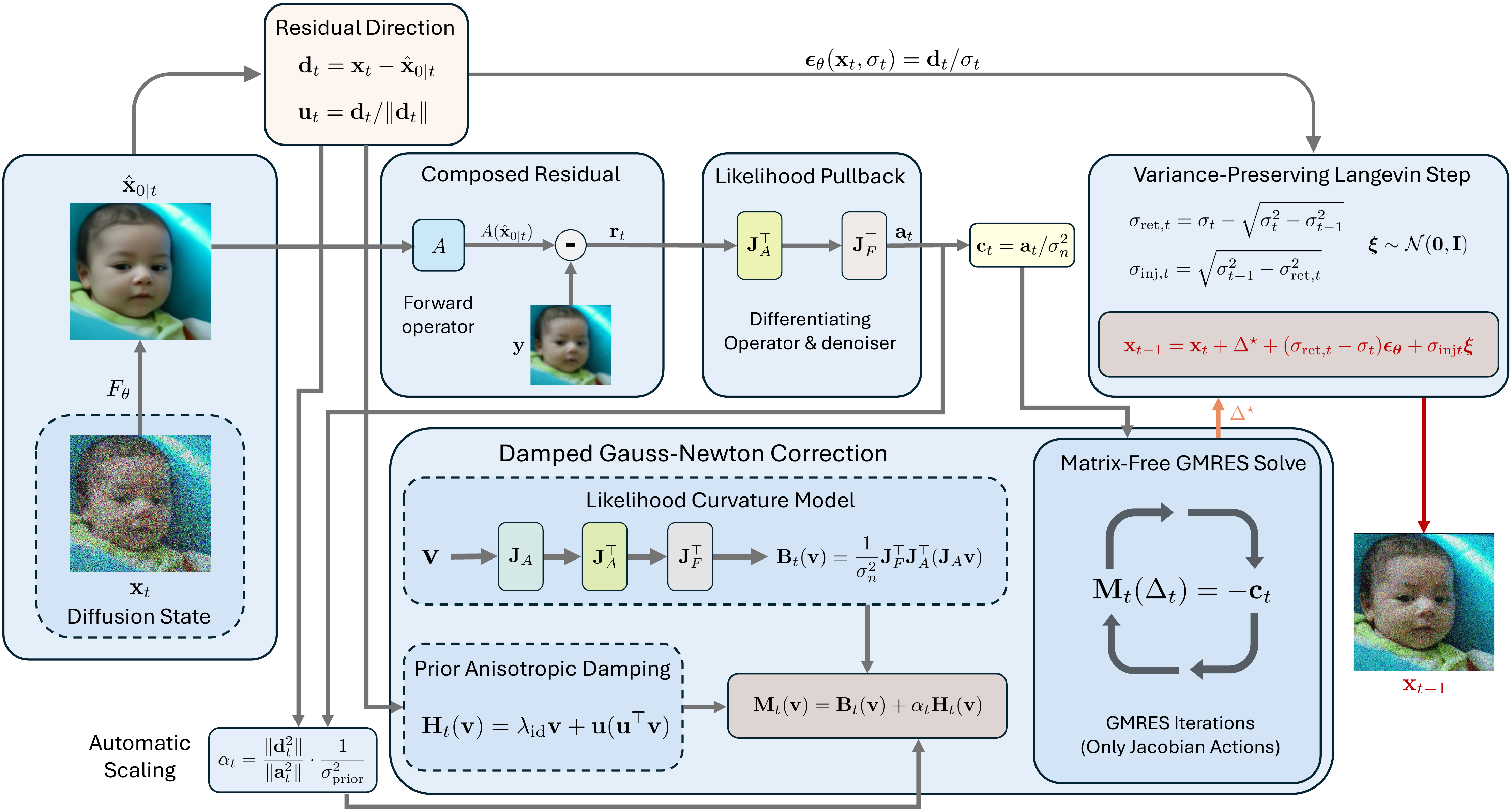}
  }
\caption{\textbf{\textsc{clamp} update at one noise level.}
(1) \emph{Prior query \& residual:} from $\mathbf{x}_t$, compute $\hat{\mathbf{x}}_{0|t}$ and the composed residual by comparing $A(\hat{\mathbf{x}}_{0|t})$ to $\mathbf{y}$.
(2) \emph{Curvature-aware correction:} pull back measurement sensitivity through the denoiser to form a one-sided Gauss--Newton model with prior-aligned, automatically scaled damping, then solve the linear system with matrix-free GMRES.
(3) \emph{Schedule transition:} apply the correction in a variance-preserving Langevin step to obtain $\mathbf{x}_{t-1}$.}
\label{fig:method}
\end{figure*}

\section{Background}
\label{sec:background}\label{sec:prelim}

This section summarizes the technical prerequisites for \textsc{clamp}: (i) diffusion denoisers and noise schedules, (ii) Gauss--Newton (GN) curvature with damping via local quadratic surrogates, and (iii) matrix-free Krylov solvers based on Jacobian actions.

\subsection{Diffusion denoisers and noise schedules}
Diffusion models generate samples by iteratively denoising a sequence of noisy states \cite{ho2020ddpm,song2021sde}.
We work with a decreasing noise schedule $\{\sigma_t\}_{t=T}^{0}$ and an EDM-style denoiser interface \cite{karras2022edm}:
\begin{equation}
\hat{\mathbf{x}}_{0|t}=F_\theta(\mathbf{x}_t;\sigma_t),
\quad
\epsilon_\theta(\mathbf{x}_t,\sigma_t):=\frac{\mathbf{x}_t-\hat{\mathbf{x}}_{0|t}}{\sigma_t}.
\end{equation}
Here $\mathbf{x}_t$ is the diffusion-state iterate at noise level $\sigma_t$, and $\hat{\mathbf{x}}_{0|t}$ is the associated clean prediction used for measurement evaluation in conditional sampling.

\subsection{Gauss--Newton quadratic surrogate and damping}
Consider the Gaussian-form residual objective with residual scale $\sigma_n>0$
\begin{equation}
\Phi(\mathbf{x})=\frac{1}{2\sigma_n^2}\|\mathbf{r}(\mathbf{x})\|_2^2,
~
\mathbf{r}(\mathbf{x}):=A(\mathbf{x})-\mathbf{y},
~
\mathbf{J}:=\nabla_{\mathbf{x}}\mathbf{r}(\mathbf{x}).
\end{equation}
For an increment $\Delta$, linearize the residual $\mathbf{r}(\mathbf{x}+\Delta)\approx \mathbf{r}(\mathbf{x})+\mathbf{J}\Delta$ and substitute into $\Phi$:
\begin{align}
&\Phi(\mathbf{x}+\Delta)
\approx
\frac{1}{2\sigma_n^2}\|\mathbf{r}+\mathbf{J}\Delta\|_2^2 \nonumber
\\
=
&\Phi(\mathbf{x})
+\frac{1}{\sigma_n^2}\langle \mathbf{J}^\top\mathbf{r},\Delta\rangle
+\frac{1}{2\sigma_n^2}\Delta^\top(\mathbf{J}^\top\mathbf{J})\Delta, \label{eq:bg_gn_quadratic}  
\end{align}
where $\top$ denotes the Euclidean adjoint.
Minimizing the quadratic model \eqref{eq:bg_gn_quadratic} yields the GN normal equations \cite{nocedal2006numerical}
\begin{equation}
(\mathbf{J}^\top\mathbf{J})\,\Delta = -\mathbf{J}^\top\mathbf{r}.
\label{eq:bg_gn_normal}
\end{equation}
Damping (trust-region / Levenberg--Marquardt) stabilizes GN updates when the local model is imperfect or ill-conditioned by adding a positive semidefinite metric \cite{nocedal2006numerical,conn2000trust,hanke1997regularizinglm}:
\begin{equation}
(\mathbf{J}^\top\mathbf{J}+\lambda \mathbf{H})\,\Delta = -\mathbf{J}^\top\mathbf{r},
\quad
\mathbf{H}\succeq 0.
\label{eq:bg_lm_aniso}
\end{equation}
The curvature $\mathbf{J}^\top\mathbf{J}$ acts as a matrix-valued step size, suppressing updates along sensitive directions; the metric $\mathbf{H}$ provides (possibly anisotropic) step control.

\subsection{Matrix-free Jacobian actions and Krylov solvers}
Large-scale inverse problems rarely permit explicit formation of $\mathbf{J}$ or $\mathbf{J}^\top\mathbf{J}$.
Instead, second-order updates are computed by solving linear systems $ \mathbf{M}\Delta=\mathbf{b}$ using only matrix--vector products $\mathbf{M}(\mathbf{v})$.
These products are implemented via Jacobian actions: a Jacobian–vector product (JVP) computes $\mathbf{J}\mathbf{v}$ and a vector–Jacobian product (VJP) computes $\mathbf{J}^\top\mathbf{u}$~\cite{pearlmutter1994hessian,baydin2018autodiff}.
For linear $A$, $\mathbf{J}\mathbf{v}=A\mathbf{v}$ and $\mathbf{J}^\top\mathbf{u}=A^\top\mathbf{u}$ exactly; for differentiable nonlinear $A$, JVP/VJP can be obtained via autodiff.

When $\mathbf{M}$ is symmetric positive-(semi)definite, conjugate gradients (and truncated trust-region variants) are standard \cite{steihaug1983cgtr,conn2000trust}.
When $\mathbf{M}$ is not symmetric, one instead uses a nonsymmetric Krylov solver such as GMRES \cite{saad1986gmres,dembo1982inexact_newton,knoll2004jfnk}, which remains fully matrix-free and only requires repeated evaluations of $\mathbf{M}(\mathbf{v})$.

\section{Method}
\label{sec:Method}

\subsection{Setup, notation, and per-step structure}
\label{subsec:method_setup}

We consider inverse problems with additive Gaussian measurement noise as in \eqref{eq:intro_meas}, whose physical standard deviation is denoted by $\beta_{\mathbf{y}}$, and target posterior sampling
\begin{equation}
p(\mathbf{x}_0\mid \mathbf{y})
\;\propto\;
\exp\!\left(-\frac{1}{2\beta_{\mathbf{y}}^2}\|A(\mathbf{x}_0)-\mathbf{y}\|_2^2\right)\,p(\mathbf{x}_0).
\label{eq:posterior}
\end{equation}
In the finite-step correction below, $\sigma_n$ denotes the data-consistency scale used by \textsc{clamp} rather than the physical noise level itself. Setting $\sigma_n=\beta_{\mathbf{y}}$ recovers the physical Gaussian scaling; smaller values strengthen the local measurement-consistency correction and can compensate for finite-step, denoiser, and linearization effects.
We access the prior through a pretrained diffusion model with a decreasing noise schedule
$\{\sigma_t\}_{t=T}^{0}$, $\sigma_{t-1}<\sigma_t$, and an EDM/Tweedie denoiser interface
\begin{equation}
\hat{\mathbf{x}}_{0|t} = F_\theta(\mathbf{x}_t;\sigma_t),
~
\boldsymbol{\epsilon}_\theta(\mathbf{x}_t,\sigma_t) := \frac{\mathbf{x}_t-\hat{\mathbf{x}}_{0|t}}{\sigma_t}.
\label{eq:tweedie_eps_def}
\end{equation}
Here $\mathbf{x}_t\in\mathbb{R}^d$ denotes the diffusion-state iterate at noise level $\sigma_t$.

\paragraph{Per-step structure.}
At each noise level $\sigma_t$, \textsc{clamp} performs:
(i) a likelihood correction $\Delta_t$ computed in diffusion-state coordinates by solving a damped linear system, and
(ii) a prior propagation step that advances $\mathbf{x}_t\mapsto \mathbf{x}_{t-1}$ while preserving the schedule variance.
Algorithm~\ref{alg:CLAMP} in Appendix~\ref{sec:appendix_algo} provides the full procedure; below we derive each component.

\subsection{Overview of the CLAMP correction}

We first summarize one \textsc{clamp} update before deriving its components. At noise level $\sigma_t$, \textsc{clamp} queries the denoiser and evaluates the measurement residual on the denoised prediction,
\[
    \hat{\mathbf{x}}_{0|t} = F_\theta(\mathbf{x}_t;\sigma_t),
    \qquad
    \mathbf{r}_t = A(\hat{\mathbf{x}}_{0|t}) - \mathbf{y}.
\]
Let $\mathbf{J}_A$ be the Jacobian of $A$ at $\hat{\mathbf{x}}_{0|t}$ and $\mathbf{J}_F$ the Jacobian of $F_\theta(\cdot;\sigma_t)$ at $\mathbf{x}_t$. The likelihood signal is pulled back to diffusion-state coordinates as
\[
    \mathbf{c}_t
    =
    \frac{1}{\sigma_n^2} \mathbf{J}_F^\top \mathbf{J}_A^\top \mathbf{r}_t .
\]
Instead of taking a scalar-guided step along $-\mathbf{c}_t$, \textsc{clamp} computes a matrix-scaled correction $\Delta_t$ by solving
\[
    \mathbf{M}_t(\Delta_t) = -\mathbf{c}_t,
    \qquad
    \mathbf{M}_t(\mathbf{v}) = \mathbf{B}_t(\mathbf{v}) + \alpha_t \mathbf{H}_t(\mathbf{v}),
\]
where
\[
    \mathbf{B}_t(\mathbf{v})
    =
    \frac{1}{\sigma_n^2}
    \mathbf{J}_F^\top \mathbf{J}_A^\top (\mathbf{J}_A \mathbf{v}),
    \qquad
    \mathbf{H}_t(\mathbf{v})
    =
    \lambda_{\mathrm{id}}\mathbf{v} + \mathbf{u}_t(\mathbf{u}_t^\top \mathbf{v}),
\]
\[
    \mathbf{d}_t 
    =
    \mathbf{x}_t - \hat{\mathbf{x}}_{0|t},
    \qquad
    \mathbf{u}_t 
    =
    \frac{\mathbf{d}_t}{\|\mathbf{d}_t\|_2}.
\]
Here $\mathbf{B}_t$ is the one-sided denoiser-pullback curvature operator, and $\mathbf{H}_t$ is a rank-one damping metric aligned with the denoiser residual direction. The system is solved matrix-free with a fixed GMRES budget.

After the likelihood correction, \textsc{clamp} advances the diffusion chain using a variance-preserving Langevin transition,
\[
    \mathbf{x}_{t-1}
    =
    \mathbf{x}_t
    +
    \Delta_t
    +
    (\sigma_{\mathrm{ret},t}-\sigma_t)
    \boldsymbol{\epsilon}_\theta(\mathbf{x}_t,\sigma_t)
    +
    \sigma_{\mathrm{inj},t}\boldsymbol{\xi},
\]
where $\boldsymbol{\xi} \sim \mathcal{N}(\mathbf{0},\mathbf{I})$. 

Thus, the core of \textsc{clamp} is a local, repeatedly re-linearized correction: $\mathbf{J}_F^\top \mathbf{J}_A^\top \mathbf{r}_t$ provides the coordinate-correct likelihood pullback, $\mathbf{B}_t$ supplies curvature scaling, $\mathbf{H}_t$ stabilizes the step, and the Langevin transition propagates the corrected sample to the next noise level.

\subsection{Why scalar likelihood guidance is brittle}
\label{subsec:scalar_barrier}

Many diffusion posterior samplers scale a likelihood gradient in $\mathbf{x}_t$-space by a scalar guidance weight.
When the local geometry is stiff (ill-conditioned), stable scalar steps are constrained by the largest-curvature direction, which can
leave insufficient progress along lower-curvature (“flatter”) directions.

\begin{proposition}[Scalar-step barrier under stiff local geometry]
\label{prop:scalar_barrier}
Fix $t$ and suppose a local objective in the increment $\Delta$ admits a quadratic model
$
\mathcal{J}(\Delta)\approx \mathcal{J}(\mathbf{0})
+\langle \mathbf{g}_t,\Delta\rangle
+\frac{1}{2}\Delta^\top\mathbf{Q}_t\Delta
$
with $\mathbf{Q}_t\succ 0$.
A sufficient condition for global stability of the scalar-guided update
$\Delta_{\rm sc}=-\eta \mathbf{g}_t$ on this quadratic model is
\begin{equation}
\eta\le \frac{2}{\lambda_{\max}(\mathbf{Q}_t)}.
\end{equation}
Let $\Delta^\star=-\mathbf{Q}_t^{-1}\mathbf{g}_t$ denote the quadratic minimizer.
In the eigenbasis $\mathbf{Q}_t\mathbf{v}_i=\lambda_i\mathbf{v}_i$, for any $i$ with $\langle \mathbf{v}_i,\mathbf{g}_t\rangle\neq 0$,
\begin{equation}
\frac{|\langle \mathbf{v}_i,\Delta_{\rm sc}\rangle|}{|\langle \mathbf{v}_i,\Delta^\star\rangle|}
= \eta\lambda_i
\le \frac{2\lambda_i}{\lambda_{\max}(\mathbf{Q}_t)}.
\end{equation}
In particular, along the flattest direction $\,\mathbf{v}_{\min}$ (associated with $\lambda_{\min}(\mathbf{Q}_t)$),
the stability constraint implies $\eta\lambda_{\min}(\mathbf{Q}_t)\le 2/\kappa(\mathbf{Q}_t)$.
Thus when $\kappa(\mathbf{Q}_t)\gg 1$, any globally stable scalar step yields only
$O(\lambda_i/\lambda_{\max})$ relative progress in low-curvature directions, and at most $O(1/\kappa)$
relative progress along the flattest direction.
\end{proposition}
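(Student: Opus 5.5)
The plan is to diagonalize everything in the eigenbasis of $\mathbf{Q}_t$ and reduce the statement to scalar facts about each coordinate. Since $\mathbf{Q}_t\succ 0$ is symmetric, it has an orthonormal eigenbasis $\{\mathbf{v}_i\}$ with eigenvalues $0<\lambda_{\min}=\lambda_1\le\dots\le\lambda_d=\lambda_{\max}$.

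For the stability claim, I would interpret ``global stability of the scalar-guided update'' as stability of the iterated scalar step $\Delta\mapsto\Delta-\eta\nabla\mathcal{J}(\Delta)=\Delta-\eta(\mathbf{g}_t+\mathbf{Q}_t\Delta)$ on the quadratic model. Writing $\mathbf{e}=\Delta-\Delta^\star$, this gives $\mathbf{e}\mapsto(\mathbf{I}-\eta\mathbf{Q}_t)\mathbf{e}$. In eigen-coordinates each component is multiplied by $1-\eta\lambda_i$. So $|1-\eta\lambda_i|\le 1$ for all $i$, which is equivalent to $\eta\le 2/\lambda_{\max}$, is sufficient for non-expansion. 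As a one-step alternative, I would also note that $\mathcal{J}(-\eta\mathbf{g}_t)-\mathcal{J}(\mathbf{0})=\sum_i c_i^2(-\eta+\tfrac12\eta^2\lambda_i)\le 0$ whenever $\eta\le 2/\lambda_{\max}$, where $c_i=\langle\mathbf{v}_i,\mathbf{g}_t\rangle$. So either reading yields the stated sufficient condition.

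For the ratio, I would compute both projections explicitly. First, $\langle\mathbf{v}_i,\Delta_{\rm sc}\rangle=-\eta c_i$. Second, using symmetry of $\mathbf{Q}_t^{-1}$ and $\mathbf{Q}_t^{-1}\mathbf{v}_i=\lambda_i^{-1}\mathbf{v}_i$, we get $\langle\mathbf{v}_i,\Delta^\star\rangle=-\langle\mathbf{Q}_t^{-1}\mathbf{v}_i,\mathbf{g}_t\rangle=-c_i/\lambda_i$. When $c_i\neq0$ the ratio of absolute values is therefore $\eta\lambda_i$. Substituting $\eta\le 2/\lambda_{\max}$ gives the bound $2\lambda_i/\lambda_{\max}$. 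Taking $i$ to be the index of $\lambda_{\min}$ and using $\kappa=\lambda_{\max}/\lambda_{\min}$ yields $\eta\lambda_{\min}\le 2/\kappa$. The closing ``in particular'' sentences are then just restatements of these bounds in $O(\cdot)$ form.

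The only delicate step is the first one: pinning down what ``global stability'' means, since the statement leaves it informal. I would state the iterated-map interpretation explicitly and remark that the descent interpretation gives the same threshold. Everything else is a routine eigen-decomposition computation.
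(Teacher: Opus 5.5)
Your argument is correct and follows the paper's route. The paper reads ``global stability'' as the one-step descent property $\mathcal{J}(\Delta_{\rm sc})\le\mathcal{J}(\mathbf{0})$ for every $\mathbf{g}_t$. It bounds $\mathbf{g}_t^\top\mathbf{Q}_t\mathbf{g}_t\le\lambda_{\max}\|\mathbf{g}_t\|_2^2$, which is the same computation as your coordinate sum $\sum_i c_i^2(-\eta+\tfrac12\eta^2\lambda_i)$. It then does exactly your eigen-coordinate calculation, $\langle\mathbf{v}_i,\Delta_{\rm sc}\rangle=-\eta c_i$ and $\langle\mathbf{v}_i,\Delta^\star\rangle=-c_i/\lambda_i$. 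Your iterated-map reading is an extra interpretation that yields the same threshold.

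The one piece you skip is the converse, which the paper proves (as an ``if and only if'') and which the last sentence of the statement needs. The claim that ``any globally stable scalar step yields only $O(\lambda_i/\lambda_{\max})$ relative progress'' requires that stability \emph{forces} $\eta\le 2/\lambda_{\max}$. Knowing only that $\eta\le 2/\lambda_{\max}$ suffices is not enough, so those closing sentences are not pure restatements of your bounds.

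The paper closes this by taking $\mathbf{g}_t=\mathbf{v}_{\max}$. Then $\mathcal{J}(\Delta_{\rm sc})-\mathcal{J}(\mathbf{0})=\eta(\tfrac{\eta}{2}\lambda_{\max}-1)>0$ whenever $\eta>2/\lambda_{\max}$. Under your iterated-map reading the converse is equally immediate: the $\mathbf{v}_{\max}$-component of the error is multiplied by $|1-\eta\lambda_{\max}|>1$ at every step.

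Also make $\eta\ge 0$ explicit. You need it for the equivalence $|1-\eta\lambda_i|\le 1\iff\eta\le 2/\lambda_{\max}$, and for the ratio to equal $\eta\lambda_i$ rather than $|\eta|\lambda_i$.
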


Proposition~\ref{prop:scalar_barrier} motivates using a matrix-scaled correction that allocates step sizes direction-wise via curvature.

\subsection{Stepwise surrogate objective in diffusion-state coordinates}
\label{subsec:surrogate}

At each noise level $\sigma_t$, \textsc{clamp} computes a likelihood correction $\Delta_t$ in diffusion-state coordinates by solving for the mode of a local surrogate that combines (i) the stepwise likelihood and (ii) a Gaussian surrogate over increments around $\mathbf{x}_t$.
Concretely, we define a local Gaussian surrogate on the increment $\Delta$:
\begin{equation}
p_t(\mathbf{x}_t+\Delta\mid \mathbf{x}_t)
\;\propto\;
\exp\!\left(-\frac{\alpha_t}{2}\,\|\Delta\|_{\mathbf{H}_t}^2\right),
\label{eq:pb_local_prior}
\end{equation}
where $\mathbf{H}_t\succeq 0$ is a (possibly anisotropic) damping metric and $\alpha_t>0$ sets its strength (both specified in Sec.~\ref{subsec:metric_alpha}).
Combining this with the stepwise likelihood $\Phi_t$ (defined in Sec.~\ref{subsec:likelihood_pullback}) yields the stepwise surrogate conditional density
\begin{equation}
\tilde p_t(\mathbf{x}\mid \mathbf{y},\mathbf{x}_t)
\;\propto\;
\exp\!\big(-\Phi_t(\mathbf{x})\big)\,p_t(\mathbf{x}\mid \mathbf{x}_t),
\label{eq:pb_local_post}
\end{equation}
whose negative log (up to an additive constant) is
\begin{equation}
\mathcal{E}_t(\mathbf{x})
=
\Phi_t(\mathbf{x})
+
\frac{\alpha_t}{2}\|\mathbf{x}-\mathbf{x}_t\|_{\mathbf{H}_t}^2.
\label{eq:pb_energy}
\end{equation}
We compute $\Delta_t$ by minimizing a quadratic surrogate of $\mathcal{E}_t(\mathbf{x}_t+\Delta)$ derived below.
The quadratic term plays the role of a trust-region/Levenberg--Marquardt stabilizer \cite{nocedal2006numerical,conn2000trust} for step control; it is not intended to match the exact diffusion transition density.

\subsection{Denoiser-pulled-back likelihood geometry}
\label{subsec:likelihood_pullback}

The measurement residual is defined on a clean variable (the input of $A$), whereas the algorithm updates the noisy diffusion state $\mathbf{x}_t$.
We therefore evaluate measurement residuals on the denoised prediction and define the composed residual and stepwise likelihood as functions of the diffusion-state argument $\mathbf{x}$:
\begin{align}
\mathbf{r}_t(\mathbf{x})
:= A\!\big(F_\theta(\mathbf{x};\sigma_t)\big)-\mathbf{y},
~
\Phi_t(\mathbf{x})
:=\frac{1}{2\sigma_n^2}\|\mathbf{r}_t(\mathbf{x})\|_2^2.
\label{eq:rt_phi_def}
\end{align}
Let $\mathbf{z}_t := F_\theta(\mathbf{x}_t;\sigma_t)$ and $\mathbf{r}_t:=\mathbf{r}_t(\mathbf{x}_t)=A(\mathbf{z}_t)-\mathbf{y}$.
Define Jacobians at the current evaluation point:
\begin{align}
&\mathbf{J}_A := \nabla_{\mathbf{z}}A(\mathbf{z})\big|_{\mathbf{z}=\mathbf{z}_t},
\quad
\mathbf{J}_F := \nabla_{\mathbf{x}}F_\theta(\mathbf{x};\sigma_t)\big|_{\mathbf{x}=\mathbf{x}_t}\\
&\mathbf{J}_t := \nabla_{\mathbf{x}}\mathbf{r}_t(\mathbf{x})\big|_{\mathbf{x}=\mathbf{x}_t}=\mathbf{J}_A\mathbf{J}_F.
\label{eq:Jt_chainrule}
\end{align}

\begin{proposition}[Coordinate-correct guidance requires denoiser pullback]
\label{prop:pullback}
Let $\tilde\Phi(\mathbf{z}) := \tfrac{1}{2\sigma_n^2}\|A(\mathbf{z})-\mathbf{y}\|_2^2$ and $\Phi_t(\mathbf{x})=\tilde\Phi(F_\theta(\mathbf{x};\sigma_t))$.
Then the first-order expansion at $\mathbf{x}_t$ is
\begin{equation}
\Phi_t(\mathbf{x}_t+\Delta)
=
\Phi_t(\mathbf{x}_t)
+
\left\langle \mathbf{J}_F^{\!\top}\nabla_{\mathbf{z}}\tilde\Phi(\mathbf{z})\big|_{\mathbf{z}=\mathbf{z}_t},\Delta\right\rangle
+ o(\|\Delta\|_2),
\end{equation}
so coordinate-correct steepest descent in $\mathbf{x}_t$-space is proportional to $-\mathbf{J}_F^\top\nabla_{\mathbf{z}}\tilde\Phi(\mathbf{z}_t)$.
Moreover, under a Gauss--Newton/Fisher approximation (dropping second-order residual terms), the diffusion-state curvature is
$
\frac{1}{\sigma_n^2}\mathbf{J}_F^{\!\top}\mathbf{J}_A^{\!\top}\mathbf{J}_A\mathbf{J}_F.
$
\end{proposition}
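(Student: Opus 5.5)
The argument is a direct chain-rule computation, done in two parts matching the two claims of Proposition~\ref{prop:pullback}. Throughout I assume $A$ and $F_\theta(\cdot;\sigma_t)$ are differentiable at $\mathbf{z}_t$ and $\mathbf{x}_t$. This is the same standing assumption under which $\mathbf{J}_A$ and $\mathbf{J}_F$ in \eqref{eq:Jt_chainrule} are defined.

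\emph{First-order expansion.} Differentiability of the composition gives
\[
F_\theta(\mathbf{x}_t+\Delta;\sigma_t)=\mathbf{z}_t+\mathbf{J}_F\Delta+o(\|\Delta\|_2).
\]
Since $\tilde\Phi$ is differentiable at $\mathbf{z}_t$ with gradient $\nabla_{\mathbf{z}}\tilde\Phi(\mathbf{z}_t)=\sigma_n^{-2}\mathbf{J}_A^\top\mathbf{r}_t$, the chain rule yields
\[
\Phi_t(\mathbf{x}_t+\Delta)=\Phi_t(\mathbf{x}_t)+\langle\nabla_{\mathbf{z}}\tilde\Phi(\mathbf{z}_t),\mathbf{J}_F\Delta\rangle+o(\|\Delta\|_2).
\]
Moving $\mathbf{J}_F$ to the other side of the Euclidean inner product via its adjoint gives the stated expansion. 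The only care needed is that the remainder stays $o(\|\Delta\|_2)$. This holds because the Jacobian of $\tilde\Phi$ is bounded at $\mathbf{z}_t$, and the inner remainder is $O(\|\Delta\|_2)$ in size.

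\emph{Steepest-descent direction.} This follows from Cauchy--Schwarz. Among increments with $\|\Delta\|_2=\epsilon$, the linear term $\langle\mathbf{J}_F^\top\nabla_{\mathbf{z}}\tilde\Phi(\mathbf{z}_t),\Delta\rangle$ is minimized by $\Delta\propto-\mathbf{J}_F^\top\nabla_{\mathbf{z}}\tilde\Phi(\mathbf{z}_t)$. I will also note what this rules out. The clean-space gradient $\nabla_{\mathbf{z}}\tilde\Phi$ is not in general the gradient of $\Phi_t$ in $\mathbf{x}$. It agrees with it only when $\mathbf{J}_F^\top$ acts as a multiple of the identity on that vector, which is the sense of ``coordinate-correct.''

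\emph{Curvature.} I will write the exact Hessian as
\[
\nabla^2\Phi_t(\mathbf{x}_t)=\sigma_n^{-2}\Big(\mathbf{J}_t^\top\mathbf{J}_t+\textstyle\sum_k r_{t,k}\nabla^2_{\mathbf{x}}r_{t,k}(\mathbf{x}_t)\Big),
\]
which requires twice-differentiability and follows from differentiating $\tfrac12\|\mathbf{r}_t(\mathbf{x})\|^2$ twice. Equivalently, I will linearize $\mathbf{r}_t(\mathbf{x}_t+\Delta)\approx\mathbf{r}_t+\mathbf{J}_t\Delta$ exactly as in \eqref{eq:bg_gn_quadratic}. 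The Gauss--Newton/Fisher approximation drops the residual-weighted second-order sum. The latter sum itself contains both the curvature of $A$ and the second derivatives of $F_\theta$. Substituting $\mathbf{J}_t=\mathbf{J}_A\mathbf{J}_F$ from \eqref{eq:Jt_chainrule} gives the claimed curvature $\sigma_n^{-2}\mathbf{J}_F^\top\mathbf{J}_A^\top\mathbf{J}_A\mathbf{J}_F$.

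There is no substantive obstacle. The only point needing care is stating which terms the GN approximation discards. Under the composition these include the term $\mathbf{J}_F^{\top}\nabla^2_{\mathbf{z}}(\cdot)\mathbf{J}_F$ from the curvature of $A$ and the contraction of $\mathbf{J}_A^\top\mathbf{r}_t$ with the second derivative of $F_\theta$. Making this explicit keeps it clear that the result is an approximation rather than an identity.
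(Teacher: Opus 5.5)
Your proposal is correct and follows essentially the same route as the paper: compose the first-order expansions of $F_\theta$ and $\tilde\Phi$, move $\mathbf{J}_F$ across the inner product, apply Cauchy--Schwarz for the steepest-descent direction, and obtain the curvature by linearizing the composed residual and reading off the quadratic form $\frac{1}{\sigma_n^2}\mathbf{J}_F^\top\mathbf{J}_A^\top\mathbf{J}_A\mathbf{J}_F$. Your exact-Hessian decomposition is a helpful addition because it names which second-order terms of $A$ and $F_\theta$ the Gauss--Newton approximation discards, though it needs twice-differentiability where the paper's linearization argument does not; also, the remainder step should rest on the increment $F_\theta(\mathbf{x}_t+\Delta;\sigma_t)-\mathbf{z}_t$ being $O(\|\Delta\|_2)$, not on the inner remainder (which is $o(\|\Delta\|_2)$).
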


Proposition~\ref{prop:pullback} motivates differentiating through the denoiser (via $\mathbf{J}_F^\top$): likelihood derivatives in clean coordinates do not, in general, provide correctly scaled directions for updating $\mathbf{x}_t$ when $\mathbf{J}_F$ is anisotropic.

\subsection{Penalized Gauss--Newton correction and explicit approximations}
\label{subsec:curvature_system}

Building on Sec.~\ref{subsec:likelihood_pullback}, we compute the likelihood correction $\Delta_t$ by minimizing a quadratic surrogate of
$\mathcal{E}_t(\mathbf{x}_t+\Delta)$ in \eqref{eq:pb_energy}, using the composed residual $\mathbf{r}_t(\mathbf{x})$ and Jacobian
$\mathbf{J}_t=\mathbf{J}_A\mathbf{J}_F$ from \eqref{eq:Jt_chainrule}.

\paragraph{(A1) Local linearization and penalized Gauss--Newton (standard).}
Using the first-order residual model
\begin{equation}
\mathbf{r}_t(\mathbf{x}_t+\Delta)\approx \mathbf{r}_t + \mathbf{J}_t\Delta,
\label{eq:linearize_rt}
\end{equation}
substituting into \eqref{eq:pb_energy} yields the quadratic approximation
\begin{equation}
\mathcal{E}_t(\mathbf{x}_t+\Delta)
\;\approx\;
\frac{1}{2\sigma_n^2}\|\mathbf{r}_t+\mathbf{J}_t\Delta\|_2^2
+
\frac{\alpha_t}{2}\Delta^\top \mathbf{H}_t \Delta.
\label{eq:quad_energy}
\end{equation}
Its stationary point solves the ideal penalized Gauss--Newton system
\begin{equation}
\left(\frac{1}{\sigma_n^2}\mathbf{J}_t^\top \mathbf{J}_t + \alpha_t\mathbf{H}_t\right)\Delta
= -\frac{1}{\sigma_n^2}\mathbf{J}_t^\top\mathbf{r}_t.
\label{eq:ideal_normal_eq}
\end{equation}
With $\mathbf{J}_t=\mathbf{J}_A\mathbf{J}_F$, the curvature term is $\mathbf{J}_t^\top \mathbf{J}_t = \mathbf{J}_F^\top \mathbf{J}_A^\top \mathbf{J}_A \mathbf{J}_F$, which requires forward denoiser-Jacobian products
of the form $\mathbf{J}_F\mathbf{v}$ inside curvature matvecs.

\paragraph{(A2) One-sided curvature approximation (\textsc{clamp} design choice).}
To avoid forward denoiser-Jacobian products inside the curvature, \textsc{clamp} adopts a one-sided Gauss--Newton/Fisher model that
approximates the forward denoiser perturbation as $\mathbf{J}_F\mathbf{v}\approx \mathbf{v}$ only when forming the curvature operator, while
retaining the pullback $\mathbf{J}_F^\top$ in the adjoint pass so that update directions remain coordinate-correct under anisotropic $\mathbf{J}_F$
(Prop.~\ref{prop:pullback}). This approximation is motivated by the local translation heuristic
$F_\theta(\mathbf{x}_t+\Delta;\sigma_t)\approx F_\theta(\mathbf{x}_t;\sigma_t)+\Delta$ for small $\Delta$.

A useful sanity check is that the diffusion-state gradient of the stepwise likelihood is exact under this construction:
\[
\nabla_{\mathbf{x}}\Phi_t(\mathbf{x}_t)
=
\frac{1}{\sigma_n^2}\,\mathbf{J}_F^\top \mathbf{J}_A^\top \mathbf{r}_t,
\]
whereas only the curvature is approximated. In particular, if we define the exact GN/Fisher curvature operator
$\mathbf{B}_t^{\mathrm{exact}}(\mathbf{v}) := \sigma_n^{-2}\mathbf{J}_F^\top \mathbf{J}_A^\top \mathbf{J}_A \mathbf{J}_F\mathbf{v}$, then the one-sided operator $\mathbf{B}_t$
defined below satisfies
\[
\big(\mathbf{B}_t^{\mathrm{exact}}-\mathbf{B}_t\big)(\mathbf{v})
=
\frac{1}{\sigma_n^2}\,\mathbf{J}_F^\top \mathbf{J}_A^\top \mathbf{J}_A (\mathbf{J}_F-\mathbf{I})\mathbf{v},
\]
so the curvature discrepancy is controlled by the deviation of the forward action from identity (i.e., $\|\mathbf{J}_F-I\|$), while eliminating $\mathbf{J}_F\mathbf{v}$
products in curvature matvecs.

\paragraph{Asymmetric Jacobian actions.}
Concretely, we use the asymmetric Jacobian actions
\begin{equation}
\tilde{\mathbf{J}}_t(\mathbf{v}) := \mathbf{J}_A\mathbf{v},
\qquad
\tilde{\mathbf{J}}_t^{\!\top}(\mathbf{u}) := \mathbf{J}_F^{\!\top}\mathbf{J}_A^{\!\top}\mathbf{u},
\label{eq:one_sided_Jt}
\end{equation}
which induce the matrix-free operators
\begin{align}
\mathbf{B}_t(\mathbf{v})
&:=
\frac{1}{\sigma_n^2}\,\tilde{\mathbf{J}}_t^{\!\top}\!\big(\tilde{\mathbf{J}}_t(\mathbf{v})\big)
=
\frac{1}{\sigma_n^2}\,\mathbf{J}_F^{\!\top}\mathbf{J}_A^{\!\top}\!\big(\mathbf{J}_A\mathbf{v}\big),
\label{eq:pb_Bv}\\
\mathbf{c}_t
&:=
\frac{1}{\sigma_n^2}\,\tilde{\mathbf{J}}_t^{\!\top}\mathbf{r}_t
=
\frac{1}{\sigma_n^2}\,\mathbf{J}_F^{\!\top}\mathbf{J}_A^{\!\top}\mathbf{r}_t.
\label{eq:pb_ct}
\end{align}
Compared to the exact GN/Fisher curvature $\sigma_n^{-2}\mathbf{J}_F^\top \mathbf{J}_A^\top \mathbf{J}_A \mathbf{J}_F$, $\mathbf{B}_t$ drops the rightmost $\mathbf{J}_F$ factor.
Since $\tilde{\mathbf{J}}_t(\cdot)$ and $\tilde{\mathbf{J}}_t^\top(\cdot)$ are not adjoints of a single linear map, $\mathbf{B}_t$ (and thus $\mathbf{M}_t$ below)
need not be self-adjoint, motivating a nonsymmetric Krylov solver (GMRES).

\paragraph{Linear system to solve.}
The correction $\Delta_t$ is defined as the solution of the damped system
\begin{equation}
\mathbf{M}_t(\Delta_t) = -\mathbf{c}_t,
\qquad
\mathbf{M}_t(\mathbf{v}) := \mathbf{B}_t(\mathbf{v}) + \alpha_t \mathbf{H}_t(\mathbf{v}),
\label{eq:gmres_system}
\end{equation}
which we solve with GMRES (Sec.~\ref{subsec:gmres}) using only matrix--vector products.

\subsection{Prior-aligned anisotropic damping and diffusion-calibrated regularization}
\label{subsec:metric_alpha}
\paragraph{Rank-one anisotropic metric.}
Define the denoiser residual and its unit direction
\begin{equation}
\mathbf{d}_t := \mathbf{x}_t - \hat{\mathbf{x}}_{0|t},
~
\mathbf{u}_t := \frac{\mathbf{d}_t}{\|\mathbf{d}_t\|_2}.
\label{eq:pb_ui}
\end{equation}
We set
\begin{equation}
\mathbf{H}_t := \lambda_{\mathrm{id}}\mathbf{I} + \mathbf{u}_t\mathbf{u}_t^\top,
~
\mathbf{H}_t\mathbf{v} = \lambda_{\mathrm{id}}\mathbf{v} + \mathbf{u}_t(\mathbf{u}_t^\top \mathbf{v}),
\label{eq:pb_H}
\end{equation}
where $\lambda_{\mathrm{id}}\ge 0$ provides baseline isotropic damping and prevents degeneracy.
This metric penalizes motion along the denoiser residual direction $\mathbf{u}_t$ more strongly than orthogonal directions, yielding explicit step control aligned with a diffusion-prior-indicated direction.
\paragraph{Diffusion-calibrated regularization strength.}
Define the (unscaled) backprojected residual in diffusion coordinates
\begin{equation}
\mathbf{a}_t := \sigma_n^2\,\mathbf{c}_t = \mathbf{J}_F^{\!\top}\mathbf{J}_A^{\!\top}\mathbf{r}_t.
\label{eq:CLAMP_a_def}
\end{equation}
We set
\begin{equation}
\alpha_t
:=
\frac{\|\mathbf{d}_t\|_2}{\|\mathbf{a}_t\|_2}
\cdot \frac{1}{\sigma_{\mathrm{prior}}^2}
\label{eq:CLAMP_alpha}
\end{equation}
where $\sigma_{\mathrm{prior}}=\max({\sigma_{t-1},\sigma_{\min}^{+}})$. The ratio $\|\mathbf{d}_t\|_2/\|\mathbf{a}_t\|_2$ provides an automatic scale match between the prior-indicated magnitude
(the denoiser residual $\mathbf{d}_t=\mathbf{x}_t-\hat{\mathbf{x}}_{0|t}$) and the likelihood signal seen in diffusion coordinates (the backprojection $\mathbf{a}_t$):
when the likelihood pullback is strong, $\alpha_t$ decreases to avoid over-damping; when it is weak, $\alpha_t$ increases to keep corrections conservative.
The factor $\sigma_{\mathrm{prior}}^{-2}$ ties the damping to the remaining diffusion noise level, yielding comparable step control across $t$.
Finally, defining $\mathbf{a}_t=\sigma_n^2\mathbf{c}_t$ removes explicit $\sigma_n^{-2}$ dependence from $\alpha_t$, so increasing $\sigma_n$
naturally weakens likelihood corrections through $\mathbf{B}_t$ and $\mathbf{c}_t$ without retuning.

\subsection{Matrix-free GMRES and Jacobian-action interface}
\label{subsec:gmres}

We never form $\mathbf{J}_A$, $\mathbf{J}_F$, $\mathbf{B}_t$, or $\mathbf{H}_t$ explicitly. Instead, we assume access to matrix-free Jacobian actions: a JVP through $A$ to compute $\mathbf{J}_A\mathbf{v}$ (for linear $A$, this equals $A\mathbf{v}$ exactly), a VJP through $A$ (or an explicit adjoint when available) to compute $\mathbf{J}_A^\top \mathbf{u}$, and a VJP through the denoiser to compute $\mathbf{J}_F^\top \mathbf{u}$. With these primitives, we evaluate
\begin{align}
&\mathbf{B}_t(\mathbf{v})
=
\sigma_n^{-2}\,\mathbf{J}_F^\top\!\Big(\mathbf{J}_A^\top\!\big(\mathbf{J}_A\mathbf{v}\big)\Big),
~
\\
&\mathbf{c}_t
=
\sigma_n^{-2}\,\mathbf{J}_F^\top\!\big(\mathbf{J}_A^\top \mathbf{r}_t\big),~\mathbf{H}_t(\mathbf{v})=\lambda_{\mathrm{id}}\mathbf{v}+\mathbf{u}_t(\mathbf{u}_t^\top\mathbf{v}).  
\end{align}
For differentiable nonlinear $A$, these Jacobian actions can be obtained via automatic differentiation \cite{baydin2018autodiff}; if automatic differentiation is unavailable, one may approximate $\mathbf{J}_A\mathbf{v}$ using standard numerical directional-derivative schemes (e.g., finite differences)~\cite{knoll2004jfnk}.

We solve \eqref{eq:gmres_system} using GMRES \cite{saad1986gmres}, which applies to nonsymmetric operators and is fully matrix-free (see Algorithm~\ref{alg:CLAMP_gmres} in Appendix~\ref{sec:appendix_algo} for details).
We run GMRES for a fixed iteration budget $K$ \cite{dembo1982inexact_newton,knoll2004jfnk}.
A fixed Krylov budget is appropriate because our operator is derived from a local linearization and a one-sided curvature approximation: over-solving can amplify model mismatch, whereas moderate $K$ yields stable updates.

\paragraph{Per-step computational cost.}
Each GMRES iteration requires one application of $\mathbf{M}_t(\cdot)$, i.e., one JVP through $A$, one VJP through $A$, one VJP through $F_\theta$, and one rank-one metric application.

\subsection{Variance-preserving Langevin propagation}
\label{subsec:vp_langevin_step}

After computing $\Delta_t$, we propagate from $\sigma_t$ to $\sigma_{t-1}$.
A deterministic EDM-style probability-flow update would be
\begin{equation}
\mathbf{x}_{t-1}^{\mathrm{ODE}}
=
\mathbf{x}_t + \Delta_t + (\sigma_{t-1}-\sigma_t)\,\boldsymbol{\epsilon}_\theta(\mathbf{x}_t,\sigma_t).
\label{eq:edm_ode_baseline}
\end{equation}
In posterior sampling, the additional drift $\Delta_t$ can make purely deterministic propagation sensitive to discretization and model mismatch.
We instead use a stochastic transition that preserves the schedule variance while maintaining stochastic exploration:
\begin{equation}
\mathbf{x}_{t-1}
=
\mathbf{x}_t + \Delta_t
+(\sigma_{\mathrm{ret},t}-\sigma_t)\,\boldsymbol{\epsilon}_\theta(\mathbf{x}_t,\sigma_t)
+\sigma_{\mathrm{inj},t}\,\boldsymbol{\xi},
\label{eq:vp_step}
\end{equation}
where $\boldsymbol{\xi}\sim\mathcal{N}(\mathbf{0},\mathbf{I})$. We enforce variance matching with the schedule:
\begin{equation}
\sigma_{t-1}^2 = \sigma_{\mathrm{ret},t}^2+\sigma_{\mathrm{inj},t}^2.
\label{eq:vp_variance}
\end{equation}
Using $\boldsymbol{\epsilon}_\theta=(\mathbf{x}_t-\hat{\mathbf{x}}_{0|t})/\sigma_t$, the drift can be rewritten as
\begin{align}
\label{eq:vp_drift}
(\sigma_{\mathrm{ret},t}-\sigma_t)\,\boldsymbol{\epsilon}_\theta
&=
\lambda_t\big(F_\theta(\mathbf{x}_t;\sigma_t)-\mathbf{x}_t\big), \\
\lambda_t&:=1-\frac{\sigma_{\mathrm{ret},t}}{\sigma_t},
\end{align}
so \eqref{eq:vp_step} is a single annealed-Langevin (unadjusted Langevin algorithm; ULA) step~\cite{durmus2017ula}:
move a fraction $\lambda_t$ toward the denoiser output and inject Gaussian noise.

Many stochastic diffusion samplers introduce an explicit stochasticity parameter to set the injected-noise term (with the complementary coefficient fixed by the schedule constraint), e.g., $\eta$ in DDIM \cite{song2021ddim} and $(\eta,\eta_b)$ in DDRM \cite{kawar2022ddrm}.
\textsc{clamp} instead determines $(\sigma_{\mathrm{ret},t},\sigma_{\mathrm{inj},t})$ from the schedule by imposing a fluctuation--dissipation coupling at the geometric reference
$\sigma_{\mathrm{ref}}^2:=\sigma_t\sigma_{\mathrm{ret},t}$, yielding a closed form.

\begin{proposition}[Drift/noise split under variance preservation]
\label{prop:vp_closed_form}
Assume (i) variance matching \eqref{eq:vp_variance} and (ii) a Langevin fluctuation--dissipation coupling at
$\sigma_{\mathrm{ref}}^2:=\sigma_t\sigma_{\mathrm{ret},t}$, i.e.
$\sigma_{\mathrm{inj},t}^2 = 2\lambda_t\sigma_{\mathrm{ref}}^2$ with $\lambda_t$ from \eqref{eq:vp_drift}.
Then $(\sigma_{\mathrm{ret},t},\sigma_{\mathrm{inj},t})$ is uniquely determined:
\begin{equation}
\sigma_{\mathrm{ret},t}
=
\sigma_t-\sqrt{\sigma_t^2-\sigma_{t-1}^2},
~
\sigma_{\mathrm{inj},t}
=
\sqrt{\sigma_{t-1}^2-\sigma_{\mathrm{ret},t}^2}.
\label{eq:vp_closed_form}
\end{equation}
\end{proposition}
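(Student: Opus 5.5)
Writing $s:=\sigma_{\mathrm{ret},t}$ for brevity, the plan is to eliminate $\sigma_{\mathrm{inj},t}$ between the two assumptions and reduce everything to a scalar quadratic in $s$. From \eqref{eq:vp_drift}, $\lambda_t = 1 - s/\sigma_t$. The coupling (ii) then reads
\[
\sigma_{\mathrm{inj},t}^2 = 2\Bigl(1-\frac{s}{\sigma_t}\Bigr)\sigma_t s = 2\sigma_t s - 2s^2 .
\]
Substituting this into the variance-matching condition \eqref{eq:vp_variance} gives
\[
\sigma_{t-1}^2 = s^2 + 2\sigma_t s - 2s^2 = 2\sigma_t s - s^2 .
\]
Equivalently, $(\sigma_t - s)^2 = \sigma_t^2-\sigma_{t-1}^2$. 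This quadratic is the key identity. Since the schedule is decreasing, $\sigma_{t-1}<\sigma_t$, so the right-hand side is positive and the two real roots are $s=\sigma_t\pm\sqrt{\sigma_t^2-\sigma_{t-1}^2}$.

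The step I expect to be the main point is uniqueness, i.e.\ ruling out the ``$+$'' root. This needs an admissibility constraint left implicit in the statement: $\sigma_{\mathrm{inj},t}^2\ge 0$, which is required for the injected noise term to be real. Equivalently, by (ii), $\lambda_t\ge 0$, i.e.\ $s\le\sigma_t$, so the step moves toward the denoiser rather than away from it. The $+$ root gives $s>\sigma_t$. Then $\lambda_t<0$, and hence $\sigma_{\mathrm{inj},t}^2=2\lambda_t\sigma_t s<0$, which is impossible. The $-$ root satisfies $0\le s\le\sigma_t$; nonnegativity follows from $\sqrt{\sigma_t^2-\sigma_{t-1}^2}\le\sigma_t$. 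For this root $\lambda_t\in[0,1]$ and $\sigma_{\mathrm{ref}}^2=\sigma_t s\ge 0$ is well defined, consistent with the geometric-mean reference. This yields the stated formula for $\sigma_{\mathrm{ret},t}$.

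Finally, $\sigma_{\mathrm{inj},t}$ is the nonnegative square root from \eqref{eq:vp_variance}, namely $\sigma_{\mathrm{inj},t}=\sqrt{\sigma_{t-1}^2-\sigma_{\mathrm{ret},t}^2}$. We must check that the radicand is nonnegative. From the quadratic identity, $\sigma_{t-1}^2-s^2 = 2s(\sigma_t-s)\ge 0$, so it is. As a consistency check, this radicand agrees with the value $2\lambda_t\sigma_{\mathrm{ref}}^2$ prescribed by (ii), so both assumptions hold simultaneously and the pair $(\sigma_{\mathrm{ret},t},\sigma_{\mathrm{inj},t})$ is uniquely determined as claimed.
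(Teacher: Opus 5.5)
Your proof is correct and follows the paper's argument: eliminate $\sigma_{\mathrm{inj},t}$, reduce to the quadratic $\sigma_{\mathrm{ret},t}^2-2\sigma_t\sigma_{\mathrm{ret},t}+\sigma_{t-1}^2=0$, and discard the $+$ root because it would force a negative injected variance. The only minor difference is how you check the radicand for the $-$ root. You use the identity $\sigma_{t-1}^2-s^2=2s(\sigma_t-s)\ge 0$, whereas the paper proves $\sigma_{\mathrm{ret},t}\le\sigma_{t-1}$ by squaring; yours is a slightly shorter route to the same conclusion.
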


\subsection{Latent \textsc{clamp}}
\label{subsec:latent}

\begin{table*}[!t]
\centering
\setlength{\tabcolsep}{2.5pt}
\renewcommand{\arraystretch}{0.92}
\caption{Quantitative comparison on FFHQ and ImageNet across inverse-problem tasks in pixel setting. We report average PSNR/SSIM (higher is better), LPIPS (lower is better), and run-time (lower is better) over 100 validation images. Best and second-best scores are highlighted in \textbf{bold} and \underline{underlined}, respectively.}
\label{tab:main_table_pixel}
\begin{tabular}{l!{\vrule}l!{\vrule}cccc!{\vrule}cccc}
\toprule
\multirow{2}{*}{\textbf{Task}} & \multirow{2}{*}{\textbf{Method}} &
\multicolumn{4}{c|}{\textbf{FFHQ}} & \multicolumn{4}{c}{\textbf{ImageNet}} \\
& &
\textbf{PSNR $\uparrow$} & \textbf{SSIM $\uparrow$} & \textbf{LPIPS $\downarrow$} & \textbf{Run-time (s)} &
\textbf{PSNR $\uparrow$} & \textbf{SSIM $\uparrow$} & \textbf{LPIPS $\downarrow$} & \textbf{Run-time (s)} \\
\midrule

\multirow{5}{*}{Super resolution 4$\times$} &
Ours   & \textbf{29.515} & \textbf{0.841} & \textbf{0.219} & \underline{6.743} & \textbf{26.981} & \textbf{0.742} & \textbf{0.290} & \underline{23.587} \\
& DAPS   & 28.619 & 0.764 & 0.262 & 28.257 & 25.512 & 0.636 & 0.374 & 65.580 \\
& SITCOM & \underline{29.153} & \underline{0.826} & 0.231 & 15.755 & \underline{26.519} & \underline{0.716} & \underline{0.309} & 60.778 \\
& DMPlug & 28.637 & 0.797 & 0.253 & 118.255 & 25.091 & 0.660 & 0.347 & 238.433 \\
& DCDP   & 27.611 & 0.785 & \underline{0.225} & \textbf{4.719} & 24.358 & 0.640 & 0.369 & \textbf{9.190} \\
\midrule

\multirow{5}{*}{Box inpainting} &
Ours   & \textbf{25.495} & \textbf{0.882} & \textbf{0.125} & \textbf{6.175} & \textbf{21.547} & \textbf{0.825} & \textbf{0.188} & \underline{23.427} \\
& DAPS   & 24.285 & 0.746 & 0.222 & 22.352 & \underline{21.234} & 0.720 & 0.270 & 57.108 \\
& SITCOM & \underline{25.275} & \underline{0.831} & \underline{0.179} & 17.697 & 20.698 & \underline{0.726} & \underline{0.241} & 66.709 \\
& DMPlug & 23.526 & 0.790 & 0.281 & 103.76 & 19.455 & 0.667 & 0.384 & 253.946 \\
& DCDP   & 21.721 & 0.766 & 0.219 & \underline{8.147} & 17.741 & 0.662 & 0.301 & \textbf{17.214} \\
\midrule

\multirow{5}{*}{Random inpainting} &
Ours   & \textbf{32.955} & \textbf{0.918} & \textbf{0.140} & \textbf{6.171} & \textbf{30.215} & \textbf{0.866} & \textbf{0.159} & \underline{23.419} \\
& DAPS   & 30.078 & 0.791 & 0.214 & 21.576 & 27.350 & 0.721 & 0.251 & 56.932 \\
& SITCOM & \underline{31.431} & \underline{0.871} & \underline{0.173} & 25.382 & \underline{29.008} & \underline{0.821} & \underline{0.183} & 95.962 \\
& DMPlug & 30.825 & 0.865 & 0.231 & 122.143 & 27.023 & 0.739 & 0.317 & 248.146 \\
& DCDP   & 27.240 & 0.788 & 0.229 & \underline{8.130} & 23.284 & 0.606 & 0.355 & \textbf{17.518} \\
\midrule

\multirow{5}{*}{Gaussian deblurring} &
Ours   & \underline{29.010} & \textbf{0.827} & \textbf{0.234} & \underline{6.343} & \underline{26.388} & \underline{0.710} & \underline{0.333} & \underline{23.924} \\
& DAPS   & 28.762 & 0.767 & 0.254 & 37.730 & 25.932 & 0.655 & 0.355 & 74.984 \\
& SITCOM & \textbf{29.359} & \underline{0.826} & \underline{0.236} & 25.491 & \textbf{26.741} & \textbf{0.719} & \textbf{0.315} & 95.161 \\
& DMPlug & 27.95 & 0.770 & 0.278 & 181.695 & 24.082 & 0.613 & 0.389 & 313.680 \\
& DCDP   & 27.553 & 0.761 & 0.245 & \textbf{4.934} & 23.499 & 0.542 & 0.449 & \textbf{9.573} \\
\midrule

\multirow{5}{*}{Motion deblurring} &
Ours   & \textbf{31.641} & \textbf{0.880} & \underline{0.183} & \underline{6.322} & \underline{29.141} & \textbf{0.812} & \underline{0.243} & \underline{23.805} \\
& DAPS   & 30.954 & 0.823 & 0.202 & 38.606 & 28.623 & 0.764 & 0.246 & 78.000 \\
& SITCOM & \underline{31.501} & \underline{0.866} & \textbf{0.167} & 26.184 & \textbf{29.321} & \underline{0.811} & \textbf{0.213} & 95.529 \\
& DMPlug & 29.272 & 0.818 & 0.260 & 152.699 & 25.211 & 0.660 & 0.365 & 284.413 \\
& DCDP   & 25.468 & 0.566 & 0.365 & \textbf{5.007} & 17.771 & 0.248 & 0.574 & \textbf{9.488} \\
\midrule

\multirow{4}{*}{Phase retrieval} &
Ours   & \textbf{30.233} & \textbf{0.854} & \textbf{0.192} & \underline{28.660} & \underline{19.680} & \underline{0.459} & \underline{0.478} & \underline{104.628} \\
& DAPS   & \underline{30.103} & \underline{0.796} & \underline{0.208} & 98.054 & \textbf{22.354} & \textbf{0.519} & \textbf{0.402} & 241.113 \\
& SITCOM & 28.782 & 0.791 & 0.239 & \textbf{24.584} & 18.481 & 0.383 & 0.524 & \textbf{94.187} \\
& DCDP   & 24.036 & 0.677 & 0.310 & 102.386 & 15.953 & 0.283 & 0.595 & 195.193 \\
\midrule

\multirow{5}{*}{Nonlinear deblurring} &
Ours   & \textbf{29.961} & \textbf{0.856} & \textbf{0.166} & \underline{33.946} & \underline{28.036} & \textbf{0.788} & \textbf{0.212} & \textbf{82.427} \\
& DAPS   & 28.868 & 0.780 & 0.223 & 755.725 & 27.537 & 0.734 & 0.266 & 884.119 \\
& SITCOM & \underline{29.519} & \underline{0.812} & 0.207 & \textbf{27.167} & \textbf{28.097} & \underline{0.774} & \underline{0.221} & \underline{97.029} \\
& DMPlug & 28.298 & 0.811 & 0.249 & 291.221 & 25.086 & 0.679 & 0.317 & 668.641 \\
& DCDP   & 27.879 & 0.795 & \underline{0.204} & 269.786 & 25.726 & 0.664 & 0.280 & 367.695 \\
\midrule

\multirow{4}{*}{High dynamic range} &
Ours   & \textbf{29.488} & \textbf{0.891} & \textbf{0.133} & \textbf{26.610} & \textbf{28.804} & \textbf{0.886} & \textbf{0.136} & \textbf{104.38} \\
& DAPS   & 27.149 & \underline{0.834} & \underline{0.196} & 76.724 & \underline{26.568} & \underline{0.819} & \underline{0.198} & 216.701 \\
& SITCOM & \underline{27.205} & 0.777 & 0.226 & \underline{33.887} & 26.449 & 0.774 & 0.222 & \underline{129.681} \\
& DMPlug & 25.507 & 0.777 & 0.264 & 221.418 & 23.244 & 0.715 & 0.308 & 564.501 \\

\bottomrule
\end{tabular}
\end{table*}

To use a latent diffusion prior, we run \textsc{clamp} on latent states $\mathbf{z}_t$ with a differentiable decoder $D:\mathbb{R}^{d_z}\!\to\!\mathbb{R}^{d}$ that maps latents to pixels.
Since measurements are defined in pixel space, it is natural to write the forward model through the composition
\begin{equation}
\tilde A := A\circ D,
\qquad
\mathbf{y} = \tilde A(\mathbf{z}_0) + \mathbf{n},\;\; \mathbf{n}\sim\mathcal{N}(\mathbf{0},\beta_{\mathbf{y}}^2\mathbf{I}).
\end{equation}
At noise level $\sigma_t$, the latent denoiser produces $\hat{\mathbf{z}}_{0|t}=F_\theta(\mathbf{z}_t;\sigma_t)$ and the residual is evaluated as
$\mathbf{r}_t=\tilde A(\hat{\mathbf{z}}_{0|t})-\mathbf{y}$.
The latent-space algorithm is otherwise identical to the pixel-space derivation: replace $A$ by $\tilde A$ and $\mathbf{x}_t$ by $\mathbf{z}_t$ everywhere, retaining the denoiser pullback, anisotropic damping, matrix-free GMRES likelihood solve, and variance-preserving Langevin propagation.
Operationally, this substitution only changes the required Jacobian actions: each $\mathrm{JVP}_A/\mathrm{VJP}_A$ call becomes a $\mathrm{JVP}_{\tilde A}/\mathrm{VJP}_{\tilde A}$ call (and thus traverses $D$), while the denoiser pullback still uses $\mathrm{VJP}_F$.
Algorithm~\ref{alg:latent_CLAMP} in Appendix~\ref{sec:appendix_algo} summarizes the resulting procedure, with final output $\mathbf{x}_0=D(\mathbf{z}_0)$.

\section{Experiments}
\label{sec:Experiments}

\subsection{Experimental setup}
We largely follow the experimental setup of DAPS~\cite{Zhang_2025_CVPR} and evaluate on eight task settings, comprising five linear and three nonlinear inverse problems. 
Implementation details and hyperparameter configurations for all baselines and our method, as well as other experimental details are provided in Appendix~\ref{appendix:exp_detail}.


\subsection{Main Results}
Tables~\ref{tab:main_table_pixel} and~\ref{tab:main_table_latent} compare our method to baselines on FFHQ and ImageNet across eight inverse-problem tasks in both pixel and latent settings. Overall, \textsc{clamp} achieves competitive or leading quantitative metrics on most tasks, and Figure~\ref{fig:main} shows reconstructions with fewer artifacts across representative tasks.

Nonlinear tasks are challenging because a nonlinear forward model $A(\cdot)$ makes the data term nonconvex and can induce rapidly varying local curvature along the sampling trajectory, so likelihood-based corrections become highly sensitive to step size and direction. This is especially evident for nonlinear deblurring: small mis-calibration of the correction can lead to instability or pronounced artifacts. In contrast, our method remains stable in both pixel and latent settings. We attribute this behavior to combining (i) curvature-aware scaling of the likelihood correction with (ii) prior-aligned regularization that constrains updates in sensitive directions, which together stabilizes the conditioning step.

Beyond reconstruction quality, Table \ref{tab:main_table_pixel}, \ref{tab:main_table_latent} and Figure \ref{fig:time_plot} also highlight the computational efficiency of our approach: \textsc{clamp} provides a favorable quality--runtime trade-off, often matching or improving metrics while reducing runtime relative to strong competing methods. For instance, SITCOM is a strong and fast baseline, yet on pixel-space FFHQ motion deblurring we achieve up to a \(4.14\times\) speedup while attaining higher PSNR. In latent space, we are approximately \(2.4\times\) faster than Latent DAPS and about \(9\times\) faster than ReSample, while also delivering strong metrics overall.

We provide additional experiments in Appendix~\ref{appendix:addtional_exp}, including higher-resolution evaluations, latent-space experiments, sample results across a wider range of datasets and tasks, hyperparameter analyses, sample diversity under severe degradations, experiments with various measurement-noise settings, normalized residual checks, and diagnostic studies.

\subsection{Accelerated MRI reconstruction results}
To assess applicability beyond natural images, we evaluate on multi-coil accelerated MRI reconstruction using SKM-TEA dataset~\cite{Desai2022SKMTEAAD}. Table~\ref{tab:mri_table} reports PSNR/SSIM for $\times4$ and $\times8$ acceleration. Our method achieves the best PSNR/SSIM among the compared diffusion-based baselines. Appendix~\ref{appendix:mri} provides experimental details, including the baselines and qualitative results.

\begin{table}[t]
\centering
\caption{Quantitative comparison for MRI reconstruction under Poisson-disc undersampling at acceleration factors $\times4$ and $\times8$. We report average PSNR/SSIM over 100 test slices. Best scores are in \textbf{bold} and second-best are \underline{underlined}.}
\label{tab:mri_table}
\begin{tabular}{c c c c c}
\toprule
\multirow{2}{*}{Method} & \multicolumn{2}{c}{$\times 4$} & \multicolumn{2}{c}{$\times 8$} \\
\cmidrule(lr){2-3}\cmidrule(lr){4-5}
& PSNR$\uparrow$ & SSIM$\uparrow$ 
& PSNR$\uparrow$ & SSIM$\uparrow$  \\
\midrule
DPS         &25.46 &0.531    &25.40  &0.527   \\
DAPS        &30.20  &0.687    &28.19  &0.582   \\
DDS         &32.07  &0.763    &30.03  &0.669    \\
Score-Med   &\underline{32.21}  &\underline{0.776} &\underline{30.29} &\underline{0.694}    \\
Ours         &\textbf{34.05}  &\textbf{0.834}    &\textbf{32.27}  &\textbf{0.766} \\
\bottomrule
\end{tabular}
\end{table}

\subsection{Ablation studies}
We ablate four components under matched compute (same diffusion steps and linear-solver budgets): noise-level transition, prior-aligned damping, curvature operator, and Krylov solver. Table~\ref{tab:ablation} shows that (i) replacing the variance-preserving transition with an ODE-style update and (ii) removing damping both degrade quality, especially for nonlinear operators. In the curvature model, dropping the pullback $\mathbf{J}_F^\top$ speeds up inference but reduces accuracy, whereas using the full curvature (including forward $\mathbf{J}_F$) substantially increases runtime with little or no gain over the one-sided approximation, supporting our choice as the best accuracy--efficiency trade-off. Finally, since the resulting system is generally nonsymmetric, GMRES is more stable than CG in practice; additional results are in Appendix~\ref{appendix:addtional_exp}.

\section{Conclusion}
\label{sec:Conclusion}

\textsc{clamp} provides a training-free posterior sampler for inverse problems that works with pixel and latent diffusion priors and supports both linear and nonlinear forward models. Its core update replaces heuristic scalar guidance with a denoiser-pullback Gauss--Newton correction and a prior-aligned anisotropic damping metric, producing noise-level--dependent matrix-scaled steps. Although each correction uses additional Jacobian actions (JVP/VJP) beyond a single backprojected gradient, it is computed with a fixed-budget matrix-free solve and yields favorable end-to-end runtime in practice. \textsc{clamp} is not intended to be tuning-free; rather, it reduces non-compute tuning by replacing per-task guidance-weight schedules with simple scalar settings for data consistency and damping, while diffusion steps and Krylov iterations specify the runtime--quality trade-off and may be adjusted by task. Empirically, it achieves competitive or improved reconstruction quality over the compared baselines while often reducing inference time on FFHQ/ImageNet inverse problems and MRI reconstruction. Limitations and future directions are discussed in Appendix~\ref{appendix:future_limit}.

\section*{Impact Statement}
CLAMP makes diffusion posterior sampling for inverse problems more reliable and faster by replacing hand-tuned scalar guidance with a noise-level–specific, geometry-aware correction. It computes coordinate-correct likelihood updates by pulling measurement sensitivity back through the denoiser, then stabilizes them with manifold-aligned anisotropic damping—yielding direction-wise step control without per-task schedules. A matrix-free GMRES implementation and a variance-preserving Langevin transition deliver a strong quality–runtime trade-off across diverse linear/nonlinear operators, including medical imaging (accelerated MRI).

\section*{Acknowledgement}
This work was supported in part by the National Research Foundation of Korea (NRF) grant funded by the Korea government (MSIT) (RS-2025-24683103, RS-2026-25476632), in part by Korea Basic Science Institute (National research Facilities and Equipment Center) grant funded by the Ministry of Science and ICT (No. RS-2024-00401899), and in part by Institute of Information \& communications Technology Planning \& Evaluation (IITP) under the Leading Generative AI Human Resources Development (IITP-2026-RS-2024-00360227) grant funded by the Korea government (MSIT).

\newpage
\bibliography{main}
\bibliographystyle{icml2026}

\appendix
\onecolumn

\FloatBarrier 
\clearpage 
\section{Algorithms}
\label{sec:appendix_algo}

\begin{algorithm}[H]
\caption{\textsc{clamp}}
\label{alg:CLAMP}
\begin{algorithmic}[1]
\Require Measurement $\mathbf{y}$; forward model $A$; data-consistency scale $\sigma_n$.
\Statex \hspace{\algorithmicindent} Denoiser $F_\theta(\cdot;\sigma)$; schedule $\{\sigma_t\}_{t=T}^{0}$.
\Statex \hspace{\algorithmicindent} $\lambda_{\mathrm{id}}\ge 0$; GMRES iterations $K$.
\Statex \hspace{\algorithmicindent} \(\triangleright\) All Jacobian actions are matrix-free: use JVP for $\mathbf{J}_A\mathbf{v}$ and VJP for $\mathbf{J}_A^\top\mathbf{u}$, $\mathbf{J}_F^\top\mathbf{u}$.
\State $\sigma_{\min}^{+}\gets \min\{\sigma_t:\sigma_t>0\}$.
\State Draw $\mathbf{x}_T\sim\mathcal{N}(\mathbf{0},\sigma_T^2\mathbf{I})$.
\For{$t=T,\dots,1$}
  \State $\hat{\mathbf{x}}_{0|t} \gets F_\theta(\mathbf{x}_t;\sigma_t)$
  \State $\mathbf{r}_t \gets A(\hat{\mathbf{x}}_{0|t})-\mathbf{y}$
  \State $\mathbf{d}_t \gets \mathbf{x}_t-\hat{\mathbf{x}}_{0|t}$;\;\;
         $\hat{\boldsymbol{\epsilon}}_\theta \gets \mathbf{d}_t/\sigma_t$
  \State $\sigma_{\mathrm{prior}}\gets \max(\sigma_{t-1},\sigma_{\min}^{+})$
  \State $\mathbf{u}_t \gets \mathbf{d}_t/\|\mathbf{d}_t\|_2$
  \State $\mathbf{a}_t \gets \mathrm{VJP}_F\!\big(\mathrm{VJP}_A(\mathbf{r}_t)\big)$
  \State $\mathbf{c}_t \gets \mathbf{a}_t/\sigma_n^2$
  \State $\alpha_t \gets (\|\mathbf{d}_t\|_2/\|\mathbf{a}_t\|_2)\cdot \sigma_{\mathrm{prior}}^{-2}$
  \State Define $\mathbf{H}_t(\mathbf{v}) \gets \lambda_{\mathrm{id}}\mathbf{v}+\mathbf{u}_t(\mathbf{u}_t^\top\mathbf{v})$
  \State Define $\mathbf{B}_t(\mathbf{v}) \gets \sigma_n^{-2}\,
         \mathrm{VJP}_F\!\big(\mathrm{VJP}_A(\mathrm{JVP}_A(\mathbf{v}))\big)$
  \State Define $\mathbf{M}_t(\mathbf{v}) \gets \mathbf{B}_t(\mathbf{v})+\alpha_t\mathbf{H}_t(\mathbf{v})$
  \State $\Delta_t \gets \mathrm{GMRES}(\mathbf{M}_t,\,-\mathbf{c}_t;\,K)$ (Alg.~\ref{alg:CLAMP_gmres})
  \State $\sigma_{\mathrm{ret},t}\gets \sigma_t-\sqrt{\sigma_t^2-\sigma_{t-1}^2}$
  \State $\sigma_{\mathrm{inj},t}\gets \sqrt{\sigma_{t-1}^2-\sigma_{\mathrm{ret},t}^2}$
  \State $\boldsymbol{\xi}\sim\mathcal{N}(\mathbf{0},\mathbf{I})$
  \State $\mathbf{x}_{t-1}\gets \mathbf{x}_t + \Delta_t
     + (\sigma_{\mathrm{ret},t}-\sigma_t)\hat{\boldsymbol{\epsilon}}_\theta
     + \sigma_{\mathrm{inj},t}\boldsymbol{\xi}$
\EndFor
\State \Return $\mathbf{x}_0$
\end{algorithmic}
\end{algorithm}

\clearpage 

\begin{algorithm}[H]
\caption{GMRES for $(\mathbf{B}_t+\alpha_t\mathbf{H}_t)\Delta=-\mathbf{c}_t$}
\label{alg:CLAMP_gmres}
\begin{algorithmic}[1]
\Require Matvec $\mathbf{M}(\mathbf{v}) := \mathbf{B}_t(\mathbf{v})+\alpha_t\mathbf{H}_t(\mathbf{v})$; RHS $\mathbf{b}:=-\mathbf{c}_t$;
Krylov budget $K$; breakdown tolerance $\tau$ (e.g., $10^{-12}$).
\Ensure Approximate solution $\Delta$.

\State $\Delta_0\gets \mathbf{0}$
\State $\mathbf{r}_0\gets \mathbf{b}-\mathbf{M}(\Delta_0)$ 
\State $\beta\gets \|\mathbf{r}_0\|_2$
\If{$\beta<\tau$} \State \Return $\mathbf{0}$ \EndIf
\State $\mathbf{v}_1\gets \mathbf{r}_0/\beta$
\State $m\gets K$ 

\For{$j=1,2,\dots,K$}
    \State $\mathbf{w}\gets \mathbf{M}(\mathbf{v}_j)$
    \For{$\ell=1,2,\dots,j$}
        \State $h_{\ell j}\gets \mathbf{v}_\ell^\top \mathbf{w}$
        \State $\mathbf{w}\gets \mathbf{w}-h_{\ell j}\mathbf{v}_\ell$
    \EndFor
    \State $h_{j+1,j}\gets \|\mathbf{w}\|_2$
    \If{$h_{j+1,j}<\tau$}
        \State $m\gets j$ 
        \State \textbf{break}
    \EndIf
    \State $\mathbf{v}_{j+1}\gets \mathbf{w}/h_{j+1,j}$
\EndFor

\State Form the (upper) Hessenberg matrix $\bar{\mathcal{H}}_m\in\mathbb{R}^{(m+1)\times m}$
from $\{h_{\ell j}\}_{1\le \ell\le j+1,\;1\le j\le m}$.
\State Set $\mathbf{e}_1 \gets [\beta,0,\dots,0]^\top \in \mathbb{R}^{m+1}$
\State Solve $\mathbf{y}=\arg\min_{\mathbf{y}\in\mathbb{R}^m}\|\bar{\mathcal{H}}_m\mathbf{y}-\mathbf{e}_1\|_2$
(least squares; e.g., QR/SVD)
\State $\Delta \gets \sum_{j=1}^{m} y_j\,\mathbf{v}_j$
\State \Return $\Delta$
\end{algorithmic}
\end{algorithm}

\clearpage 

\begin{algorithm}[H]
\caption{Latent-\textsc{clamp}}
\label{alg:latent_CLAMP}
\begin{algorithmic}[1]
\Require Measurement $\mathbf{y}$; forward model $A$; data-consistency scale $\sigma_n$.
\Statex \hspace{\algorithmicindent} Latent denoiser $F_\theta(\cdot;\sigma)$; decoder $D(\cdot)$; schedule $\{\sigma_t\}_{t=T}^{0}$.
\Statex \hspace{\algorithmicindent} $\lambda_{\mathrm{id}}\ge 0$; GMRES iterations $K$.
\Statex \hspace{\algorithmicindent} \(\triangleright\) All Jacobian actions are matrix-free: use JVP/VJP for $A\!\circ\!D$ and VJP for $F$.
\State $\sigma_{\min}^{+}\gets \min\{\sigma_t:\sigma_t>0\}$.
\State Draw $\mathbf{z}_T\sim\mathcal{N}(\mathbf{0},\sigma_T^2\mathbf{I})$.
\For{$t=T,\dots,1$}
  \State $\hat{\mathbf{z}}_{0|t} \gets F_\theta(\mathbf{z}_t;\sigma_t)$
  \State $\mathbf{r}_t \gets A\!\big(D(\hat{\mathbf{z}}_{0|t})\big)-\mathbf{y}$
  \State $\mathbf{d}_t \gets \mathbf{z}_t-\hat{\mathbf{z}}_{0|t}$;\;\;
         $\hat{\boldsymbol{\epsilon}}_\theta \gets \mathbf{d}_t/\sigma_t$
  \State $\sigma_{\mathrm{prior}}\gets \max(\sigma_{t-1},\sigma_{\min}^{+})$
  \State $\mathbf{u}_t \gets \mathbf{d}_t/\|\mathbf{d}_t\|_2$
  \State $\mathbf{a}_t \gets \mathrm{VJP}_F\!\Big(\mathrm{VJP}_{A\circ D}(\mathbf{r}_t)\Big)$
  \State $\mathbf{c}_t \gets \mathbf{a}_t/\sigma_n^2$
  \State $\alpha_t \gets (\|\mathbf{d}_t\|_2/\|\mathbf{a}_t\|_2)\cdot \sigma_{\mathrm{prior}}^{-2}$
  \State Define $\mathbf{H}_t(\mathbf{v}) \gets \lambda_{\mathrm{id}}\mathbf{v}+\mathbf{u}_t(\mathbf{u}_t^\top\mathbf{v})$
  \State Define $\mathbf{B}_t(\mathbf{v}) \gets \sigma_n^{-2}\,
         \mathrm{VJP}_F\!\Big(\mathrm{VJP}_{A\circ D}(\mathrm{JVP}_{A\circ D}(\mathbf{v}))\Big)$
  \State Define $\mathbf{M}_t(\mathbf{v}) \gets \mathbf{B}_t(\mathbf{v})+\alpha_t\mathbf{H}_t(\mathbf{v})$
  \State $\Delta_t \gets \mathrm{GMRES}(\mathbf{M}_t,\,-\mathbf{c}_t;\,K)$ (Alg.~\ref{alg:CLAMP_gmres})
  \State $\sigma_{\mathrm{ret},t}\gets \sigma_t-\sqrt{\sigma_t^2-\sigma_{t-1}^2}$
  \State $\sigma_{\mathrm{inj},t}\gets \sqrt{\sigma_{t-1}^2-\sigma_{\mathrm{ret},t}^2}$
  \State $\boldsymbol{\xi}\sim\mathcal{N}(\mathbf{0},\mathbf{I})$
  \State $\mathbf{z}_{t-1}\gets \mathbf{z}_t + \Delta_t
     + (\sigma_{\mathrm{ret},t}-\sigma_t)\hat{\boldsymbol{\epsilon}}_\theta
     + \sigma_{\mathrm{inj},t}\boldsymbol{\xi}$
\EndFor
\State \Return $\mathbf{x}_0 \gets D(\mathbf{z}_0)$
\end{algorithmic}
\end{algorithm}

\FloatBarrier 

\section{Theoretical details}
\label{sec:appendix_theory}

This appendix collects proofs for the propositions stated in Sec.~\ref{sec:Method}.
We use the Euclidean inner product $\langle \mathbf{a},\mathbf{b}\rangle := \mathbf{a}^\top\mathbf{b}$
and norm $\|\mathbf{a}\|_2 := \sqrt{\mathbf{a}^\top\mathbf{a}}$.
For a symmetric positive definite matrix $\mathbf{Q}$, we denote the largest and smallest eigenvalues by
$\lambda_{\max}(\mathbf{Q})$ and $\lambda_{\min}(\mathbf{Q})$, and the condition number by
$\kappa(\mathbf{Q}) := \lambda_{\max}(\mathbf{Q})/\lambda_{\min}(\mathbf{Q})$.

\subsection{Proof of Proposition~\ref{prop:scalar_barrier}}
\label{app:proof_scalar_barrier}

\textbf{Proposition~\ref{prop:scalar_barrier}.}
Fix $t$ and consider a local quadratic model in the increment $\Delta$:
\begin{equation}
\mathcal{J}(\Delta)
=
\mathcal{J}(\mathbf{0})+\langle \mathbf{g}_t,\Delta\rangle+\frac{1}{2}\Delta^\top\mathbf{Q}_t\Delta,
\qquad
\mathbf{Q}_t\succ 0.
\label{eq:app_quad_model}
\end{equation}
Consider the scalar-guided update $\Delta_{\rm sc}=-\eta\mathbf{g}_t$ with $\eta\ge 0$.
Then $\mathcal{J}(\Delta_{\rm sc})\le \mathcal{J}(\mathbf{0})$ for all $\mathbf{g}_t$ if and only if
$\eta\le 2/\lambda_{\max}(\mathbf{Q}_t)$.

Let $\Delta^\star=-\mathbf{Q}_t^{-1}\mathbf{g}_t$ be the minimizer of \eqref{eq:app_quad_model}.
In an eigenbasis $\mathbf{Q}_t\mathbf{v}_i=\lambda_i\mathbf{v}_i$, for each index $i$ with
$\langle \mathbf{v}_i,\mathbf{g}_t\rangle\neq 0$,
\begin{equation}
\frac{|\langle \mathbf{v}_i,\Delta_{\rm sc}\rangle|}{|\langle \mathbf{v}_i,\Delta^\star\rangle|}
= \eta\lambda_i
\le \frac{2\lambda_i}{\lambda_{\max}(\mathbf{Q}_t)},
\label{eq:app_ratio_bound_stmt}
\end{equation}
and in particular for $\lambda_i=\lambda_{\min}(\mathbf{Q}_t)$,
$
\frac{|\langle \mathbf{v}_{\min},\Delta_{\rm sc}\rangle|}{|\langle \mathbf{v}_{\min},\Delta^\star\rangle|}
\le \frac{2}{\kappa(\mathbf{Q}_t)}.
$

\begin{proof}
We prove three claims:
(i) the objective change formula under $\Delta_{\rm sc}$;
(ii) the (necessary and sufficient) descent condition $\eta\le 2/\lambda_{\max}$;
(iii) the eigen-coordinate ratio \eqref{eq:app_ratio_bound_stmt}.

\medskip
\noindent
\textbf{1) Objective change under a scalar step.}
Substitute $\Delta_{\rm sc}=-\eta\mathbf{g}_t$ into \eqref{eq:app_quad_model}:
\begin{align}
\mathcal{J}(\Delta_{\rm sc})-\mathcal{J}(\mathbf{0})
&=
\left\langle \mathbf{g}_t,\,-\eta\mathbf{g}_t\right\rangle
+\frac{1}{2}(-\eta\mathbf{g}_t)^\top\mathbf{Q}_t(-\eta\mathbf{g}_t)
\nonumber\\
&=
-\eta\,\mathbf{g}_t^\top\mathbf{g}_t
+\frac{\eta^2}{2}\,\mathbf{g}_t^\top\mathbf{Q}_t\mathbf{g}_t
\nonumber\\
&=
-\eta\,\|\mathbf{g}_t\|_2^2
+\frac{\eta^2}{2}\,\mathbf{g}_t^\top\mathbf{Q}_t\mathbf{g}_t.
\label{eq:app_deltaJ_exact}
\end{align}

\medskip
\noindent
\textbf{2) Rayleigh quotient bound.}
Because $\mathbf{Q}_t$ is symmetric positive definite, there exists an orthonormal eigenbasis
$\{\mathbf{v}_i\}_{i=1}^d$ and eigenvalues $\{\lambda_i\}_{i=1}^d$ with $\lambda_i>0$ and
$\mathbf{Q}_t\mathbf{v}_i=\lambda_i\mathbf{v}_i$.
Expand $\mathbf{g}_t$ as
\[
\mathbf{g}_t=\sum_{i=1}^d a_i\mathbf{v}_i,
\qquad
a_i:=\langle \mathbf{v}_i,\mathbf{g}_t\rangle.
\]
Then, using orthonormality $\mathbf{v}_i^\top\mathbf{v}_j=\delta_{ij}$,
\[
\|\mathbf{g}_t\|_2^2
=
\mathbf{g}_t^\top\mathbf{g}_t
=
\sum_{i=1}^d a_i^2,
\]
and
\[
\mathbf{g}_t^\top\mathbf{Q}_t\mathbf{g}_t
=
\left(\sum_i a_i\mathbf{v}_i\right)^\top
\left(\sum_j a_j\mathbf{Q}_t\mathbf{v}_j\right)
=
\sum_{j=1}^d \lambda_j a_j^2.
\]
Since $\lambda_j\le \lambda_{\max}(\mathbf{Q}_t)$ for all $j$, we obtain
\begin{equation}
\mathbf{g}_t^\top\mathbf{Q}_t\mathbf{g}_t
=
\sum_j \lambda_j a_j^2
\le
\lambda_{\max}(\mathbf{Q}_t)\sum_j a_j^2
=
\lambda_{\max}(\mathbf{Q}_t)\|\mathbf{g}_t\|_2^2.
\label{eq:app_rayleigh}
\end{equation}

\medskip
\noindent
\textbf{3) Sufficient condition for descent.}
Combine \eqref{eq:app_deltaJ_exact} and \eqref{eq:app_rayleigh}:
\begin{align}
\mathcal{J}(\Delta_{\rm sc})-\mathcal{J}(\mathbf{0})
&\le
-\eta\,\|\mathbf{g}_t\|_2^2
+\frac{\eta^2}{2}\lambda_{\max}(\mathbf{Q}_t)\|\mathbf{g}_t\|_2^2
\nonumber\\
&=
-\eta\,\|\mathbf{g}_t\|_2^2
\left(1-\frac{\eta}{2}\lambda_{\max}(\mathbf{Q}_t)\right).
\label{eq:app_deltaJ_bound}
\end{align}
If $0<\eta\le 2/\lambda_{\max}(\mathbf{Q}_t)$, then
$1-\frac{\eta}{2}\lambda_{\max}(\mathbf{Q}_t)\ge 0$ and therefore
$\mathcal{J}(\Delta_{\rm sc})-\mathcal{J}(\mathbf{0})\le 0$.
Hence $\mathcal{J}(\Delta_{\rm sc})\le \mathcal{J}(\mathbf{0})$.

\medskip
\noindent
\textbf{4) Necessity of $\eta\le 2/\lambda_{\max}(\mathbf{Q}_t)$ for uniform descent over all $\mathbf{g}_t$.}
Assume $\eta>2/\lambda_{\max}(\mathbf{Q}_t)$.
Let $\mathbf{v}_{\max}$ be a unit eigenvector associated with $\lambda_{\max}(\mathbf{Q}_t)$, and choose $\mathbf{g}_t=\mathbf{v}_{\max}$.
Then $\|\mathbf{g}_t\|_2^2=1$ and $\mathbf{g}_t^\top\mathbf{Q}_t\mathbf{g}_t=\lambda_{\max}(\mathbf{Q}_t)$.
From \eqref{eq:app_deltaJ_exact},
\[
\mathcal{J}(\Delta_{\rm sc})-\mathcal{J}(\mathbf{0})
=
-\eta + \frac{\eta^2}{2}\lambda_{\max}(\mathbf{Q}_t)
=
\eta\left(\frac{\eta}{2}\lambda_{\max}(\mathbf{Q}_t)-1\right)
>0,
\]
since $\frac{\eta}{2}\lambda_{\max}(\mathbf{Q}_t)>1$ by assumption.
Thus the objective increases for this choice of $\mathbf{g}_t$, so $\eta>2/\lambda_{\max}$ cannot guarantee uniform descent.
This shows the condition is also necessary for a guarantee that holds for all $\mathbf{g}_t$.

\medskip
\noindent
\textbf{5) Closed form of the quadratic minimizer $\Delta^\star$.}
Differentiate \eqref{eq:app_quad_model} with respect to $\Delta$.
The derivative of $\langle \mathbf{g}_t,\Delta\rangle=\mathbf{g}_t^\top\Delta$ is $\mathbf{g}_t$.
For symmetric $\mathbf{Q}_t$, the derivative of $\frac12\Delta^\top\mathbf{Q}_t\Delta$ is $\mathbf{Q}_t\Delta$.
Therefore,
\[
\nabla_\Delta \mathcal{J}(\Delta) = \mathbf{g}_t + \mathbf{Q}_t\Delta.
\]
Setting $\nabla_\Delta \mathcal{J}(\Delta^\star)=\mathbf{0}$ gives
$\mathbf{Q}_t\Delta^\star=-\mathbf{g}_t$, hence
\[
\Delta^\star=-\mathbf{Q}_t^{-1}\mathbf{g}_t.
\]

\medskip
\noindent
\textbf{6) Eigen-coordinate ratio.}
Using the eigen-expansion $\mathbf{g}_t=\sum_i a_i\mathbf{v}_i$,
\[
\Delta_{\rm sc}=-\eta\mathbf{g}_t = \sum_i (-\eta a_i)\mathbf{v}_i.
\]
Also, because $\mathbf{Q}_t^{-1}\mathbf{v}_i=\frac{1}{\lambda_i}\mathbf{v}_i$,
\[
\Delta^\star
=
-\mathbf{Q}_t^{-1}\mathbf{g}_t
=
-\sum_i a_i\,\mathbf{Q}_t^{-1}\mathbf{v}_i
=
-\sum_i \frac{a_i}{\lambda_i}\mathbf{v}_i.
\]
Therefore
\[
\langle \mathbf{v}_i,\Delta_{\rm sc}\rangle = -\eta a_i,
\qquad
\langle \mathbf{v}_i,\Delta^\star\rangle = -\frac{a_i}{\lambda_i}.
\]
If $a_i=\langle \mathbf{v}_i,\mathbf{g}_t\rangle=0$, then both components are zero and the ratio is not informative.
If $a_i\neq 0$, then
\[
\frac{|\langle \mathbf{v}_i,\Delta_{\rm sc}\rangle|}{|\langle \mathbf{v}_i,\Delta^\star\rangle|}
=
\frac{|\eta a_i|}{|a_i|/\lambda_i}
=
\eta\lambda_i.
\]
Under the stability condition $\eta\le 2/\lambda_{\max}(\mathbf{Q}_t)$, this yields
\[
\eta\lambda_i \le \frac{2\lambda_i}{\lambda_{\max}(\mathbf{Q}_t)}.
\]
For $\lambda_i=\lambda_{\min}(\mathbf{Q}_t)$ the bound becomes
$
\eta\lambda_{\min}(\mathbf{Q}_t)\le 2/\kappa(\mathbf{Q}_t)
$.
\end{proof}

\subsection{Proof of Proposition~\ref{prop:pullback}}
\label{app:proof_pullback}

\textbf{Proposition~\ref{prop:pullback}.}
Let
\[
\tilde\Phi(\mathbf{z}) := \frac{1}{2\sigma_n^2}\|A(\mathbf{z})-\mathbf{y}\|_2^2,
\qquad
\Phi_t(\mathbf{x}) := \tilde\Phi(F_\theta(\mathbf{x};\sigma_t)).
\]
Let $\mathbf{z}_t := F_\theta(\mathbf{x}_t;\sigma_t)$,
$\mathbf{g}_z := \nabla_{\mathbf{z}}\tilde\Phi(\mathbf{z})|_{\mathbf{z}=\mathbf{z}_t}$,
and $\mathbf{J}_F := \nabla_{\mathbf{x}}F_\theta(\mathbf{x};\sigma_t)|_{\mathbf{x}=\mathbf{x}_t}$.
Then, for any increment $\Delta$,
\[
\Phi_t(\mathbf{x}_t+\Delta)
=
\Phi_t(\mathbf{x}_t)
+
\langle \mathbf{J}_F^{\!\top}\mathbf{g}_z,\Delta\rangle
+ o(\|\Delta\|_2).
\]
Consequently, among all increments with $\|\Delta\|_2\le \rho$, the direction that maximizes first-order decrease of $\Phi_t$ is proportional to
$-\mathbf{J}_F^{\!\top}\mathbf{g}_z$.
Under first-order residual linearization, the Gauss--Newton/Fisher curvature of $\Phi_t$ in diffusion-state coordinates is
$\frac{1}{\sigma_n^2}\mathbf{J}_F^{\!\top}\mathbf{J}_A^{\!\top}\mathbf{J}_A\mathbf{J}_F$, where
$\mathbf{J}_A := \nabla_{\mathbf{z}}A(\mathbf{z})|_{\mathbf{z}=\mathbf{z}_t}$.

\begin{proof}
We prove the first-order expansion, then the steepest descent direction, and finally the GN curvature form.

\medskip
\noindent
\textbf{1) First-order expansion of $\Phi_t(\mathbf{x}_t+\Delta)$.}

\emph{Step 1: Expand the denoiser.}
Because $F_\theta(\cdot;\sigma_t)$ is differentiable at $\mathbf{x}_t$, by the definition of the Jacobian there exists a remainder term
$\mathbf{r}_F(\Delta)$ such that
\begin{equation}
F_\theta(\mathbf{x}_t+\Delta;\sigma_t)
=
F_\theta(\mathbf{x}_t;\sigma_t) + \mathbf{J}_F\Delta + \mathbf{r}_F(\Delta),
\label{eq:app_F_expand}
\end{equation}
where $\|\mathbf{r}_F(\Delta)\|_2 = o(\|\Delta\|_2)$ as $\Delta\to \mathbf{0}$.
Define $\mathbf{z}_t:=F_\theta(\mathbf{x}_t;\sigma_t)$, so \eqref{eq:app_F_expand} becomes
\[
F_\theta(\mathbf{x}_t+\Delta;\sigma_t)
=
\mathbf{z}_t + \mathbf{J}_F\Delta + o(\|\Delta\|_2).
\]

\emph{Step 2: Expand the clean-space likelihood.}
Because $\tilde\Phi$ is differentiable at $\mathbf{z}_t$, there exists a remainder $r_{\tilde\Phi}(\mathbf{u})$ such that
\begin{equation}
\tilde\Phi(\mathbf{z}_t+\mathbf{u})
=
\tilde\Phi(\mathbf{z}_t) + \langle \mathbf{g}_z,\mathbf{u}\rangle + r_{\tilde\Phi}(\mathbf{u}),
\label{eq:app_tildePhi_expand}
\end{equation}
where $\mathbf{g}_z := \nabla_{\mathbf{z}}\tilde\Phi(\mathbf{z})|_{\mathbf{z}=\mathbf{z}_t}$ and
$r_{\tilde\Phi}(\mathbf{u})=o(\|\mathbf{u}\|_2)$ as $\mathbf{u}\to \mathbf{0}$.

\emph{Step 3: Compose the expansions.}
By definition $\Phi_t(\mathbf{x})=\tilde\Phi(F_\theta(\mathbf{x};\sigma_t))$.
Let
\[
\mathbf{u}(\Delta):=F_\theta(\mathbf{x}_t+\Delta;\sigma_t)-\mathbf{z}_t.
\]
From Step 1, $\mathbf{u}(\Delta)=\mathbf{J}_F\Delta+o(\|\Delta\|_2)$, hence $\|\mathbf{u}(\Delta)\|_2=O(\|\Delta\|_2)$.
Apply \eqref{eq:app_tildePhi_expand} with $\mathbf{u}=\mathbf{u}(\Delta)$:
\begin{align*}
\Phi_t(\mathbf{x}_t+\Delta)
&=
\tilde\Phi\!\big(\mathbf{z}_t+\mathbf{u}(\Delta)\big)\\
&=
\tilde\Phi(\mathbf{z}_t)+\langle \mathbf{g}_z,\mathbf{u}(\Delta)\rangle + o(\|\mathbf{u}(\Delta)\|_2).
\end{align*}
Substitute $\mathbf{u}(\Delta)=\mathbf{J}_F\Delta+o(\|\Delta\|_2)$:
\[
\Phi_t(\mathbf{x}_t+\Delta)
=
\tilde\Phi(\mathbf{z}_t) + \langle \mathbf{g}_z,\mathbf{J}_F\Delta\rangle + o(\|\Delta\|_2),
\]
because $\langle \mathbf{g}_z,o(\|\Delta\|_2)\rangle=o(\|\Delta\|_2)$ and $o(\|\mathbf{u}(\Delta)\|_2)=o(\|\Delta\|_2)$.
Finally, use $\langle \mathbf{g}_z,\mathbf{J}_F\Delta\rangle=\langle \mathbf{J}_F^\top\mathbf{g}_z,\Delta\rangle$ and note
$\tilde\Phi(\mathbf{z}_t)=\Phi_t(\mathbf{x}_t)$ to obtain
\[
\Phi_t(\mathbf{x}_t+\Delta)
=
\Phi_t(\mathbf{x}_t)
+
\langle \mathbf{J}_F^\top\mathbf{g}_z,\Delta\rangle
+
o(\|\Delta\|_2).
\]

\medskip
\noindent
\textbf{2) Steepest first-order decrease under $\|\Delta\|_2\le \rho$.}
Let $\mathbf{w}:=\mathbf{J}_F^\top\mathbf{g}_z$.
For small $\Delta$, the first-order change is governed by $\langle \mathbf{w},\Delta\rangle$.
To maximize first-order decrease we minimize $\langle \mathbf{w},\Delta\rangle$ subject to $\|\Delta\|_2\le \rho$.

By Cauchy--Schwarz,
\[
\langle \mathbf{w},\Delta\rangle \ge -\|\mathbf{w}\|_2\,\|\Delta\|_2 \ge -\|\mathbf{w}\|_2\,\rho.
\]
Equality is achieved by choosing $\Delta$ colinear with $-\mathbf{w}$ and with maximal norm:
\[
\Delta^\star
=
-\rho\,\frac{\mathbf{w}}{\|\mathbf{w}\|_2}
=
-\rho\,\frac{\mathbf{J}_F^\top\mathbf{g}_z}{\|\mathbf{J}_F^\top\mathbf{g}_z\|_2},
\qquad
(\mathbf{w}\neq \mathbf{0}).
\]
Thus the steepest descent direction is proportional to $-\mathbf{J}_F^\top\mathbf{g}_z$.

\medskip
\noindent
\textbf{3) Gauss--Newton/Fisher curvature in diffusion-state coordinates.}
Define the clean residual $\mathbf{r}(\mathbf{z}) := A(\mathbf{z})-\mathbf{y}$, so
$\tilde\Phi(\mathbf{z})=\frac{1}{2\sigma_n^2}\|\mathbf{r}(\mathbf{z})\|_2^2$.
Let $\mathbf{r}_t:=\mathbf{r}(\mathbf{z}_t)=A(\mathbf{z}_t)-\mathbf{y}$ and
$\mathbf{J}_A:=\nabla_{\mathbf{z}}A(\mathbf{z})|_{\mathbf{z}=\mathbf{z}_t}$.

First-order linearization of $A$ around $\mathbf{z}_t$ gives
\[
A(\mathbf{z}_t+\mathbf{u}) \approx A(\mathbf{z}_t) + \mathbf{J}_A\mathbf{u}
\quad\Longrightarrow\quad
\mathbf{r}(\mathbf{z}_t+\mathbf{u}) \approx \mathbf{r}_t + \mathbf{J}_A\mathbf{u}.
\]
From Step 1, the clean-space perturbation induced by $\Delta$ is $\mathbf{u}\approx \mathbf{J}_F\Delta$ (up to higher-order terms).
Therefore
\[
\mathbf{r}\!\left(F_\theta(\mathbf{x}_t+\Delta;\sigma_t)\right)
=
\mathbf{r}(\mathbf{z}_t+\mathbf{u})
\approx
\mathbf{r}_t + \mathbf{J}_A(\mathbf{J}_F\Delta).
\]
Substitute into $\Phi_t(\mathbf{x}_t+\Delta)=\frac{1}{2\sigma_n^2}\|\cdot\|_2^2$:
\begin{align*}
\Phi_t(\mathbf{x}_t+\Delta)
&\approx
\frac{1}{2\sigma_n^2}\left\|\mathbf{r}_t + \mathbf{J}_A\mathbf{J}_F\Delta\right\|_2^2\\
&=
\frac{1}{2\sigma_n^2}\Big(
\|\mathbf{r}_t\|_2^2
+2\langle \mathbf{r}_t,\;\mathbf{J}_A\mathbf{J}_F\Delta\rangle
+\|\mathbf{J}_A\mathbf{J}_F\Delta\|_2^2
\Big).
\end{align*}
Rewrite the cross term:
\[
\langle \mathbf{r}_t,\;\mathbf{J}_A\mathbf{J}_F\Delta\rangle
=
\langle \mathbf{J}_F^\top \mathbf{J}_A^\top\mathbf{r}_t,\;\Delta\rangle.
\]
Rewrite the quadratic term:
\[
\|\mathbf{J}_A\mathbf{J}_F\Delta\|_2^2
=
(\mathbf{J}_A\mathbf{J}_F\Delta)^\top(\mathbf{J}_A\mathbf{J}_F\Delta)
=
\Delta^\top \mathbf{J}_F^\top \mathbf{J}_A^\top \mathbf{J}_A\mathbf{J}_F\,\Delta.
\]
Hence the quadratic form in $\Delta$ is
$
\frac{1}{2}\Delta^\top\left(\frac{1}{\sigma_n^2}\mathbf{J}_F^\top \mathbf{J}_A^\top \mathbf{J}_A\mathbf{J}_F\right)\Delta,
$
which is precisely the Gauss--Newton/Fisher curvature $\frac{1}{\sigma_n^2}\mathbf{J}^\top \mathbf{J}$ for $\mathbf{J}=\mathbf{J}_A\mathbf{J}_F$.
\end{proof}

\subsection{Proof of Proposition~\ref{prop:vp_closed_form}}
\label{app:proof_vp_closed_form}

\textbf{Proposition~\ref{prop:vp_closed_form}.}
Consider the stochastic prior propagation step
\begin{equation}
\mathbf{x}_{t-1}
=
\mathbf{x}_t + \Delta_t
+(\sigma_{\mathrm{ret},t}-\sigma_t)\,\epsilon_\theta(\mathbf{x}_t,\sigma_t)
+\sigma_{\mathrm{inj},t}\,\epsilon,
\qquad
\epsilon\sim\mathcal{N}(\mathbf{0},\mathbf{I}),
\label{eq:app_vp_step}
\end{equation}
and impose schedule variance matching
\begin{equation}
\sigma_{t-1}^2 = \sigma_{\mathrm{ret},t}^2+\sigma_{\mathrm{inj},t}^2.
\label{eq:app_vp_match}
\end{equation}
Define
\[
\lambda_t := 1-\frac{\sigma_{\mathrm{ret},t}}{\sigma_t},
\qquad
\sigma_{\mathrm{ref}}^2 := \sigma_t\sigma_{\mathrm{ret},t},
\]
and assume the fluctuation--dissipation coupling
\begin{equation}
\sigma_{\mathrm{inj},t}^2 = 2\lambda_t\,\sigma_{\mathrm{ref}}^2.
\label{eq:app_fd}
\end{equation}
Then, under a decreasing nonnegative schedule $0\le \sigma_{t-1}\le \sigma_t$ with $\sigma_t>0$,
the pair $(\sigma_{\mathrm{ret},t},\sigma_{\mathrm{inj},t})$ is uniquely determined by
\[
\sigma_{\mathrm{ret},t}
=
\sigma_t-\sqrt{\sigma_t^2-\sigma_{t-1}^2},
\qquad
\sigma_{\mathrm{inj},t}
=
\sqrt{\sigma_{t-1}^2-\sigma_{\mathrm{ret},t}^2}.
\]

\begin{proof}
The proof is algebraic: we eliminate $\sigma_{\mathrm{inj},t}$ using \eqref{eq:app_vp_match} and \eqref{eq:app_fd}, then solve for
$\sigma_{\mathrm{ret},t}$.

\medskip
\noindent
\textbf{1) Expand the coupling \eqref{eq:app_fd}.}
Substitute the definitions of $\lambda_t$ and $\sigma_{\mathrm{ref}}^2$ into \eqref{eq:app_fd}:
\begin{align}
\sigma_{\mathrm{inj},t}^2
&=
2\left(1-\frac{\sigma_{\mathrm{ret},t}}{\sigma_t}\right)\,(\sigma_t\sigma_{\mathrm{ret},t})
\nonumber\\
&=
2\left(\sigma_t\sigma_{\mathrm{ret},t}-\sigma_{\mathrm{ret},t}^2\right)
\nonumber\\
&=
2(\sigma_t-\sigma_{\mathrm{ret},t})\,\sigma_{\mathrm{ret},t}.
\label{eq:app_inj_sq}
\end{align}

\medskip
\noindent
\textbf{2) Eliminate $\sigma_{\mathrm{inj},t}$ using variance matching.}
From \eqref{eq:app_vp_match},
\[
\sigma_{t-1}^2 = \sigma_{\mathrm{ret},t}^2+\sigma_{\mathrm{inj},t}^2.
\]
Substitute \eqref{eq:app_inj_sq}:
\begin{align}
\sigma_{t-1}^2
&=
\sigma_{\mathrm{ret},t}^2 + 2(\sigma_t-\sigma_{\mathrm{ret},t})\,\sigma_{\mathrm{ret},t}
\nonumber\\
&=
\sigma_{\mathrm{ret},t}^2 + 2\sigma_t\sigma_{\mathrm{ret},t} - 2\sigma_{\mathrm{ret},t}^2
\nonumber\\
&=
2\sigma_t\sigma_{\mathrm{ret},t}-\sigma_{\mathrm{ret},t}^2.
\label{eq:app_quadratic_start}
\end{align}
Rearrange to obtain a quadratic equation in $\sigma_{\mathrm{ret},t}$:
\begin{equation}
\sigma_{\mathrm{ret},t}^2 - 2\sigma_t\sigma_{\mathrm{ret},t} + \sigma_{t-1}^2 = 0.
\label{eq:app_quadratic}
\end{equation}

\medskip
\noindent
\textbf{3) Solve the quadratic.}
Apply the quadratic formula to \eqref{eq:app_quadratic}:
\[
\sigma_{\mathrm{ret},t}
=
\frac{2\sigma_t \pm \sqrt{(2\sigma_t)^2-4\sigma_{t-1}^2}}{2}
=
\sigma_t \pm \sqrt{\sigma_t^2-\sigma_{t-1}^2}.
\]
Thus the two algebraic candidates are
\begin{equation}
\sigma_{\mathrm{ret},t}^{(\pm)} = \sigma_t \pm \sqrt{\sigma_t^2-\sigma_{t-1}^2}.
\label{eq:app_roots}
\end{equation}
The square root is real because $\sigma_t^2-\sigma_{t-1}^2\ge 0$ under $\sigma_{t-1}\le \sigma_t$.

\medskip
\noindent
\textbf{4) Select the admissible root.}
Since $\sigma_{\mathrm{ret},t}$ and $\sigma_{\mathrm{inj},t}$ are standard deviations, they must satisfy
\[
\sigma_{\mathrm{ret},t}\ge 0,
\qquad
\sigma_{\mathrm{inj},t}^2=\sigma_{t-1}^2-\sigma_{\mathrm{ret},t}^2\ge 0
\quad\Longrightarrow\quad
\sigma_{\mathrm{ret},t}\le \sigma_{t-1}.
\]
We check the two roots in \eqref{eq:app_roots}.

\emph{(a) The ``plus'' root is inadmissible when $\sigma_{t-1}<\sigma_t$.}
If $\sigma_{t-1}<\sigma_t$, then $\sqrt{\sigma_t^2-\sigma_{t-1}^2}>0$ and
\[
\sigma_{\mathrm{ret},t}^{(+)}
=
\sigma_t + \sqrt{\sigma_t^2-\sigma_{t-1}^2}
>
\sigma_t
>
\sigma_{t-1},
\]
so it violates $\sigma_{\mathrm{ret},t}\le \sigma_{t-1}$ and cannot satisfy variance matching with a real $\sigma_{\mathrm{inj},t}$.

\emph{(b) The ``minus'' root is admissible.}
Define
\begin{equation}
\sigma_{\mathrm{ret},t}
:=
\sigma_{\mathrm{ret},t}^{(-)}
=
\sigma_t - \sqrt{\sigma_t^2-\sigma_{t-1}^2}.
\label{eq:app_ret_final}
\end{equation}

\underline{Nonnegativity.}
Because $\sigma_t^2-\sigma_{t-1}^2 \le \sigma_t^2$ and $\sigma_t\ge 0$, we have
$\sqrt{\sigma_t^2-\sigma_{t-1}^2}\le \sigma_t$, hence
$\sigma_{\mathrm{ret},t}\ge 0$.

\underline{Upper bound $\sigma_{\mathrm{ret},t}\le \sigma_{t-1}$.}
The inequality $\sigma_{\mathrm{ret},t}\le \sigma_{t-1}$ is equivalent to
\[
\sigma_t - \sqrt{\sigma_t^2-\sigma_{t-1}^2}\le \sigma_{t-1}
\quad\Longleftrightarrow\quad
\sqrt{\sigma_t^2-\sigma_{t-1}^2}\ge \sigma_t-\sigma_{t-1}.
\]
Both sides are nonnegative because $\sigma_t\ge \sigma_{t-1}\ge 0$, so we can square without changing the inequality:
\begin{align*}
\sigma_t^2-\sigma_{t-1}^2
&\ge
(\sigma_t-\sigma_{t-1})^2\\
&=
\sigma_t^2 - 2\sigma_t\sigma_{t-1} + \sigma_{t-1}^2.
\end{align*}
Cancel $\sigma_t^2$ on both sides and rearrange:
\[
-\sigma_{t-1}^2
\ge
-2\sigma_t\sigma_{t-1} + \sigma_{t-1}^2
\quad\Longleftrightarrow\quad
2\sigma_t\sigma_{t-1}\ge 2\sigma_{t-1}^2
\quad\Longleftrightarrow\quad
\sigma_{t-1}(\sigma_t-\sigma_{t-1})\ge 0,
\]
which holds because $\sigma_{t-1}\ge 0$ and $\sigma_t-\sigma_{t-1}\ge 0$.
Therefore $\sigma_{\mathrm{ret},t}\le \sigma_{t-1}$.

Thus the minus root is admissible. Under a strict decrease $\sigma_{t-1}<\sigma_t$, it is the only admissible root, hence
$\sigma_{\mathrm{ret},t}$ is uniquely determined.

\medskip
\noindent
\textbf{5) Recover $\sigma_{\mathrm{inj},t}$.}
Once $\sigma_{\mathrm{ret},t}$ is fixed, variance matching \eqref{eq:app_vp_match} gives
\[
\sigma_{\mathrm{inj},t}^2 = \sigma_{t-1}^2-\sigma_{\mathrm{ret},t}^2.
\]
The right-hand side is nonnegative because $\sigma_{\mathrm{ret},t}\le \sigma_{t-1}$, so define
\[
\sigma_{\mathrm{inj},t} := \sqrt{\sigma_{t-1}^2-\sigma_{\mathrm{ret},t}^2},
\]
taking the principal (nonnegative) square root.
\end{proof}

\clearpage

\begin{table*}[!t]
\centering
\setlength{\tabcolsep}{2.5pt}
\renewcommand{\arraystretch}{0.92}
\caption{Quantitative comparison on FFHQ and ImageNet across inverse-problem tasks in latent setting. We report average PSNR/SSIM (higher is better), LPIPS (lower is better), and run-time (lower is better) over 100 validation images. Best and second-best scores are highlighted in \textbf{bold} and \underline{underlined}, respectively.}
\label{tab:main_table_latent}
\begin{adjustbox}{max totalsize={\textwidth}{0.8\textheight},center}
\begin{tabular}{l!{\vrule}l!{\vrule}cccc!{\vrule}cccc}
\toprule
\multirow{2}{*}{\textbf{Task}} & \multirow{2}{*}{\textbf{Method}} &
\multicolumn{4}{c|}{\textbf{FFHQ}} & \multicolumn{4}{c}{\textbf{ImageNet}} \\
& &
\textbf{PSNR $\uparrow$} & \textbf{SSIM $\uparrow$} & \textbf{LPIPS $\downarrow$} & \textbf{Run-time (s)} &
\textbf{PSNR $\uparrow$} & \textbf{SSIM $\uparrow$} & \textbf{LPIPS $\downarrow$} & \textbf{Run-time (s)} \\
\midrule

\multirow{4}{*}{Super resolution 4$\times$} &
Ours       & \underline{28.933} & \textbf{0.829} & \textbf{0.233} & \textbf{34.255} & \textbf{26.465} & \textbf{0.726} & \textbf{0.335} & \textbf{37.353} \\
& PSLD       & 24.627 & 0.628 & 0.377 & 69.290 & 16.656 & 0.291 & 0.541 & 102.757 \\
& ReSample   & 23.317 & 0.456 & 0.507 & 300.061 & 22.152 & 0.423 & 0.470 & 269.078 \\
& LatentDAPS & \textbf{29.204} & \underline{0.825} & \underline{0.272} & \underline{84.460} & \underline{26.189} & \underline{0.702} & \underline{0.388} & \underline{86.315} \\
\midrule

\multirow{4}{*}{Box inpainting} &
Ours       & \textbf{25.151} & \textbf{0.837} & \textbf{0.236} & \textbf{35.369} & \underline{20.836} & \textbf{0.731} & \underline{0.342} & \textbf{37.393} \\
& PSLD       & 21.847 & 0.612 & 0.370 & 69.045 & 18.762 & 0.424 & 0.549 & 99.832 \\
& ReSample   & 19.978 & \underline{0.796} & \underline{0.247} & 296.326 & 18.087 & \underline{0.713} & \textbf{0.309} & 265.139 \\
& LatentDAPS & \underline{23.474} & 0.742 & 0.369 & \underline{85.823} & \textbf{22.818} & 0.561 & 0.543 & \underline{88.725} \\
\midrule

\multirow{4}{*}{Random inpainting} &
Ours       & \textbf{31.433} & \textbf{0.894} & \textbf{0.193} & \textbf{35.263} & \textbf{28.181} & \textbf{0.806} & \underline{0.262} & \textbf{37.396} \\
& PSLD       & 24.280 & 0.635 & 0.346 & 68.915 & 20.690 & 0.436 & 0.550 & 98.449 \\
& ReSample   & \underline{29.950} & \underline{0.842} & \underline{0.201} & 307.874 & \underline{26.916} & \underline{0.756} & \textbf{0.255} & 323.283 \\
& LatentDAPS & 26.036 & 0.743 & 0.385 & \underline{85.840} & 19.630 & 0.588 & 0.522 & \underline{86.092} \\
\midrule

\multirow{4}{*}{Gaussian deblurring} &
Ours       & \textbf{28.269} & \textbf{0.801} & \textbf{0.272} & \textbf{35.011} & \textbf{25.443} & \textbf{0.662} & \textbf{0.411} & \textbf{38.110} \\
& PSLD       & 22.015 & 0.503 & 0.444 & \underline{70.098} & 19.591 & 0.329 & 0.555 & 109.263 \\
& ReSample   & \underline{26.357} & 0.662 & 0.329 & 355.885 & \underline{23.530} & 0.497 & \underline{0.439} & 338.281 \\
& LatentDAPS & 25.717 & \underline{0.732} & 0.384 & 87.407 & 22.695 & \underline{0.567} & 0.549 & \underline{86.016} \\
\midrule

\multirow{4}{*}{Motion deblurring} &
Ours       & \textbf{29.959} & \textbf{0.840} & \textbf{0.244} & \textbf{36.185} & \textbf{27.119} & \textbf{0.730} & \textbf{0.348} & \textbf{38.118} \\
& PSLD       & 24.416 & 0.603 & 0.346 & \underline{70.041} & 20.761 & 0.376 & 0.518 & 98.400 \\
& ReSample   & \underline{28.744} & 0.754 & \underline{0.262} & 347.756 & \underline{24.845} & 0.579 & \underline{0.404} & 347.756 \\
& LatentDAPS & 26.646 & \underline{0.757} & 0.361 & 87.409 & 23.557 & \underline{0.592} & 0.513 & \underline{89.050} \\
\midrule

\multirow{3}{*}{Phase retrieval} &
Ours       & \textbf{28.194} & \textbf{0.802} & \textbf{0.271} & \textbf{133.941} & \textbf{20.133} & \textbf{0.483} & \textbf{0.458} & \textbf{119.078} \\
& ReSample   & \underline{24.676} & 0.606 & \underline{0.412} & 320.911 & 16.913 & 0.320 & \underline{0.608} & 319.601 \\
& LatentDAPS & 23.199 & \underline{0.692} & 0.421 & \underline{177.730} & \underline{17.067} & \underline{0.446} & 0.624 & \underline{192.787} \\
\midrule

\multirow{3}{*}{Nonlinear deblurring} &
Ours       & \textbf{29.243} & \textbf{0.836} & \underline{0.243} & \textbf{146.718} & \underline{25.889} & \textbf{0.716} & \underline{0.325} & \textbf{158.452} \\
& ReSample   & \underline{28.748} & \underline{0.797} & \textbf{0.236} & 843.212 & \textbf{26.047} & \underline{0.697} & \textbf{0.301} & 686.128 \\
& LatentDAPS & 25.152 & 0.726 & 0.387 & \underline{194.014} & 22.516 & 0.568 & 0.530 & \underline{198.074} \\
\midrule

\multirow{3}{*}{High dynamic range} &
Ours       & \textbf{26.245} & \underline{0.816} & \underline{0.279} & \textbf{134.060} & \textbf{28.885} & \textbf{0.887} & \textbf{0.130} & \textbf{117.544} \\
& ReSample   & \underline{25.038} & \textbf{0.822} & \textbf{0.239} & 291.372 & \underline{24.950} & \underline{0.783} & \underline{0.257} & 273.695 \\
& LatentDAPS & 20.789 & 0.630 & 0.512 & \underline{174.976} & 19.394 & 0.469 & 0.641 & \underline{180.929} \\

\bottomrule
\end{tabular}
\end{adjustbox}
\end{table*}

\begin{table}[t]
\caption{Diffusion steps ($T$) and GMRES iterations ($K$) used for each task in pixel and latent spaces.}
\label{tab:impl_details_TK}
\centering
\begin{tabular}{cccccc}
\toprule
 & Parameter &  Linear tasks & Phase retrieval & Nonlinear deblurring & High dynamic range \\
\midrule
 \multirow{2}{*}{Pixel} & $T$ & 50  & 250 & 50  & 250 \\
       & $K$ & 5   & 4   & 20  & 4   \\
\midrule
\multirow{2}{*}{Latent}  &  $T$ & 25  & 250 & 75  & 250 \\
       & $K$ & 20  & 5   & 20  & 5   \\
\bottomrule
\end{tabular}
\end{table}
\section{Experimental Details.}
\label{appendix:exp_detail}
\subsection{Implementation details.}
We use natural-image pre-trained diffusion priors in both pixel and latent spaces. Pixel-space priors are a FFHQ model \cite{chung2023dps} and an ImageNet model \cite{dhariwal2021diffusion}. Latent-space priors use an unconditional LDM-VQ4 model \cite{rombach2022high} for both FFHQ and ImageNet. For a fair comparison, we apply the same priors to all baselines. We follow \cite{karras2022edm} for the time-step discretization and noise schedule.
\paragraph{Datasets and metrics.}
We evaluate on FFHQ ($256\times256$) \cite{Karras_2019_CVPR} and ImageNet ($256\times256$) \cite{deng2009imagenet}, using 100 images from the validation set for each dataset. We report standard distortion and perceptual metrics, including PSNR, SSIM~\cite{wang2004ssim}, and LPIPS~\cite{Zhang_2018_CVPR}, along with qualitative reconstructions.
\paragraph{Baselines.}
We evaluate our method against representative state-of-the-art baselines capable of handling noisy measurement settings in both pixel and latent domains. For pixel-space, we compare our approach with DAPS \cite{Zhang_2025_CVPR}, SITCOM \cite{pmlr-v267-alkhouri25b}, DMPlug \cite{wang2024dmplug}, and DCDP \cite{li2024decoupled}. For latent-space, we benchmark against LatentDAPS, ReSample \cite{song2024solving}, and PSLD \cite{rout2023solving}.

All experiments were conducted on a single NVIDIA RTX 6000 Ada Generation GPU with batch size 1.
Unless stated otherwise, we set the data-consistency scale to $\sigma_n=0.01$ and the identity-damping parameter to $\lambda_{\mathrm{id}}=2.0$ for all tasks. Following the DAPS setting, we assume that the true measurement noise level is unknown at test time; therefore, while measurements are generated with a larger physical noise level (e.g., $\beta_{\mathbf{y}}=0.05$), we use $\sigma_n=0.01$ as a data-consistency calibration scale rather than as an estimate of the physical measurement noise. We set the number of diffusion steps $T$ and the number of GMRES iterations $K$ as summarized in Table~\ref{tab:impl_details_TK}.

\subsection{Baseline Details.}
\label{appendix:baselines}



\paragraph{DAPS~\cite{Zhang_2025_CVPR}}
We used the provided code in the official DAPS repository (\href{https://github.com/zhangbingliang2019/DAPS}{code}).
For both \texttt{DAPS} and \texttt{LatentDAPS}, we followed the DAPS paper setting and performed reconstruction via Langevin Dynamics.

\paragraph{SITCOM~\cite{pmlr-v267-alkhouri25b}}
We used the default configuration provided in the official SITCOM repository (\href{https://github.com/sjames40/SITCOM}{code}).

\paragraph{DMPlug~\cite{wang2024dmplug}}
We used the default configuration of DMPlug (\href{https://github.com/sun-umn/DMPlug}{code}).
Specifically, we employed DDIM with $T=3$ reverse steps and $\eta=0$ for all tasks, and optimized the latent seed using Adam with a learning rate of $0.01$.
The maximum number of optimization iterations was set to $5000$ for linear tasks and $10000$ for nonlinear tasks.
For early stopping, we adopted the same ES-WMV~\cite{wangearly} criterion as DMPlug, using a window size of $W=10$ and patience $P=100$ for linear tasks, and $W=50$ and $P=300$ for nonlinear tasks.

\paragraph{DCDP~\cite{li2024decoupled}.}
We used the official implementation (\href{https://github.com/morefre/decoupled-data-consistency-with-diffusion-purification-for-image-restoration}{code}). Following the DCDP paper, linear tasks were evaluated in the noiseless setting.
For noisy linear experiments, we followed the paper's recommendation to reduce the learning rate by a factor of $10$ and to increase the ending timestep $t_k$.
For inpainting, keeping the original learning rate performed better; under noise level $\beta_\mathbf{y}=0.05$, $t_k=100$ gave the best performance and was used.
For nonlinear tasks, we used the default configuration.
\begin{table}[!htp]
\centering
\caption{\textbf{DCDP hyperparameters for noisy linear tasks.}
SR: super-resolution, BIP: box inpainting, RIP: random inpainting, GB: Gaussian blur, MB: motion blur.}
\label{tab:dcdp_hparams}
\setlength{\tabcolsep}{5pt}
\begin{tabular}{lccccc}
\toprule
\textbf{Param (DCDP)} & \textbf{SR} & \textbf{BIP} & \textbf{RIP} & \textbf{GB} & \textbf{MB} \\
\midrule
$t_k$   & 100 & 100 & 100 & 100 & 100 \\
$\alpha$ & $10^{2}$ & $10^{3}$ & $10^{3}$ & $10^{5}$ & $10^{5}$ \\
\bottomrule
\end{tabular}
\end{table}

\paragraph{ReSample~\cite{song2024solving}}
We used the default configuration from the official ReSample repository (\href{https://github.com/soominkwon/resample}{code}).

\paragraph{PSLD~\cite{rout2023solving}}
We used the official PSLD implementation (\href{https://github.com/LituRout/PSLD}{code}).
However, the provided configuration was unstable in our setup; therefore, we tuned $(\eta,\gamma)$ based on FFHQ and applied the task-specific settings below.
\begin{table}[H]
\caption{\textbf{PSLD hyperparameters used in our experiments (FFHQ-based tuning).}
SR: super resolution 4$\times$, BIP: box inpainting, RIP: random inpainting, GB: Gaussian blurring, MB: motion blurring.}
\label{tab:psld_hparams}
\centering
\setlength{\tabcolsep}{5pt}
\begin{tabular}{lccccc}
\toprule
\textbf{Param (PSLD)} & \textbf{SR} & \textbf{BIP} & \textbf{RIP} & \textbf{GB} & \textbf{MB} \\
\midrule
$\eta$   & $1.0$ & $0.1$ & $0.1$ & $0.1$ & $0.1$ \\
$\gamma$ & $0.1$ & $10^{-3}$ & $10^{-2}$ & $10^{-3}$ & $10^{-2}$ \\
\bottomrule
\end{tabular}
\end{table}

\section{Additional Experiments}

\paragraph{Latent-space experiments.}
Table~\ref{tab:main_table_latent} summarizes latent-space inverse-problem results on FFHQ and ImageNet.
Overall, our method consistently achieves strong reconstruction quality while being markedly faster than prior latent-space baselines, indicating that the proposed acceleration does not sacrifice fidelity.
Notably, we remain robust on blur families and also handle the notoriously ill-posed phase retrieval task reliably in the latent setting.
We additionally observe particularly large gains on high dynamic range reconstruction, where our method shows a clear margin over competing approaches across datasets. These results suggest that our latent-space solver generalizes well across diverse degradations, including highly ill-conditioned regimes, while maintaining practical runtimes.

\begin{table}[t]
\centering
\caption{
We evaluate four design choices---noise-level transition (variance-preserving Langevin vs.\ ODE-style update), prior-aligned damping, curvature operator (one-sided, without \(\mathbf{J}_F^{\!\top}\), and full curvature with \(\mathbf{J}_F\)), and linear solver (GMRES vs.\ BiCG/CG)---on super-resolution and nonlinear deblurring. 
Compute is matched across variants by fixing the number of diffusion steps and the number of Krylov/solver iterations.}
\label{tab:ablation}
\setlength{\tabcolsep}{4.5pt}
\renewcommand{\arraystretch}{1.15}
\begin{tabular}{l cccc cccc}
\toprule
\multirow{2}{*}{\textbf{Variant}} &
\multicolumn{4}{c}{\textbf{Super resolution 4$\times$}} &
\multicolumn{4}{c}{\textbf{Nonlinear deblurring}} \\
\cmidrule(lr){2-9}
 &
{PSNR} & {SSIM} & {LPIPS} & {Time} &
{PSNR} & {SSIM} & {LPIPS} & {Time} \\
\midrule
Vanilla
& 22.982 & 0.390 & 0.496 & 1.696
& 22.644 & 0.532 & 0.364 & 18.294 \\

w/ODE sampling
& 28.767 & 0.813 & 0.230 & 6.346
& 28.997 & 0.818 & 0.194 & 30.106 \\

w/o damping
& 26.391 & 0.628 & 0.414 & 6.291
& 29.976 & 0.823 & 0.177 & 31.053 \\

w/o $\mathbf{J}_F^\top$
& 28.501 & 0.758 & 0.334 & 1.669
& 24.848 & 0.643 & 0.339 & 18.127 \\

w/ $\mathbf{J}_F$
& 29.798 & 0.841 & 0.227 & 31.985
& 30.420 & 0.843 & 0.170 & 132.596 \\
GMRES$\rightarrow$BiCG
& 29.806 & 0.841 & 0.228 & 11.107
& 26.796 & 0.740 & 0.284 & 74.102 \\
\midrule

Proposed
& 29.807 & 0.841 & 0.229 & 6.370
& 30.524 & 0.859 & 0.170 & 30.286\\
\bottomrule
\end{tabular}
\end{table}

\label{appendix:addtional_exp}
\paragraph{Ablation Studies.}
To justify our design choices, we conduct a targeted study of four components: (i) how we transition to the next noise level, (ii) whether prior-aligned damping is applied, (iii) the effect of the one-sided curvature approximation, and (iv) the choice of linear solver. 
For the noise-level transition, we compare our default variance-preserving Langevin propagation against an EDM-style ODE update, where no additional stochastic noise is injected and the iterate is propagated solely using the model prediction. 
To evaluate the one-sided curvature approximation, we consider three curvature operators. Our default uses the one-sided form \(\mathbf{J}_F^{\!\top} \mathbf{J}_A^{\!\top}(\mathbf{J}_A\,\cdot)\). Removing the one-sided approximation yields \(\mathbf{J}_A^{\!\top}(\mathbf{J}_A\,\cdot)\), while the full form corresponds to \(\mathbf{J}_F^{\!\top} \mathbf{J}_A^{\!\top}(\mathbf{J}_A(\mathbf{J}_F\,\cdot))\). We compare these three variants in a controlled setting. 
For solving the resulting linear systems, our default employs GMRES. As a variant, we compare against BiCG \cite{fletcher1976conjugate}, a Krylov alternative for nonsymmetric systems---relevant here because the one-sided approximation makes the system matrix \(B\) nonsymmetric. We additionally include a plain CG solver as a further baseline. 
Finally, we compare against a ``vanilla'' configuration that uses CG with \(\mathbf{J}_A^{\!\top}(\mathbf{J}_A\,\cdot)\), removes damping, and adopts the ODE update. 
For a fair comparison, we equalize compute across methods by matching both the number of solver steps and the number of diffusion steps.

\begin{figure}
\centering
  \makebox[\textwidth][c]{%
    \includegraphics[width=\linewidth]{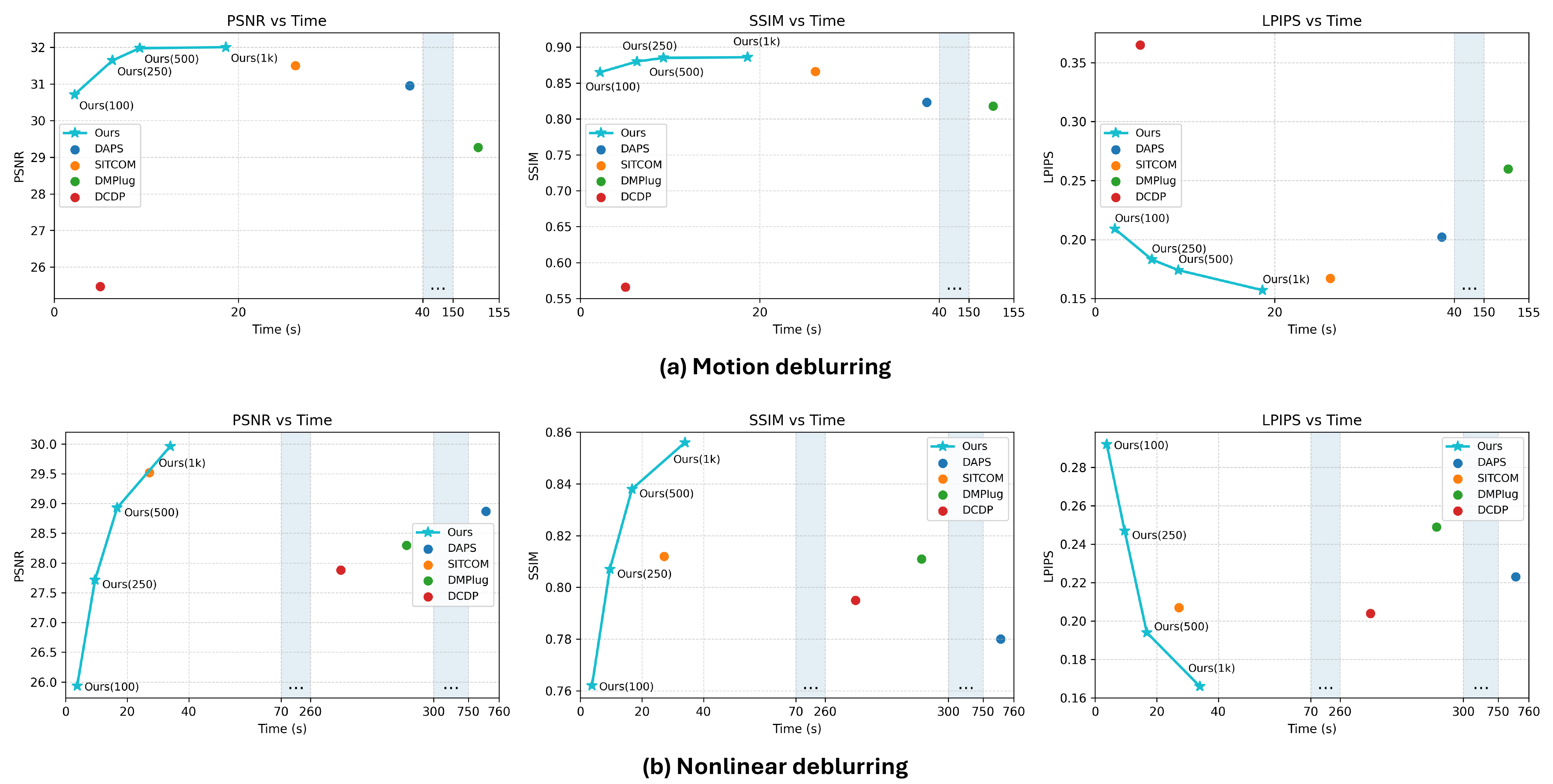}
  }
\caption{PSNR/SSIM/LPIPS versus wall-clock time for (a) motion deblurring and (b) nonlinear deblurring.
Our method traces a Pareto curve as we vary the compute budget, where the budget for ours is measured by the product of the number of diffusion time steps ($T$) and the number of GMRES iterations ($K$); the evaluated $(T,K)$ configurations are listed in Table~\ref{tab:time_plot_table}.
}
\label{fig:time_plot}
\end{figure}

Across a representative linear task (super-resolution) and a nonlinear task (nonlinear deblurring), we observe consistent trends.
First, replacing the proposed sampling transition with an ODE-style update degrades performance.
Second, removing prior-aligned damping underperforms in both settings, indicating that the damping is important beyond the specific task type.
Third, dropping the \(\mathbf{J}_F^{\!\top}\) factor reduces runtime by approximately \(3.8\times\) on super-resolution, but incurs a clear performance trade-off, most notably on nonlinear deblurring.
Conversely, using the exact curvature by including \(\mathbf{J}_F\) increases runtime by about \(5.0\times\) on super-resolution and \(4.4\times\) on nonlinear deblurring, while providing nearly identical or only marginally different performance.
These results indicate that the one-sided approximation achieves a more favorable accuracy--efficiency trade-off, preserving performance at significantly lower computational cost.

Regarding the linear solver choice, CG is fundamentally mismatched to our setting since it assumes a symmetric (typically SPD) system, whereas the one-sided curvature leads to a nonsymmetric matrix \(\mathbf{B}\).
This motivates BiCG as a more appropriate Krylov alternative for nonsymmetric systems.
However, BiCG (and related CG-style variants) can still exhibit numerical instability in practice when the effective system becomes poorly conditioned, especially on challenging nonlinear tasks such as nonlinear deblurring.
In particular, increasing the number of iterations may amplify nonnormal effects and round-off errors, causing the iterates to become unstable and the reconstruction quality to deteriorate.
These observations further support GMRES as a robust default for our nonsymmetric curvature operator.

\paragraph{Evaluation on runtime and quality.}
In Figure~\ref{fig:time_plot}, we evaluate the compute--quality trade-off on (a) motion deblur and (b) nonlinear deblur by plotting PSNR/SSIM/LPIPS against wall-clock time.
Overall, \textsc{clamp} exhibits a favorable Pareto frontier, achieving improvements across all metrics (PSNR/SSIM/LPIPS) at comparable or lower runtime.

\begin{table}[t]
    \caption{In Figure \ref{fig:time_plot}, we report the diffusion step count $T$ and the number of GMRES iterations per step $K$ used for \textsc{clamp} at four compute budgets (Steps $=T\times K\in\{100,250,500,1\text{k}\}$) on motion deblurring and nonlinear deblurring.}
    \label{tab:time_plot_table}
    \centering
    \begin{tabular}{l c cccc}
    \toprule
         \textbf{Task}&\textbf{Step}& 100& 250 & 500 & 1k  \\
         \midrule
         \multirow{2}{*}{Motion deblurring}&$T$&20&50&50&100\\
         &$K$&5&5&10&10\\
         \midrule
         \multirow{2}{*}{Nonlinear deblurring}&$T$&10&25&25&50\\
         &$K$&10&10&20&20\\
         \bottomrule
    \end{tabular}

\end{table}

\paragraph{High-Resolution Experiments (512$\times$512).}
\begin{figure}[t]
\centering
  \makebox[\textwidth][c]{%
    \includegraphics[width=0.6\linewidth]{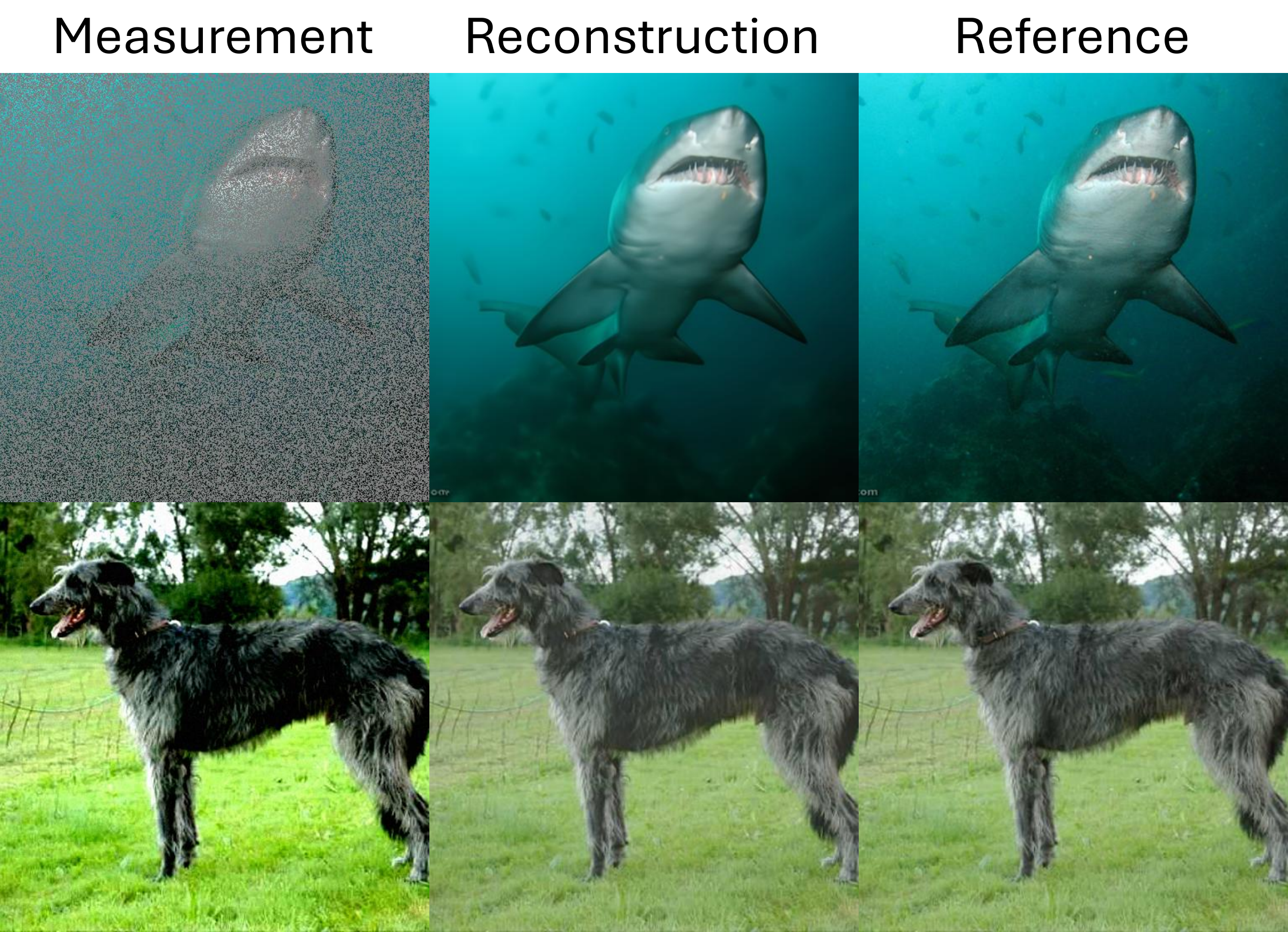}
  }
\caption{Visualizations at $512\times512$ resolution on ImageNet.
\textbf{Top:} random inpainting. \textbf{Bottom:} high dynamic range.}
\label{fig:512}
\end{figure}
\begin{table}[t]
\caption{$512\times512$ ImageNet results with a class-conditional diffusion prior (offline predicted labels; no classifier guidance).
Random Inpainting (linear) and HDR (nonlinear): PSNR/SSIM/LPIPS and run-time. Best/second-best are \textbf{bold}/\underline{underlined} (10 samples, batch size 1).}
\label{tab:high_resolution}
\centering
\begin{tabular}{ccccccccc}
    \toprule
    \multirow{2}{*}{\textbf{Method}} & \multicolumn{4}{c}{\textbf{Random Inpainting}} & \multicolumn{4}{c}{\textbf{High dynamic range}} \\
    \cmidrule(lr){2-5}\cmidrule(lr){6-9}
    & \textbf{PSNR} $\uparrow$ & SSIM $\uparrow$ & \textbf{LPIPS} $\downarrow$ & \textbf{Run-time (s)} & \textbf{PSNR} $\uparrow$ & \textbf{SSIM} $\uparrow$ & \textbf{LPIPS} $\downarrow$ & \textbf{Run-time (s)} \\
    \midrule
    Ours  & \textbf{35.646} & \textbf{0.920} & \textbf{0.212} & \textbf{40.897}  & \textbf{28.845} & \textbf{0.858} & \textbf{0.192} & \textbf{176.777} \\
    DAPS   & 31.736 & 0.761 & 0.313 & \underline{119.487}   & \underline{28.003} & \underline{0.815} & \underline{0.255} & 469.332 \\
    SITCOM & \underline{34.047} & \underline{0.868} & \underline{0.230} & 187.028  & 25.361 & 0.712 & 0.322 & \underline{254.260} \\
    \bottomrule
\end{tabular}

\end{table}
To assess the scalability of our method to higher resolutions, we conduct additional experiments at 512$\times$512 using the publicly available ImageNet class-conditional diffusion prior~\cite{dhariwal2021diffusion}. 
Since this prior requires a class label, we obtain a label for each validation image by running a fixed pretrained ImageNet classifier once offline and treating the predicted label as a constant condition thereafter.
Importantly, the classifier is not used during sampling (i.e., we do not employ classifier guidance); the sampling procedure only queries the class-conditional diffusion model with the precomputed label for each image.
Under this protocol, we compare our method against DAPS and SITCOM on one representative linear task and one representative nonlinear task, using the same 512$\times$512 diffusion prior and the same precomputed labels for all methods. Table~\ref{tab:high_resolution} shows that our method remains competitive in reconstruction quality at higher resolution while providing substantial speedups over baselines: it is 4.57$\times$ faster than SITCOM on random inpainting and 2.65$\times$ faster than DAPS on HDR, while maintaining higher PSNR/SSIM and lower LPIPS compared to competing methods. Reconstruction samples are provided in Figure \ref{fig:512}, demonstrating that our method reconstructs images close to the originals while preserving fine details and naturalness even at higher resolutions.

\paragraph{Hyperparameter analysis.}
\begin{table}[t]
\centering
\caption{Effect of the identity-damping parameter $\lambda_{\mathrm{id}}$: we sweep $\lambda_{\mathrm{id}}$ and report PSNR/SSIM/LPIPS on random inpainting and nonlinear blur.}
\label{apdix:lambda_sweep}
\setlength{\tabcolsep}{6pt}
\begin{tabular}{lcccccc}
\toprule
\multirow{2}{*}{$\lambda_{\mathrm{id}}$}
& \multicolumn{3}{c}{\textbf{Random inpainting}}
& \multicolumn{3}{c}{\textbf{Nonlinear deblurring}} \\
\cmidrule(lr){2-4}\cmidrule(lr){5-7}
& \textbf{PSNR $\uparrow$} & \textbf{SSIM $\uparrow$} &
\textbf{LPIPS $\downarrow$} & \textbf{PSNR $\uparrow$} & \textbf{SSIM $\uparrow$} &
\textbf{LPIPS $\downarrow$} \\
\midrule
0   & 33.081 & 0.897 & 0.144 & 30.012 & 0.825 & 0.175 \\
0.1 & 33.165 & 0.899 & 0.144 & 30.048 & 0.828 & 0.173 \\
2   & 33.249 & 0.901 & 0.202 & 30.524 & 0.859 & 0.170 \\
10  & 31.873 & 0.863 & 0.285 & 29.654 & 0.842 & 0.218 \\
100 & 27.482 & 0.754 & 0.415 & 26.765 & 0.776 & 0.294 \\
\bottomrule
\end{tabular}
\end{table}

We analyze the identity-damping parameter $\lambda_{\mathrm{id}}$ in the prior-aligned metric
$\mathbf{H}_t=\lambda_{\mathrm{id}}\mathbf{I}+\mathbf{u}_t\mathbf{u}_t^\top$ used by the correction system
$(\mathbf{B}_t+\alpha_t \mathbf{H}_t){\Delta}_t=-\mathbf{c}_t$.
Here $\mathbf{B}_t$ denotes the one-sided GN/Fisher curvature operator and $\alpha_t$ is the diffusion-calibrated scaling that adjusts the overall damping strength across noise levels.
The isotropic term $\lambda_{\mathrm{id}}\mathbf{I}$ provides baseline regularization in all directions (and is the only regularization on $\mathbf{u}_t^\perp$, where the rank-one term vanishes), while $\mathbf{u}_t\mathbf{u}_t^\top$ adds additional damping along the denoiser-residual direction.

Table~\ref{apdix:lambda_sweep} shows that the effect of $\lambda_{\mathrm{id}}$ is task dependent.
For random inpainting, which is intrinsically one-to-many, smaller $\lambda_{\mathrm{id}}$ yields substantially better perceptual quality (lower LPIPS), consistent with allowing more freedom in $u_t^\perp$ while maintaining measurement consistency; larger $\lambda_{\mathrm{id}}$ progressively over-regularizes the correction and degrades all metrics.
For nonlinear blur, moderate $\lambda_{\mathrm{id}}$ improves stability and achieves the best overall metrics in this sweep (we use $\lambda_{\mathrm{id}}=2$), whereas very large $\lambda_{\mathrm{id}}$ again over-regularizes the update and leads to under-correction and worse performance.

\paragraph{Severe degradation settings.}
In Figure \ref{fig:severe}, we visualize sample diversity under severe degradation on FFHQ $256{\times}256$.
We consider two highly ill-posed settings: $16{\times}$ super-resolution (top two rows) and $192{\times}192$ box inpainting (bottom two rows).
For each measurement, we draw three independent runs (Sample1--3).
Despite the measurement being heavily corrupted or largely missing, our method produces natural-looking, measurement-consistent reconstructions while exhibiting meaningful diversity across runs (e.g., facial attributes and expressions), suggesting effective sampling of a multi-modal solution space.

\begin{figure}[t]
\centering
  \makebox[\textwidth][c]{%
    \includegraphics[width=0.6\linewidth]{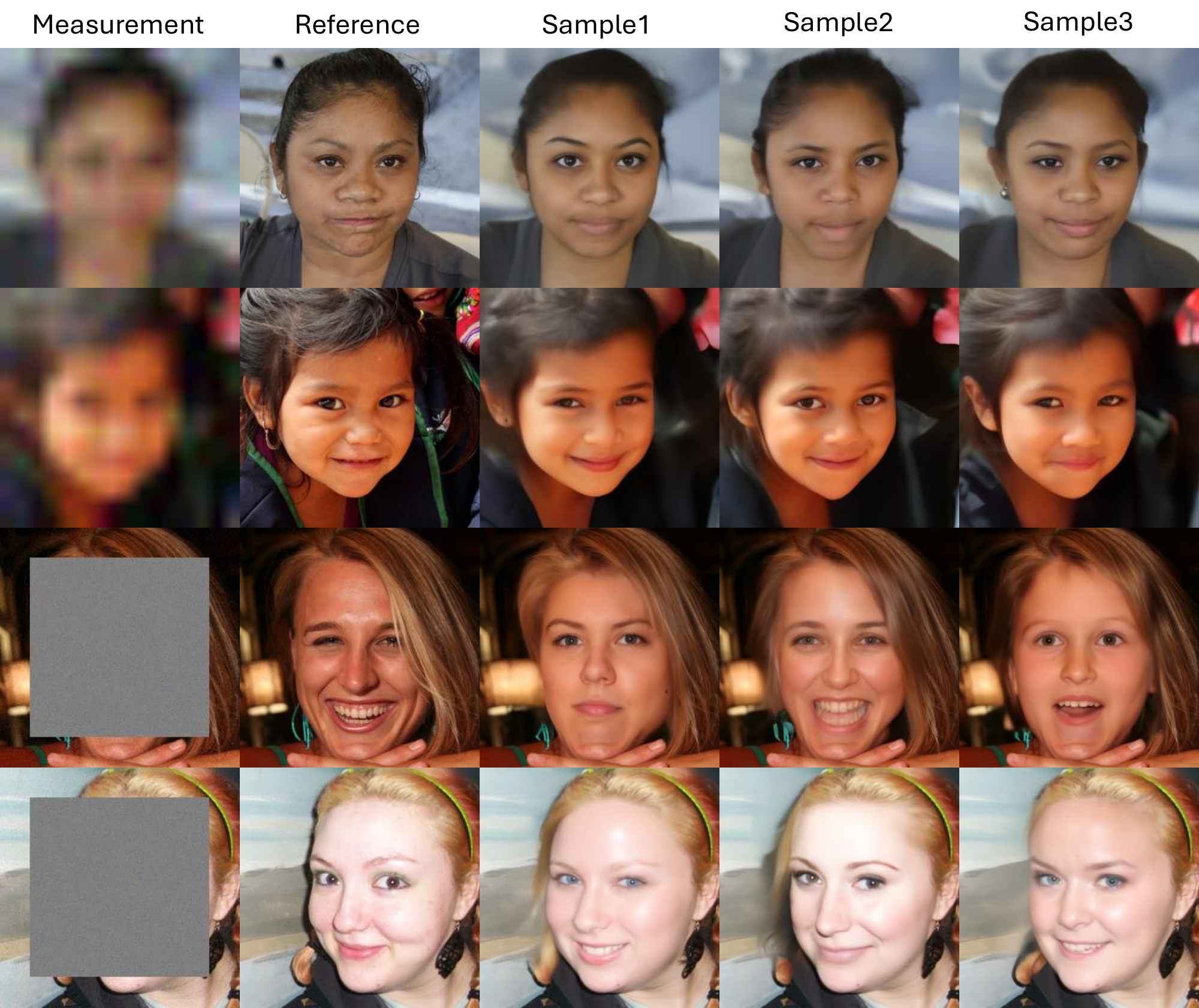}
  }
\caption{\textbf{Sample diversity under severe degradation.} \textit{Top two rows:} 16$\times$ super-resolution. \textit{Bottom two rows:} 192$\times$192 box inpainting. Our method yields diverse, measurement-consistent reconstructions by effectively sampling from a multi-modal posterior.}
\label{fig:severe}
\end{figure}

\paragraph{Various measurement noise levels.}
\begin{table}[t]
    \centering
    \tiny
    \caption{\textbf{Effect of measurement noise.}
    We evaluate robustness to different measurement noise levels on motion deblurring in pixel space and nonlinear deblurring in latent space.
    For \textbf{Gaussian} noise, we add i.i.d.\ noise to the measurement, $\mathbf{y}=A(\mathbf{x})+\mathbf{n}$ with $\mathbf{n}\sim\mathcal{N}(\mathbf{0},\beta_{\mathbf{y}}^2\mathbf{I})$, sweeping $\beta_{\mathbf{y}}\in\{0.01,\,0.05,\,0.1\}$.
    For \textbf{Poisson} noise, the Poisson level $\lambda$ controls the effective photon budget (larger $\lambda$ indicates higher expected counts and thus higher SNR).}

    \label{tab:various_noise}
    \begin{tabular}{lcccccc|lcccccc}
    \toprule
         \multirow{2}{*}{\textbf{Gaussian} ($\beta_{\mathbf{y}}$)} & \multicolumn{3}{c}{\textbf{Motion deblurring (pixel)}} & \multicolumn{3}{c|}{\textbf{Nonlinear deblurring (latent)}}  & \multirow{2}{*}{\textbf{Poisson} ($\lambda$)} &  \multicolumn{3}{c}{\textbf{Motion deblurring (pixel)}} & \multicolumn{3}{c}{\textbf{Nonlinear deblurring (latent)}} \\
             \cmidrule(lr){2-7}\cmidrule(lr){9-14}
         & \textbf{PSNR $\uparrow$} & \textbf{SSIM $\uparrow$} &
\textbf{LPIPS $\downarrow$} & \textbf{PSNR $\uparrow$} & \textbf{SSIM $\uparrow$} &
\textbf{LPIPS $\downarrow$} & & \textbf{PSNR $\uparrow$} & \textbf{SSIM $\uparrow$} &
\textbf{LPIPS $\downarrow$} & \textbf{PSNR $\uparrow$} & \textbf{SSIM $\uparrow$} &
\textbf{LPIPS $\downarrow$} \\
         \midrule
         0.01 & 32.217 & 0.885& 0.200 & 30.201 & 0.848 & 0.259 & 3.0 & 31.945 & 0.881 & 0.195 & 29.737 & 0.841 & 0.254 \\
         0.05 (base) &  31.913 & 0.880 & 0.194 & 29.793 & 0.842 & 0.252 & 1.0 & 31.210 & 0.867 & 0.193 & 28.820 & 0.822 & 0.250 \\
         0.1 &   30.647 & 0.853 & 0.196 & 28.108 & 0.799 & 0.251 & 0.5 & 29.724 & 0.823 & 0.227 & 27.241 & 0.766 & 0.295 \\
    \bottomrule
    \end{tabular}
\end{table}
We further test robustness by varying only the measurement noise model/level while keeping the forward operator, compute budget, and data-consistency scale $\sigma_n$ fixed.
For Gaussian corruption, we add i.i.d.\ additive noise in the measurement domain with physical standard deviation $\beta_{\mathbf{y}}\in\{0.01,\,0.05,\,0.1\}$, where $\beta_{\mathbf{y}}=0.05$ is used as our default setting.
For Poisson corruption, we follow the Poisson simulation protocol of DPS \cite{chung2023dps}: measurements are interpreted as photon counts proportional to discretized intensity, noisy counts are sampled from a Poisson distribution, and the result is rescaled back to the normalized range.
Here, the reported Poisson level $\lambda\in\{3.0,\,1.0,\,0.5\}$ acts as a scale factor (photon-rate / gain) that modulates the expected counts, producing signal-dependent shot noise.

Across both noise families, our method remains stable without qualitative degradation and shows graceful performance decay as the noise increases, indicating that the proposed update is not overly sensitive to the measurement noise level and is reliable under a broad range of measurement conditions.

\paragraph{Normalized residual check.}
A core challenge in training-free diffusion posterior sampling is to ensure that the generated samples are not only visually plausible
but also statistically consistent with the assumed measurement model.
To this end, we evaluate a simple yet informative calibration diagnostic in measurement space.

We define the noise-normalized residual statistic as
\begin{equation}
  r_s \;:=\; \frac{1}{m}\,\frac{\| \mathbf{y} - A(\mathbf{x})\|_2^2}{\beta_{\mathbf{y}}^{2}}
  \label{eq:rs}
\end{equation}
where \(m\) is the number of measurement entries. It measures the average squared misfit in units of the physical measurement-noise standard deviation $\beta_{\mathbf{y}}$
(assuming i.i.d.\ Gaussian noise across measurements).

If samples are well-calibrated with respect to the likelihood under the measurement noise level \(\beta_\mathbf{y}\), then
\(\|\mathbf{y}-A(\mathbf{x})\|_2^2/\beta_{\mathbf{y}}^2\) is on the order of a \(\chi^2_m\) random variable, implying that \(r_s\) should concentrate around \(1\).
Deviations reveal systematic mis-calibration:
\(r_s \gg 1\) indicates under-enforcing measurement consistency (insufficient likelihood correction),
whereas \(r_s \ll 1\) indicates over-enforcing data consistency (overconfident and potentially over-smoothed samples).

Figure~\ref{fig:rs_histo} reports the empirical distribution of \(r_s\) for the box inpainting operator,
and Table~\ref{tab:nom_res} summarizes the mean \(r_s\) across tasks on FFHQ dataset.
\textsc{clamp} yields residual distributions most tightly centered near the ideal value \(1\) and achieves mean \(r_s\) closest to \(1\) on the majority of tasks,
providing evidence that \textsc{clamp} produces samples that are better calibrated in measurement space under the same physical measurement-noise level.
While this diagnostic does not fully characterize posterior uncertainty, it serves as a practical sanity check that complements reconstruction metrics by directly probing sample-level consistency with the observation model.

\begin{figure}[t]
\centering
  \makebox[\textwidth][c]{%
    \includegraphics[width=\linewidth]{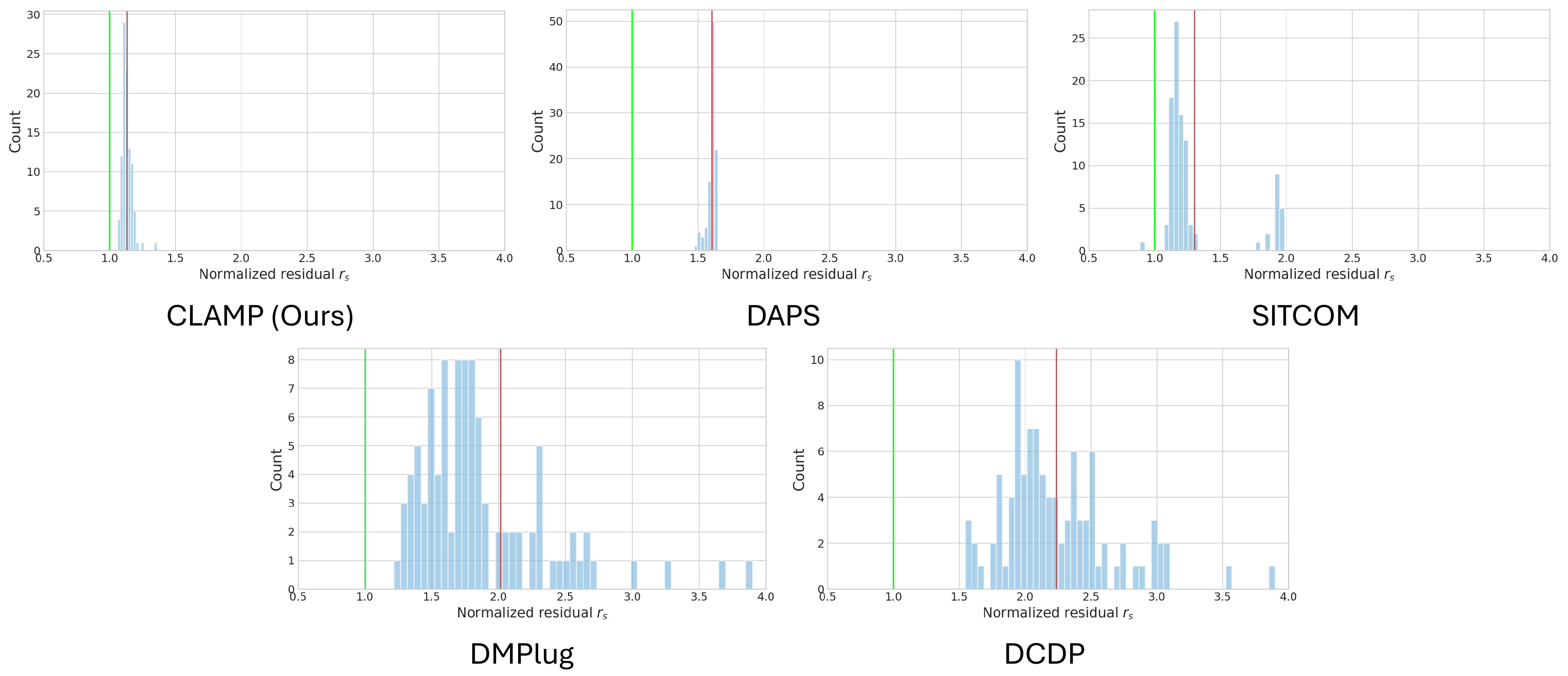}
  }
  \caption{\textbf{Measurement-space calibration on Box Inpainting.}
  Histogram of the normalized measurement residual
  \(
    r_s := \|\mathbf{y} - A(\mathbf{x})\|_2^2 / (\beta_\mathbf{y}^2\, m)
  \)
  for the \emph{box inpainting} operator, where \(\mathbf{y}\) is the masked observation, \(A(\cdot)\) applies the same box mask,
  $\beta_\mathbf{y}$ is the measurement-noise standard deviation, and \(m\) is the number of measurement entries.
  The red line denotes the empirical mean and the green line indicates the ideal value \(r_s=1\).}
\label{fig:rs_histo}
\end{figure}
\begin{table}[!htp]
\small
\centering
\caption{\textbf{Measurement-space calibration (normalized residual).} We report the mean measurement residual normalized by the measurement noise level for posterior samples across inverse problems. Values closer to $1$ indicate better calibration/coverage in measurement space; systematic deviations imply over- or under-enforcing data consistency. Columns correspond to different operators/tasks (SR, BIP, RIP, GB, MB, PR, NB, HDR).}
\label{tab:nom_res}
\setlength{\tabcolsep}{5pt}
\begin{tabular}{lcccccccc}
\toprule
\textbf{Method} & \textbf{SR} & \textbf{BIP} & \textbf{RIP} & \textbf{GB} & \textbf{MB} & \textbf{PR} & \textbf{NB} & \textbf{HDR} \\
\midrule
Ours   & \underline{1.272} & \textbf{1.131} & \textbf{1.078} & \textbf{1.030} & \textbf{1.052} & \textbf{1.045} & \textbf{1.118} & \textbf{1.412} \\
DAPS   & 1.361 & 1.605 & 1.285 & \underline{1.031} & 1.071 & \underline{1.054} & 1.290 & 2.391 \\
SITCOM & 1.364 & \underline{1.302} & \underline{1.114} & 1.041 & 1.059 & 1.056 & 1.243 & \underline{1.528} \\
DMPlug & \textbf{1.117} & 2.014 & 1.344 & 1.072 & 1.216 &   -  & \underline{1.190}  & 3.651 \\
DCDP   & 1.676 & 2.237 & 1.955 & 1.174 & 1.760  & 1.082 & 1.528 &  -    \\
\bottomrule
\end{tabular}
\end{table}

\paragraph{Comparison for linear diffusion solvers.}
\begin{table*}[t]
\centering

\begin{minipage}[t]{0.64\textwidth}
\centering
\caption{
Comparison with specialized diffusion solvers on FFHQ linear pixel-space inverse problems.
We report average PSNR/SSIM, LPIPS, and wall-clock runtime under the same noisy-measurement protocol. 
\textsc{clamp} uses the task-specific settings in Table~14 to provide a comparable runtime--quality trade-off against specialized linear solvers.
}
\label{tab:linear_solvers}
\small
\begin{tabular}{cccccc}
\toprule
\textbf{Task} & \textbf{Method} &
\textbf{PSNR $\uparrow$} & \textbf{SSIM $\uparrow$} &
\textbf{LPIPS $\downarrow$} & \textbf{Run-time (s)} \\
\midrule

\multirow{3}{*}{Super resolution 4$\times$} &
Ours       & \textbf{28.37} & \textbf{0.813} & \textbf{0.251} & \underline{1.7} \\
& DDRM     & 28.01 & 0.792 & \underline{0.257}& \textbf{0.5} \\
& DDPG     & \underline{28.22} & \underline{0.809} & 0.265 & 1.9 \\
\midrule

\multirow{4}{*}{Box inpainting} &
Ours       & \textbf{25.12} &  \textbf{0.853} & \textbf{0.173} & \underline{1.7} \\
& DDRM     & 22.31 &  0.800 & 0.206 & \textbf{0.6}\\
& DDNM     & 24.05 &  \underline{0.849} & \underline{0.175} & \underline{1.7} \\
& DDPG     & \underline{24.88} &  0.820 & 0.229 & 1.9 \\
\midrule

\multirow{4}{*}{Random inpainting} &
Ours       & \textbf{31.46} & \textbf{0.884} & \textbf{0.186} & \underline{1.1} \\
& DDRM     & 25.79 & 0.760 & 0.268 & \textbf{0.5} \\
& DDNM     & 29.80 & 0.861 & \underline{0.192} & 1.7 \\
& DDPG     & \underline{30.43} & \underline{0.870} & 0.225 & 1.9 \\
\midrule

\multirow{3}{*}{Gaussian deblurring} &
Ours       & \underline{29.96} & \textbf{0.845} & \underline{0.221}& 2.5 \\
& DDRM     & \textbf{30.08} & 0.836 & 0.228 & \textbf{0.5} \\
& DDPG     & 29.75 & \underline{0.841} & \textbf{0.203} & 1.9 \\
\midrule

\multirow{2}{*}{Motion deblurring} &
Ours       & \underline{27.89} & \textbf{0.802} & \underline{0.255} & \underline{4.2} \\
& DDPG     & \textbf{27.98} & \textbf{0.802} & \textbf{0.236} & \textbf{2.6} \\
\bottomrule
\end{tabular}
\end{minipage}
\hfill
\begin{minipage}[t]{0.34\textwidth}
\vspace{5em}
\centering
\caption{
\textsc{clamp} settings used in Table~13.
$T$ is the number of diffusion steps, $K$ is the number of GMRES iterations per step, and $\lambda_{\mathrm{id}}$ is the identity-damping coefficient.
}
\label{tab:linear_clamp_settings}
\begin{tabular}{c|ccc}
\toprule
\textbf{Task} & $T$ & $K$ & $\lambda_\mathrm{id}$ \\
\midrule
Super resolution 4$\times$ & 25 & 2 & 3.0 \\
Box inpainting             & 15 & 5 & 2.0 \\
Random inpainting          & 12 & 4 & 2.0 \\
Gaussian deblurring        & 25 & 4 & 1.0 \\
Motion deblurring          & 34 & 6 & 2.0 \\
\bottomrule
\end{tabular}
\end{minipage}

\end{table*}

We additionally compare \textsc{clamp} with diffusion solvers specialized for linear pixel-space inverse problems, including DDRM, DDNM, and DDPG, on FFHQ linear tasks. 
For \textsc{clamp}, we use the task-specific compute settings reported in Table~\ref{tab:linear_clamp_settings}, with relatively small diffusion-step and Krylov budgets chosen to yield a runtime range comparable to these specialized solvers.

As shown in Table~\ref{tab:linear_solvers}, \textsc{clamp} is competitive across the linear tasks. 
On super-resolution and both inpainting tasks, \textsc{clamp} achieves the best reconstruction quality among the compared methods, improving PSNR, SSIM, and LPIPS while maintaining comparable runtime. 
For example, \textsc{clamp} obtains the strongest scores on box inpainting and random inpainting, where it outperforms DDRM, DDNM, and DDPG in all three quality metrics. 
On Gaussian deblurring, the result is mixed: DDRM attains slightly higher PSNR and DDPG obtains lower LPIPS, while \textsc{clamp} gives the best SSIM and remains close in the other metrics. 
On motion deblurring, DDPG is faster and slightly better in PSNR/LPIPS, while \textsc{clamp} matches its SSIM.

Overall, Table~\ref{tab:linear_solvers} shows that modest task-specific compute tuning can make \textsc{clamp} competitive with strong linear-specialized diffusion solvers. 
At the same time, the comparison also clarifies the scope of our claim: \textsc{clamp} is not intended to be the fastest or best solver for every linear pixel-space problem.
Its main advantage is that the same denoiser-pullback, curvature-guided correction applies beyond this specialized setting, including nonlinear tasks and latent-prior inverse problems, where SVD-based or pixel-space preconditioned linear solvers are not directly applicable.

\paragraph{Large-update frozen-step diagnostic.}
Equation~\ref{eq:linearize_rt} relies on a local residual linearization. The relevant failure mode
is therefore not the magnitude of $\|{\Delta}_t\|_2$ alone, but whether the
realized CLAMP update remains close to the best step for the same frozen local
problem in the regime where the linearization is most stressed. To test this
directly, we rank solver-produced updates by
\[
\eta_t =
\frac{\|\mathbf{J}_t{\Delta}_t\|_2}
{\|\mathbf{r}_t\|_2 + \varepsilon},
\]
keep the top \(25\%\), and compare the actual CLAMP step ${\Delta}_t$ to
${\Delta}_t^\star$, the optimizer of the exact frozen local subproblem in the
same GMRES Krylov subspace. This comparison includes both the residual
linearization and the one-sided curvature approximation, while isolating
local-model mismatch from solver-budget effects.

We report the relative objective gap
\[
\mathrm{objgap}_t =
\frac{
\left|\mathcal{E}^{\mathrm{frz}}_t({\Delta}_t)
-
\mathcal{E}^{\mathrm{frz}}_t({\Delta}_t^\star)\right|
}{
\left|\mathcal{E}^{\mathrm{frz}}_t(\mathbf{0})
-
\mathcal{E}^{\mathrm{frz}}_t({\Delta}_t^\star)\right| + \varepsilon
},
\]
the cosine alignment
\[
\mathrm{align}_t =
\frac{
\langle {\Delta}_t, {\Delta}_t^\star \rangle
}{
\|{\Delta}_t\|_2
\|{\Delta}_t^\star\|_2 + \varepsilon
},
\]
and the residual linearization error
\[
\mathrm{modelerr}_t =
\frac{
\left\|
\mathbf{r}_t(\mathbf{x}_t+{\Delta}_t)
-
\left(\mathbf{r}_t + \mathbf{J}_t{\Delta}_t\right)
\right\|_2
}{
\|\mathbf{J}_t{\Delta}_t\|_2 + \varepsilon
}.
\]
Here
\[
\mathcal{E}^{\mathrm{frz}}_t({\Delta})
:=
\mathcal{E}_t(\mathbf{x}_t+{\Delta})
\]
denotes the frozen local objective evaluated at
$\mathbf{x}_t + {\Delta}$.

\begin{table}[t]
\centering
\small
\setlength{\tabcolsep}{5pt}
\caption{
Top-25\% large-update frozen-step diagnostic.
Steps are ranked by
$\eta_t =
\|\mathbf{J}_t{\Delta}_t\|_2/(\|\mathbf{r}_t\|_2+\varepsilon)$.
The CLAMP step ${\Delta}_t$ is compared with
${\Delta}_t^\star$, the optimizer of the exact frozen local
subproblem restricted to the same GMRES Krylov subspace.
}
\label{tab:large_update_frozen_step}
\begin{tabular}{lcccc}
\toprule
\textbf{Task}
& \(\mathrm{med.\;}\eta_t\)
& \(\mathrm{objgap}_t\)
& \(\mathrm{align}_t\)
& \(\mathrm{modelerr}_t\) \\
\midrule
Gaussian deblurring  & 0.66 & 0.14 & 0.92 & 0.28 \\
Nonlinear deblurring & 0.63 & 0.21 & 0.83 & 0.36 \\
\bottomrule
\end{tabular}
\end{table}

Table~\ref{tab:large_update_frozen_step} shows that, at similar realized update
sizes, Gaussian deblurring remains reasonably close to the exact frozen-local
optimum, with \(\mathrm{objgap}=0.14\) and \(\mathrm{align}=0.92\). Nonlinear
deblurring is clearly harder, with a larger objective gap, lower alignment, and
larger model error. Thus, sensitivity is not explained by step size alone and is
operator dependent. Nevertheless, even for nonlinear deblurring, the realized
step remains substantially aligned with the exact frozen-local optimizer. This
suggests that damping and repeated re-linearization keep CLAMP updates in a
reliable operating regime.

\paragraph{Direction of prior-aligned damping.}
We examine the direction used in the rank-one damping metric. Recall that
\[
\mathbf{d}_t = \mathbf{x}_t - \hat{\mathbf{x}}_{0|t},
\qquad
\mathbf{u}_t =
\frac{\mathbf{d}_t}{\|\mathbf{d}_t\|_2},
\]
and CLAMP uses
\[
\mathbf{H}_t = \lambda_{\mathrm{id}} \mathbf{I} + \mathbf{u}_t\mathbf{u}_t^\top .
\]
The vector $\mathbf{u}_t$ spans the denoiser-residual direction between the current
diffusion state and the denoised prediction. Extra damping along this direction
limits updates in the subspace most directly associated with the local prior
correction, while orthogonal directions remain controlled by the isotropic
floor.

To test whether the benefit is specific to this direction, we compare $\mathbf{u}_t\mathbf{u}_t^\top$ with an orthogonal rank-one metric
$\mathbf{q}_t\mathbf{q}_t^\top$, where
$\mathbf{q}_t^\top\mathbf{u}_t = 0$, and a random rank-one metric
$\boldsymbol{\rho}_t\boldsymbol{\rho}_t^\top$, where $\boldsymbol{\rho}_t$ is a
random unit vector.

\begin{table}[t]
\centering
\small
\setlength{\tabcolsep}{6pt}
\caption{Prior-aligned damping on latent phase retrieval.
We report averages over 10 samples. Here
$\mathbf{q}_t^\top\mathbf{u}_t = 0$, and $\boldsymbol{\rho}_t$ is a random unit
vector. Higher PSNR/SSIM and lower LPIPS are better.}
\label{tab:prior_aligned_damping}
\begin{tabular}{lccc}
\toprule
\textbf{Variant} & \textbf{PSNR $\uparrow$} & \textbf{SSIM $\uparrow$} &
\textbf{LPIPS $\downarrow$} \\
\midrule
$\mathbf{u}_t\mathbf{u}_t^\top$ & 30.74 & \textbf{0.858} & \textbf{0.241} \\
$\mathbf{q}_t\mathbf{q}_t^\top$ & 28.93 & 0.812 & 0.270 \\
$\boldsymbol{\rho}_t\boldsymbol{\rho}_t^\top$ & 28.92 & 0.814 & 0.270 \\
$\lambda_{\mathrm{id}} \mathbf{I} + \mathbf{u}_t\mathbf{u}_t^\top$ & \textbf{30.77} & 0.857 & 0.243 \\
$\lambda_{\mathrm{id}} \mathbf{I} + \mathbf{q}_t\mathbf{q}_t^\top$ & 28.95 & 0.814 & 0.271 \\
$\lambda_{\mathrm{id}} \mathbf{I} + \boldsymbol{\rho}_t\boldsymbol{\rho}_t^\top$ & 28.96 & 0.811 & 0.271 \\
\bottomrule
\end{tabular}
\end{table}

Table~\ref{tab:prior_aligned_damping} shows that the benefit is specific to the
prior-aligned direction. Orthogonal and random directions degrade performance
markedly, whereas the aligned rank-one metric remains competitive. The default
metric $\lambda_{\mathrm{id}}\mathbf{I} + \mathbf{u}_t\mathbf{u}_t^\top$ gives the best PSNR overall and
retains strong SSIM and LPIPS. The isotropic floor improves robustness, while
the directional gain comes from alignment with the denoiser-residual
direction.

\paragraph{Curvature variants and Jacobian-action costs.}
We compare curvature variants to justify the one-sided approximation
used by CLAMP. The exact denoiser-pullback Gauss--Newton curvature contains both
the left denoiser pullback and the right denoiser forward action,
\[
    \mathbf{J}_F^\top \mathbf{J}_A^\top \mathbf{J}_A\mathbf{J}_F .
\]
CLAMP keeps the left factor \(\mathbf{J}_F^\top\), which maps curvature information back
to diffusion-state coordinates, but drops the rightmost \(\mathbf{J}_F\) in the curvature
action, yielding the one-sided operator
\[
    \mathbf{J}_F^\top \mathbf{J}_A^\top \mathbf{J}_A .
\]
All variants in Table~\ref{tab:curvature_variants} use the same transition rule, damping, and Krylov budget.

\begin{table}[t]
\centering
\small
\setlength{\tabcolsep}{5pt}
\caption{
Curvature variants on super resolution $4\times$ and nonlinear
deblurring. We report PSNR/SSIM/LPIPS and wall-clock runtime in seconds.
}
\label{tab:curvature_variants}
\begin{tabular}{lcccccccc}
\toprule
\multirow{2}{*}{\textbf{Variant}}
& \multicolumn{4}{c}{\textbf{Super resolution 4$\times$}}
& \multicolumn{4}{c}{\textbf{Nonlinear deblurring}} \\
\cmidrule(lr){2-9}
& \textbf{PSNR $\uparrow$} & \textbf{SSIM $\uparrow$} & \textbf{LPIPS $\downarrow$} & \textbf{Run-time (s)}
& \textbf{PSNR $\uparrow$} & \textbf{SSIM $\uparrow$} & \textbf{LPIPS $\downarrow$} & \textbf{Run-time (s)} \\
\midrule
\(\mathbf{J}_A^\top \mathbf{J}_A\)
& 28.50 & 0.758 & 0.334 & 1.7
& 24.85 & 0.634 & 0.339 & 18.1 \\

\(\mathbf{J}_A^\top \mathbf{J}_A\mathbf{J}_F\)
& 28.18 & 0.737 & 0.356 & 28.6
& 22.06 & 0.342 & 0.537 & 116.9 \\

\(\mathbf{J}_F^\top \mathbf{J}_A^\top \mathbf{J}_A\)
& 29.81 & 0.841 & 0.229 & 6.4
& 30.52 & 0.859 & 0.170 & 30.3 \\

\(\mathbf{J}_F^\top \mathbf{J}_A^\top \mathbf{J}_A\mathbf{J}_F\)
& 29.81 & 0.841 & 0.227 & 32.0
& 30.42 & 0.843 & 0.170 & 132.6 \\
\bottomrule
\end{tabular}
\end{table}

\begin{table}[t]
\centering
\small
\setlength{\tabcolsep}{7pt}
\caption{Jacobian primitive costs in milliseconds.
The forward denoiser JVP $\mathbf{J}_F\mathbf{v}$ is the dominant extra cost required by
the full curvature operator.}
\label{tab:jacobian_primitive_costs}
\begin{tabular}{lcc}
\toprule
\textbf{Primitive} & \textbf{Super resolution 4$\times$} & \textbf{Nonlinear deblurring} \\
\midrule
$\mathbf{J}_A\mathbf{v}$        & 1.2   & 13.2 \\
$\mathbf{J}_A^\top\mathbf{v}$ & 1.2   & 5.2  \\
$\mathbf{J}_F\mathbf{v}$        & 112.9 & 108.5 \\
$\mathbf{J}_F^\top\mathbf{v}$ & 15.9  & 15.8 \\
\bottomrule
\end{tabular}
\end{table}

For efficiency, the dominant extra cost in the full curvature is the forward
denoiser JVP $\mathbf{J}_F\mathbf{v}$. Table~\ref{tab:jacobian_primitive_costs} shows that $\mathbf{J}_F\mathbf{v}$ costs
about 113 ms on super resolution and 109 ms on nonlinear deblurring, whereas
$\mathbf{J}_F^\top\mathbf{v}$ costs about 16 ms and the forward-operator Jacobian
actions, $\mathbf{J}_A\mathbf{v}$ and $\mathbf{J}_A^\top\mathbf{v}$, are much cheaper. Table~\ref{tab:curvature_variants}
shows that paying for $\mathbf{J}_F\mathbf{v}$ does not improve quality: the full
curvature changes PSNR little but increases runtime from 6.4 s to 32.0 s on
super resolution and from 30.3 s to 132.6 s on nonlinear deblurring. In
contrast, dropping the left factor $\mathbf{J}_F^\top$ loses the pullback that maps
curvature information back to diffusion-state coordinates and substantially
hurts performance, especially on nonlinear deblurring. Therefore, retaining
$\mathbf{J}_F^\top$ while omitting the rightmost $\mathbf{J}_F$ preserves the essential pullback
geometry and avoids the dominant computational bottleneck, which supports the
one-sided curvature operator as the best accuracy--efficiency trade-off.

\section{Qualitative MRI Results and Implementation Details}
\label{appendix:mri}

This section provides supplementary implementation details and qualitative comparisons for accelerated multi-coil MRI reconstruction. We adopt the standard linear forward model
\[
\mathbf{y}=\mathbf{A}\mathbf{x}_0+\mathbf{n},
\qquad
\mathbf{A} = \mathbf{M}\mathbf{F}\mathbf{S},
\]
where $\mathbf{M}$ is a $k$-space subsampling mask, $\mathbf{F}$ is the 2D Fourier transform,
and $\mathbf{S}$ denotes the coil-sensitivity operator applied coil-wise, with
measurements stacked across coils. For subsampling, we use Poisson-disc masks
at the target acceleration factors.

We conduct experiments on the SKM-TEA dataset~\cite{Desai2022SKMTEAAD}. We estimate sensitivity maps $S$ using JSENSE and use SENSE for coil combination as well as to generate fully-sampled reference reconstructions from fully-sampled data. We compare against baselines, including DPS~\cite{chung2023dps}, DAPS~\cite{Zhang_2025_CVPR}, DDS~\cite{Chung2023DecomposedDS}, and Score-Med~\cite{song2022inverse_med}.

Our diffusion-based prior is built upon a U-Net backbone consisting of approximately 167 million parameters, integrated within the EDM framework. The model was trained for 200,000 iterations using the RAdam~\cite{liu2019radam} optimizer with a learning rate of $1\times10^{-4}$ and a batch size of 4 on two NVIDIA RTX 4090 GPUs. The model processes complex-valued images as two-channel (real/imaginary) inputs at a resolution of 512 $\times$ 512.

For testing, we evaluate the model on the SKM-TEA test split, which comprises 39 3D volumes. From these volumes, we randomly sample 100 slices to conduct a robust comparative analysis of the reconstruction performance. During inference, we set the number of diffusion steps to $T=25$, the number of GMRES iterations to $K=10$, and the identity-damping parameter to $\lambda_{\mathrm{id}}=1.0$. To evaluate reconstruction fidelity, we measure PSNR and SSIM on magnitude images. These metrics are calculated per slice and averaged across the sampled test set to yield the quantitative scores. Table~\ref{tab:mri_table} presents the quantitative results and Figure~\ref{fig:mri_qual} shows qualitative comparisons including reconstructed magnitude images and zoomed-in crops of anatomical structures.

\begin{figure*}[t]
\centering

\setlength\tabcolsep{1pt}
\resizebox{\textwidth}{!}{
\begin{tabular}{cccccc}
\includegraphics[scale=0.095]{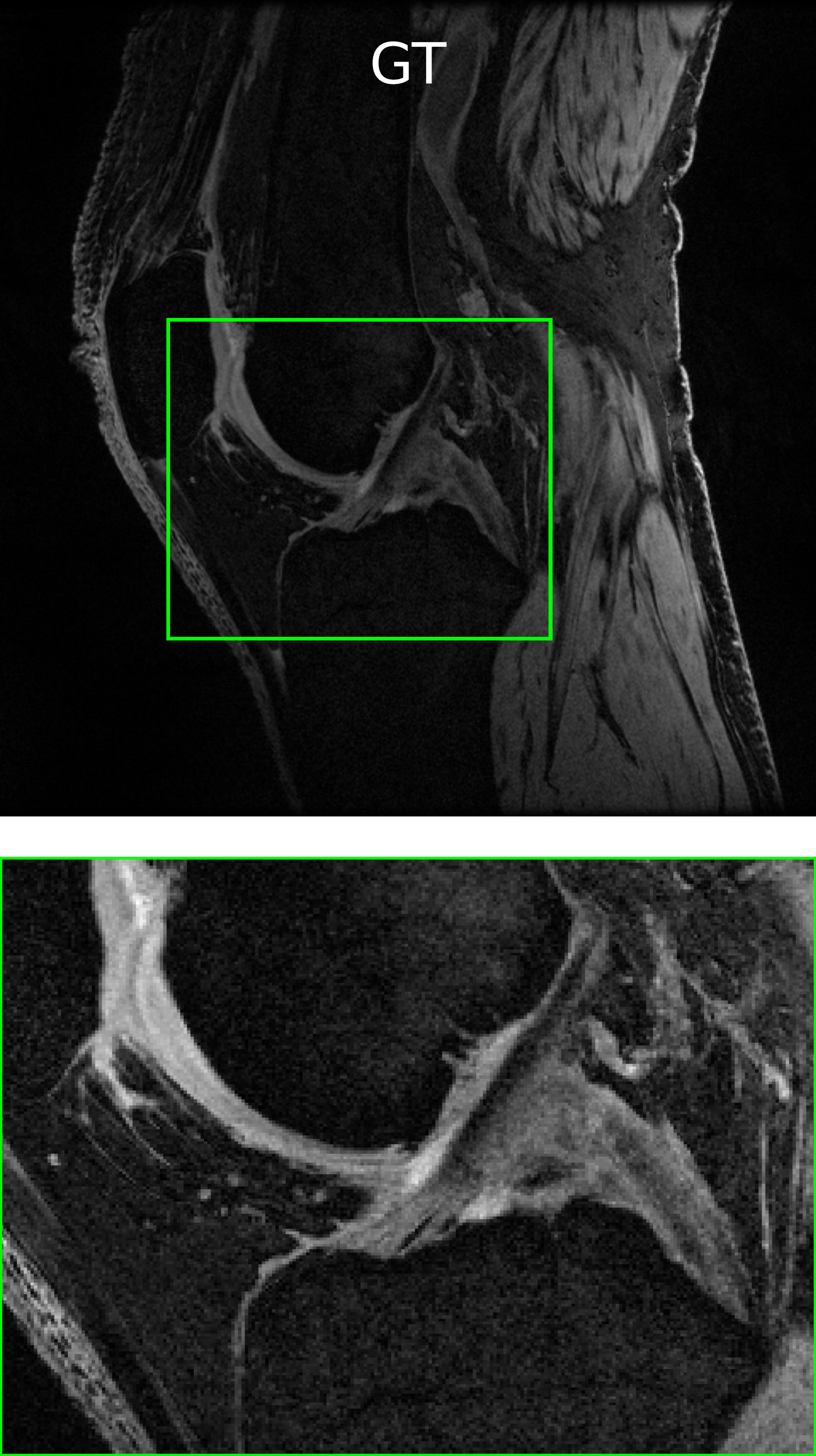} &
\includegraphics[scale=0.095]{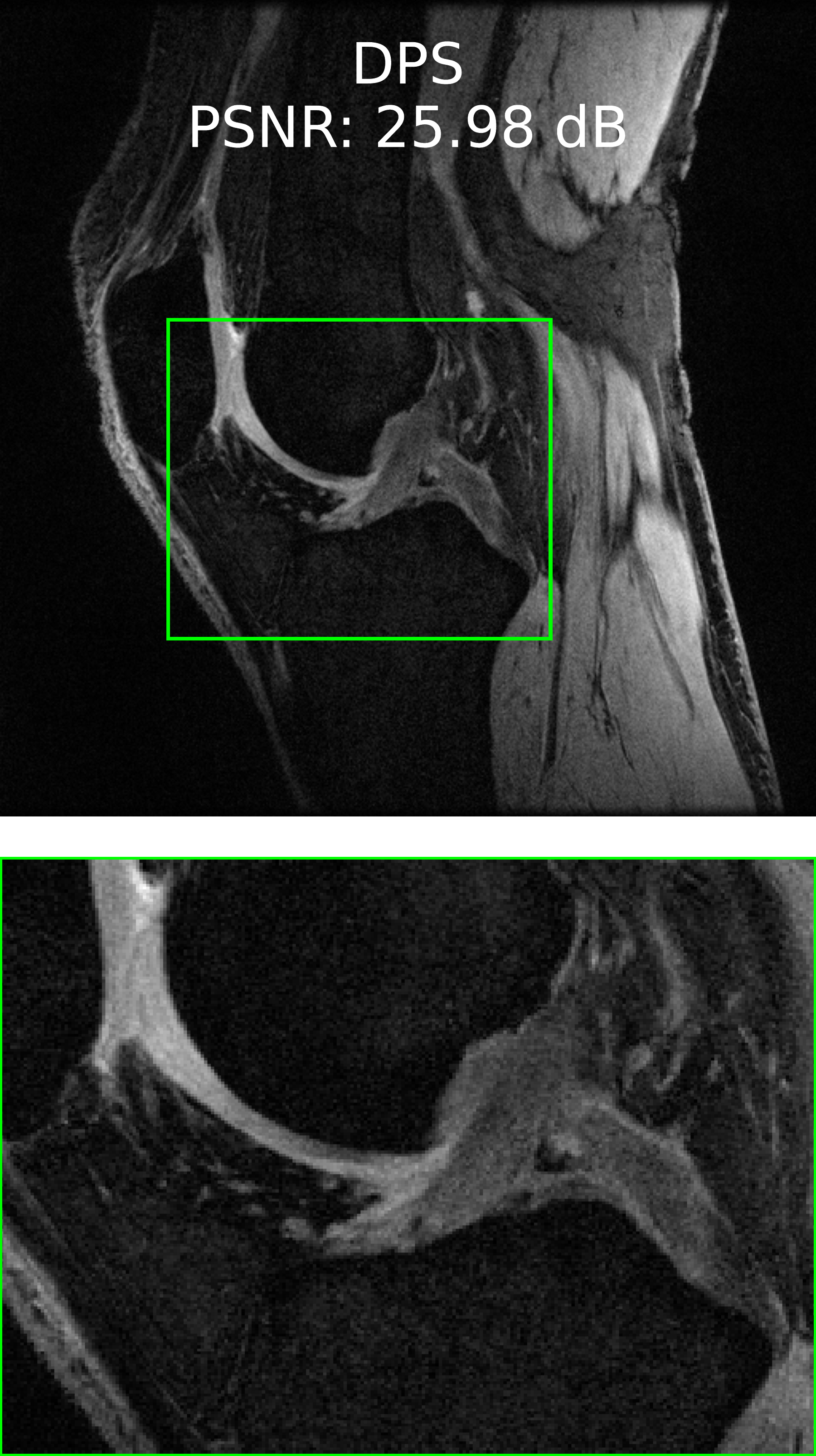} &
\includegraphics[scale=0.095]{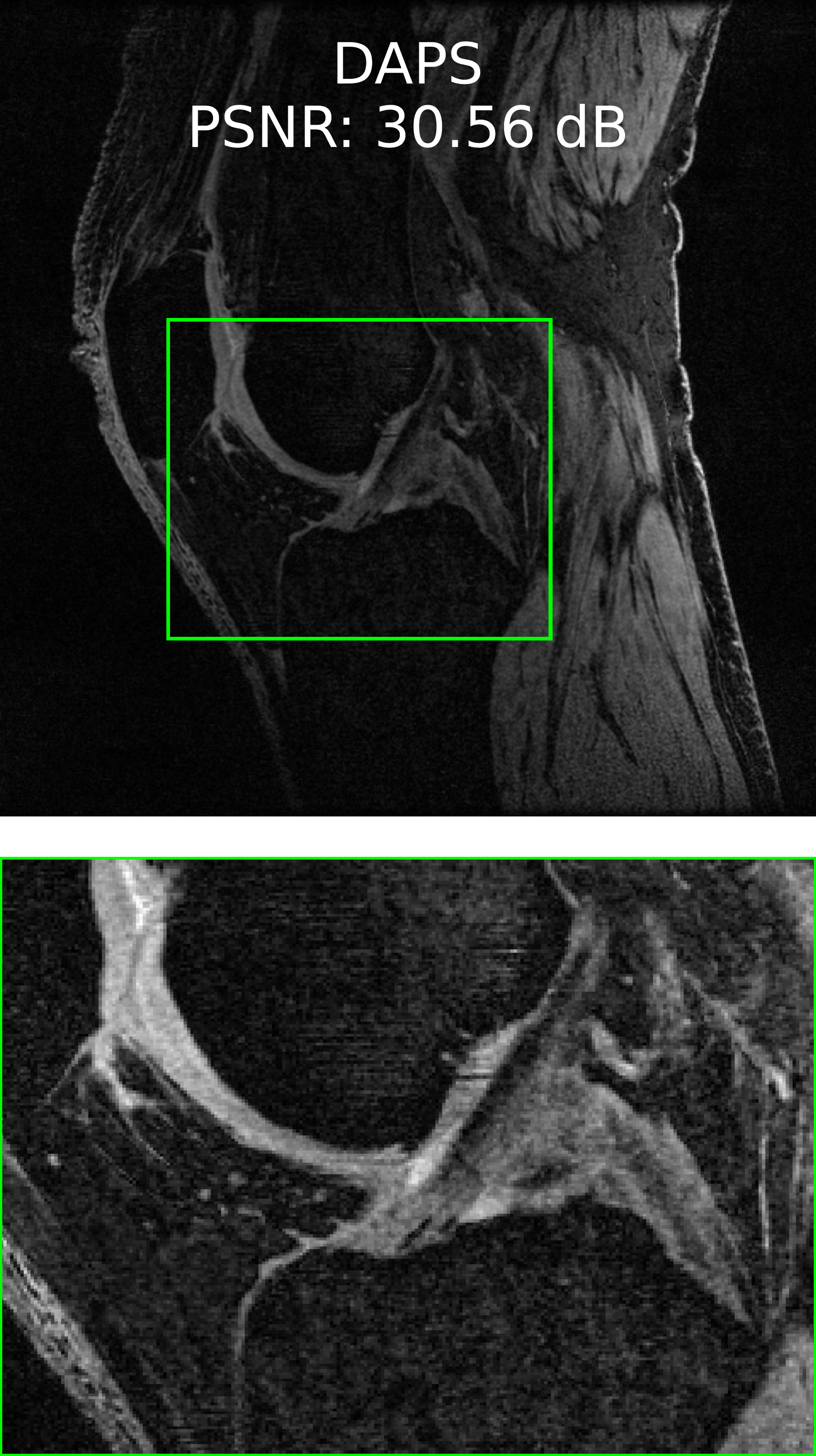} &
\includegraphics[scale=0.095]{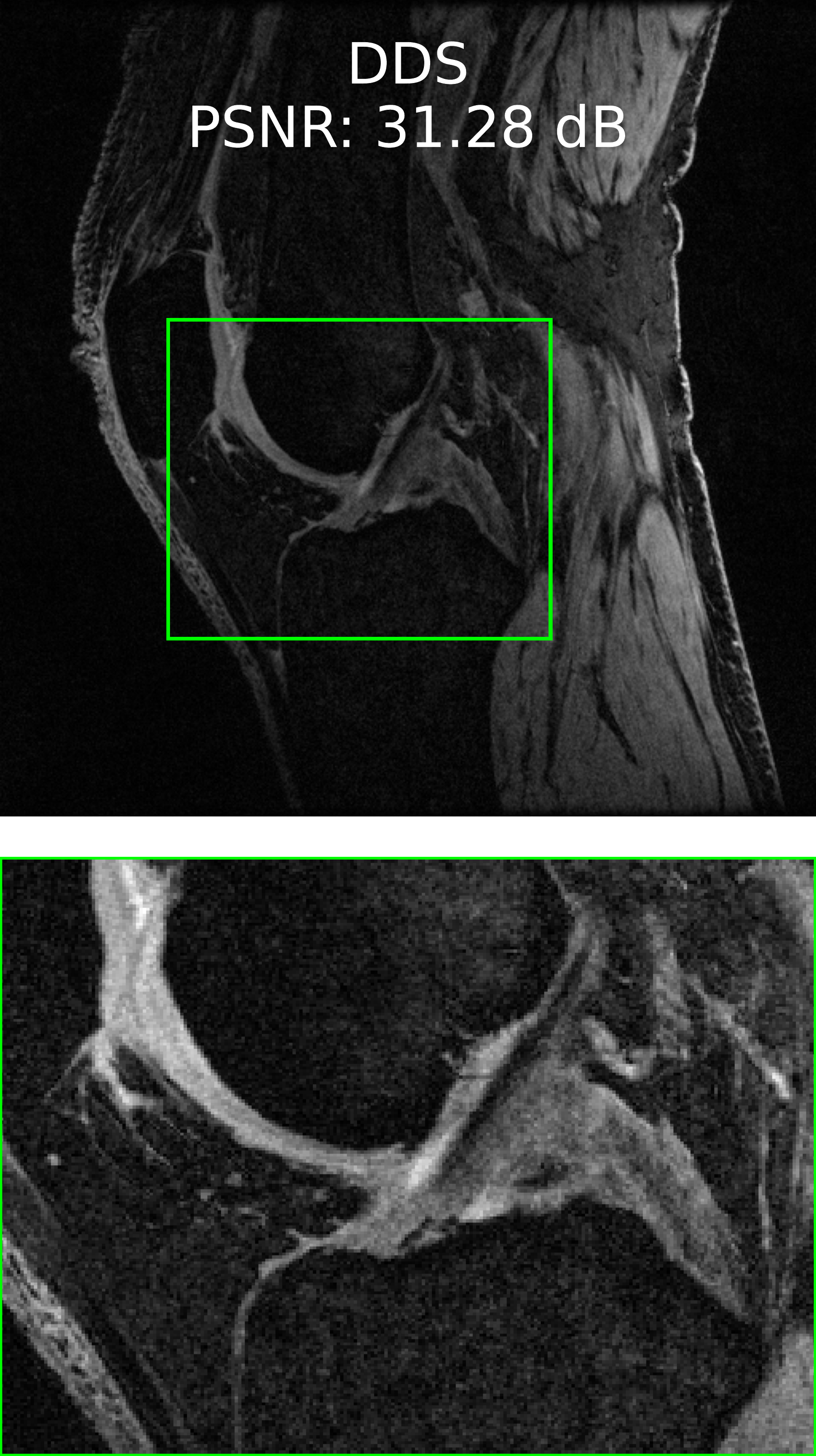} &
\includegraphics[scale=0.095]{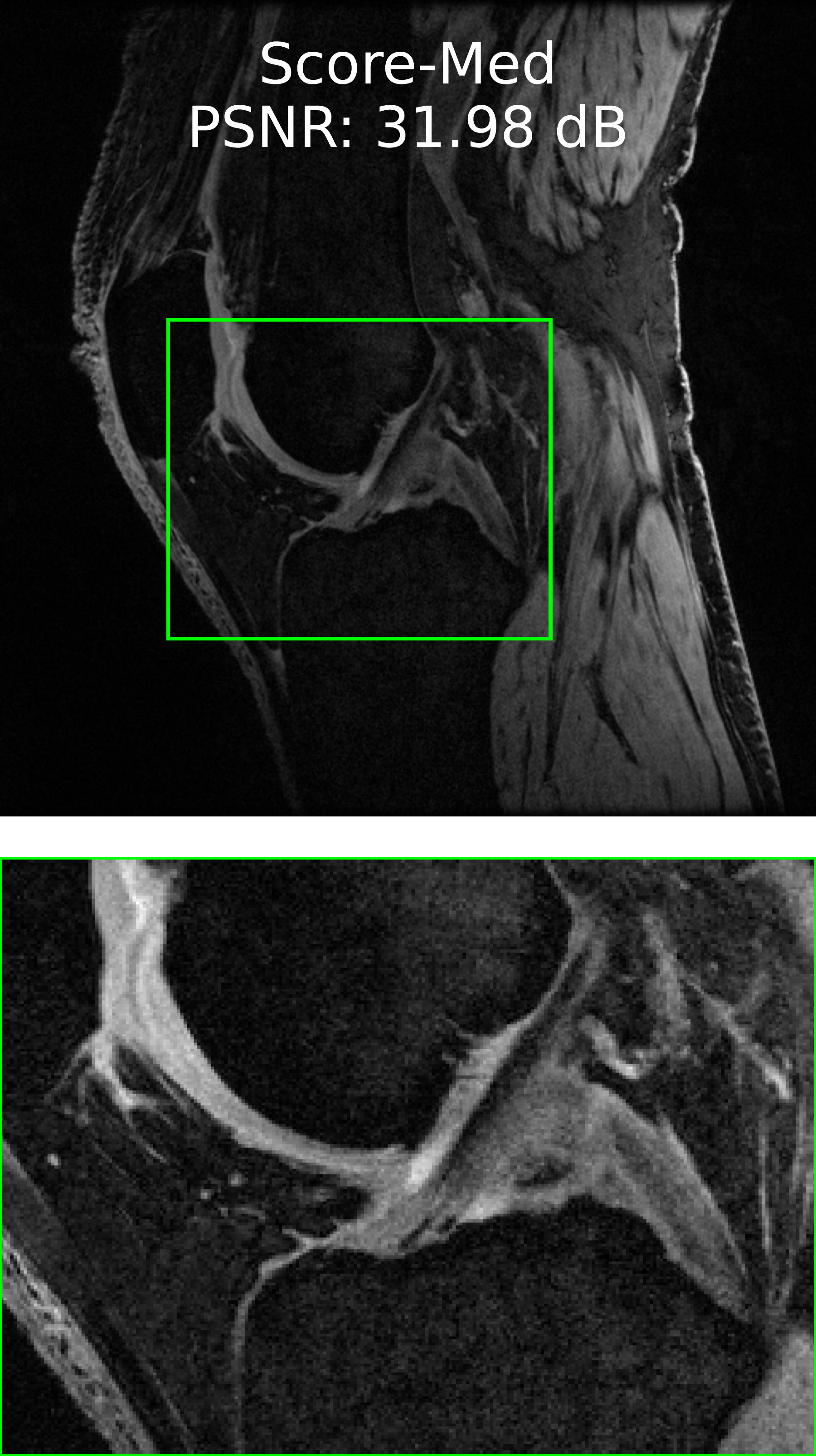} &
\includegraphics[scale=0.095]{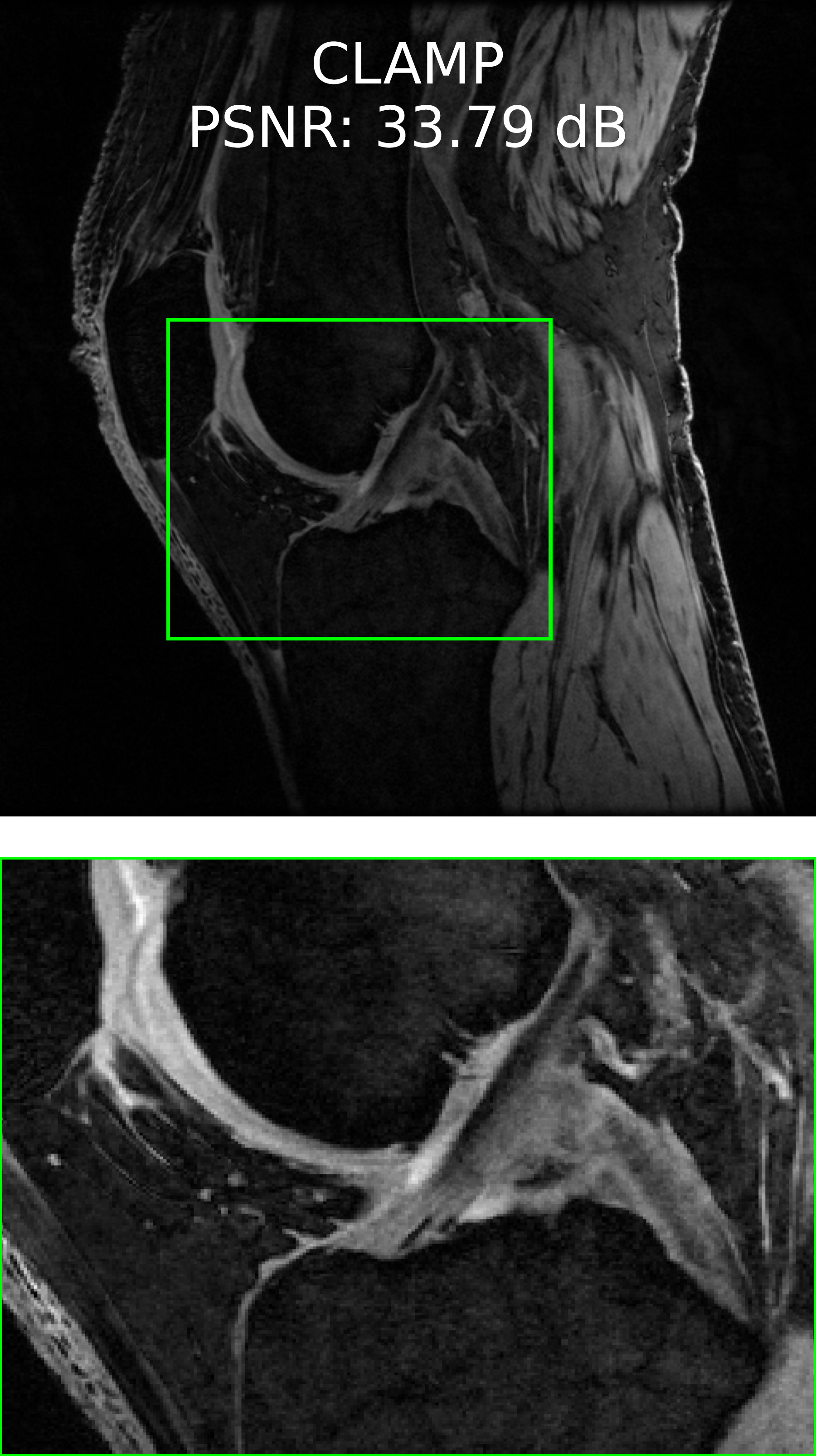} \\
\end{tabular}}
\caption{Reconstructions from $\times8$ undersampled measurements are shown for various diffusion-based baselines. The top row displays the full magnitude images with their respective PSNR (dB) values. The bottom row provides zoomed-in crops of the region highlighted by the green box. \textsc{clamp} demonstrates superior restoration of fine anatomical details and sharper edges compared to other baselines.}
\label{fig:mri_qual} 
\end{figure*}

\section{Future work and Limitations}
\label{appendix:future_limit}
Our method relies on automatic differentiation to compute Jacobian actions of the forward operator, specifically VJPs and JVPs. As a result, the framework may be difficult to apply directly when the measurement process contains non-differentiable components, such as hard quantization, clipping, or codec pipelines (e.g., JPEG). While replacing such operators with differentiable surrogates provides a practical workaround, this can introduce modeling mismatch between the true measurement process and its surrogate, which may adversely affect reconstruction quality. Developing principled and robust mechanisms to handle non-differentiable or black-box forward operators remains an important direction for future research.

Furthermore, our current experimental validation focuses on 2D image data. Extending the proposed framework to higher-dimensional modalities is a promising avenue for future work, including 3D reconstruction and video inverse problems. In these settings, scaling the curvature-guided corrections and associated linear solves to accommodate increased dimensionality—as well as exploiting spatiotemporal structure and temporal consistency—will be key challenges.

\clearpage
\begin{figure}
\centering
  \makebox[\textwidth][c]{%
    \includegraphics[width=0.55\linewidth]{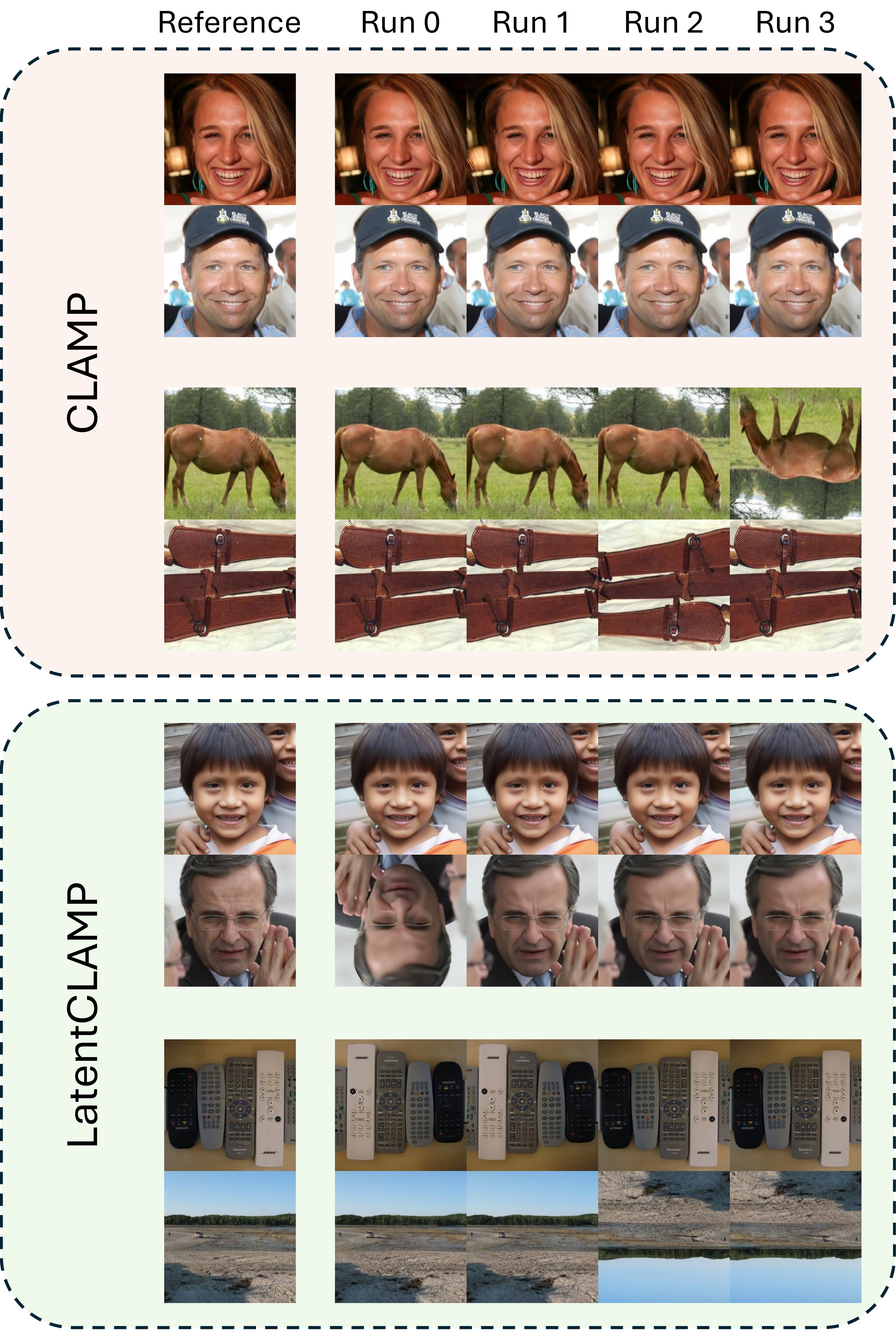}
  }
\caption{\textbf{Additional visualizations: phase retrieval (FFHQ and ImageNet).}
Additional reconstructions on FFHQ and ImageNet for phase retrieval; for each input, we visualize all reconstructions obtained from four independent trials.}

\end{figure}
\begin{figure}
\centering
  \makebox[\textwidth][c]{%
    \includegraphics[width=\linewidth]{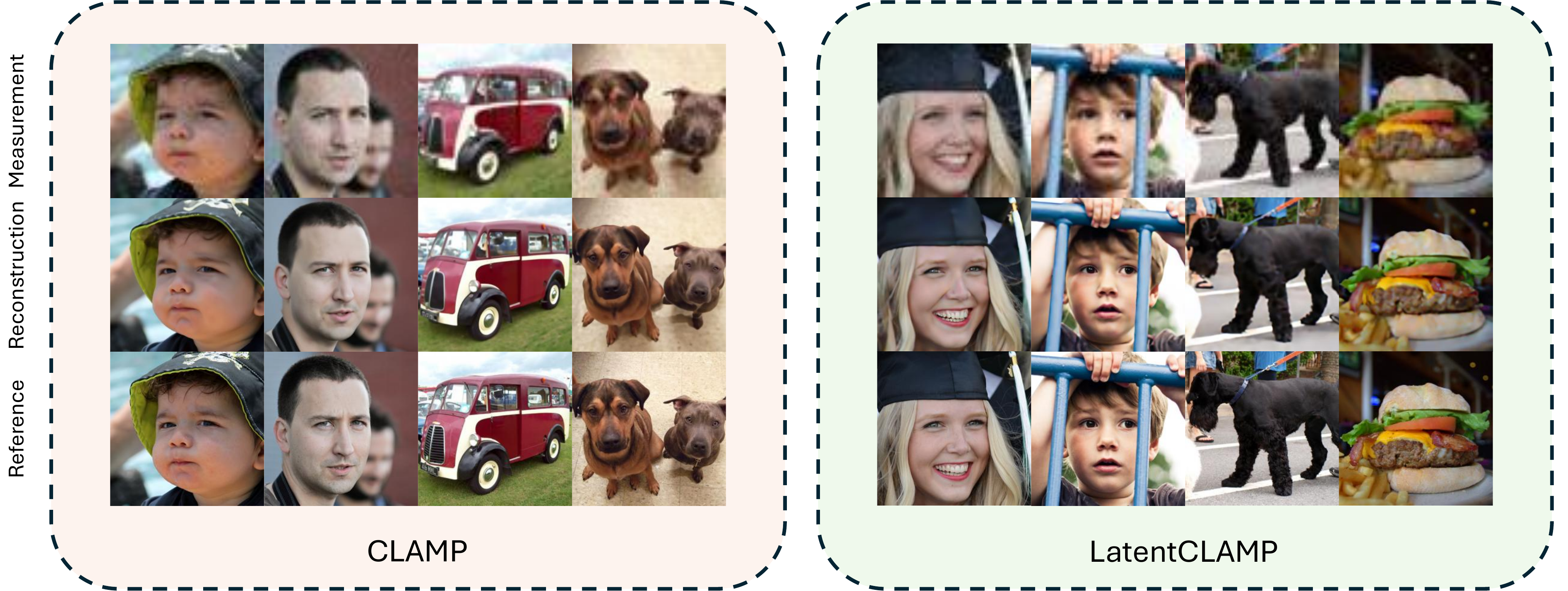}
  }
\caption{\textbf{Additional visualizations: super resolution 4$\times$ (FFHQ and ImageNet).}
Additional reconstructions on FFHQ and ImageNet for super resolution 4$\times$.}

\end{figure}
\begin{figure}
\centering
  \makebox[\textwidth][c]{%
    \includegraphics[width=\linewidth]{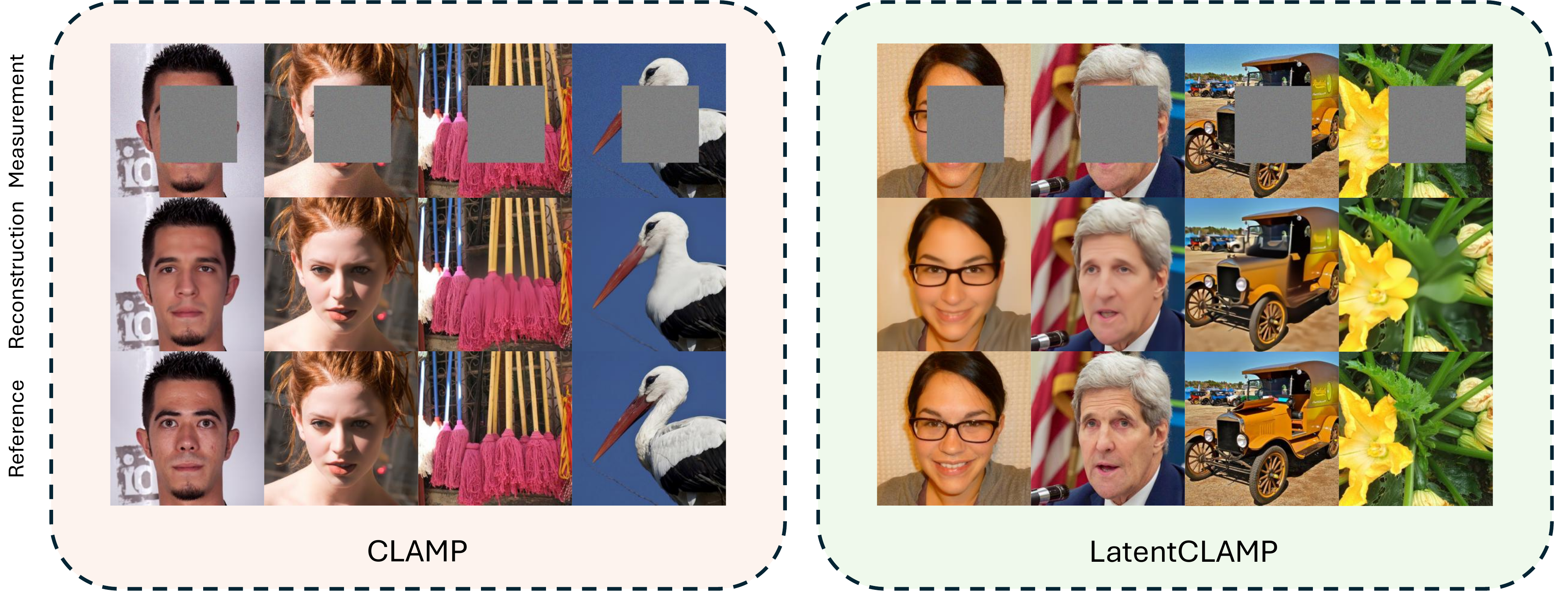}
  }
\caption{\textbf{Additional visualizations: box inpainting (FFHQ and ImageNet).}
Additional reconstructions on FFHQ and ImageNet for box inpainting}

\end{figure}
\begin{figure}
\centering
  \makebox[\textwidth][c]{%
    \includegraphics[width=\linewidth]{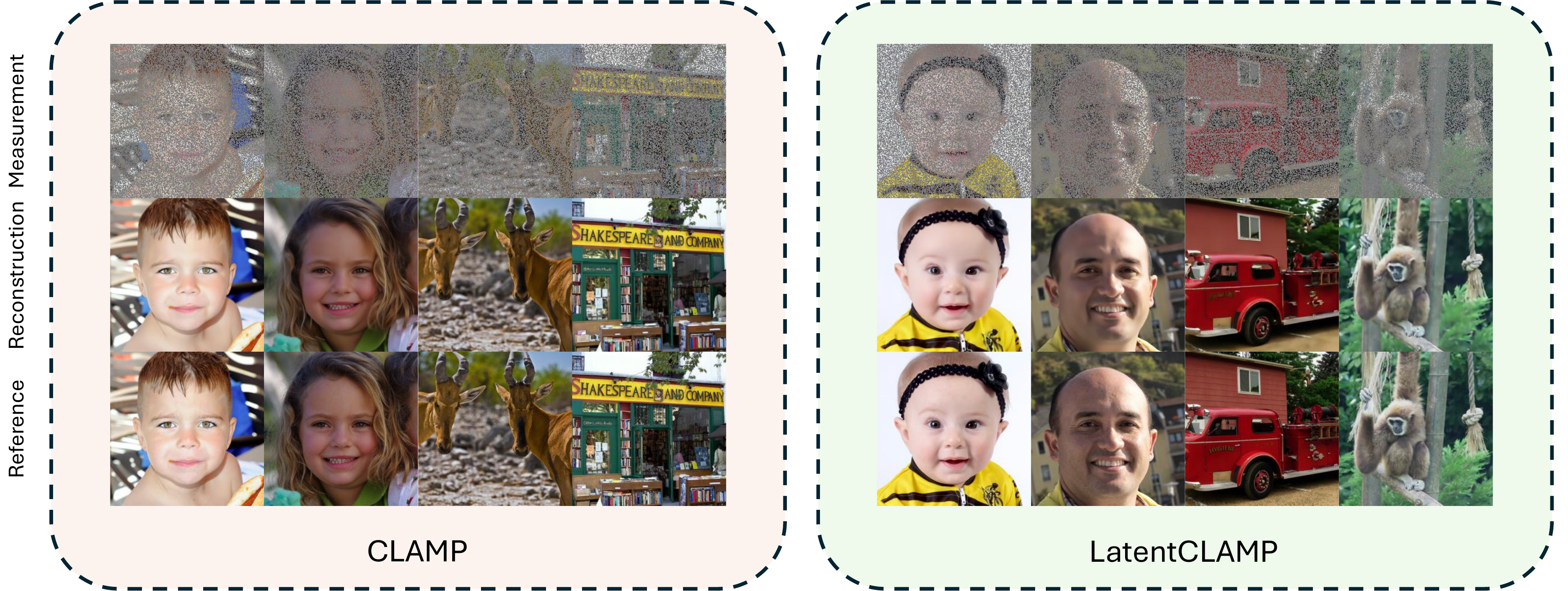}
  }
\caption{\textbf{Additional visualizations: random inpainting (FFHQ and ImageNet).}
Additional reconstructions on FFHQ and ImageNet for random inpainting.}

\end{figure}
\begin{figure}
\centering
  \makebox[\textwidth][c]{%
    \includegraphics[width=\linewidth]{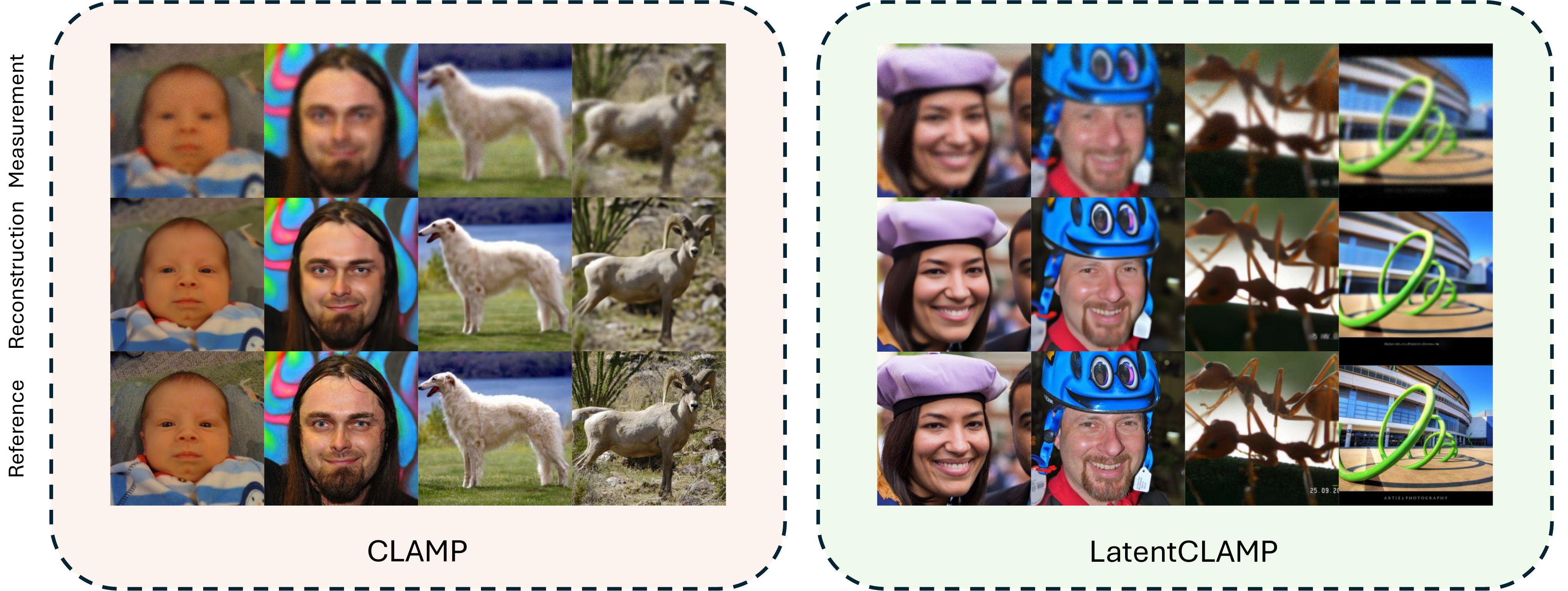}
  }
\caption{\textbf{Additional visualizations: Gaussian deblurring (FFHQ and ImageNet).}
Additional reconstructions on FFHQ and ImageNet for Gaussian deblurring.}

\end{figure}
\begin{figure}
\centering
  \makebox[\textwidth][c]{%
    \includegraphics[width=\linewidth]{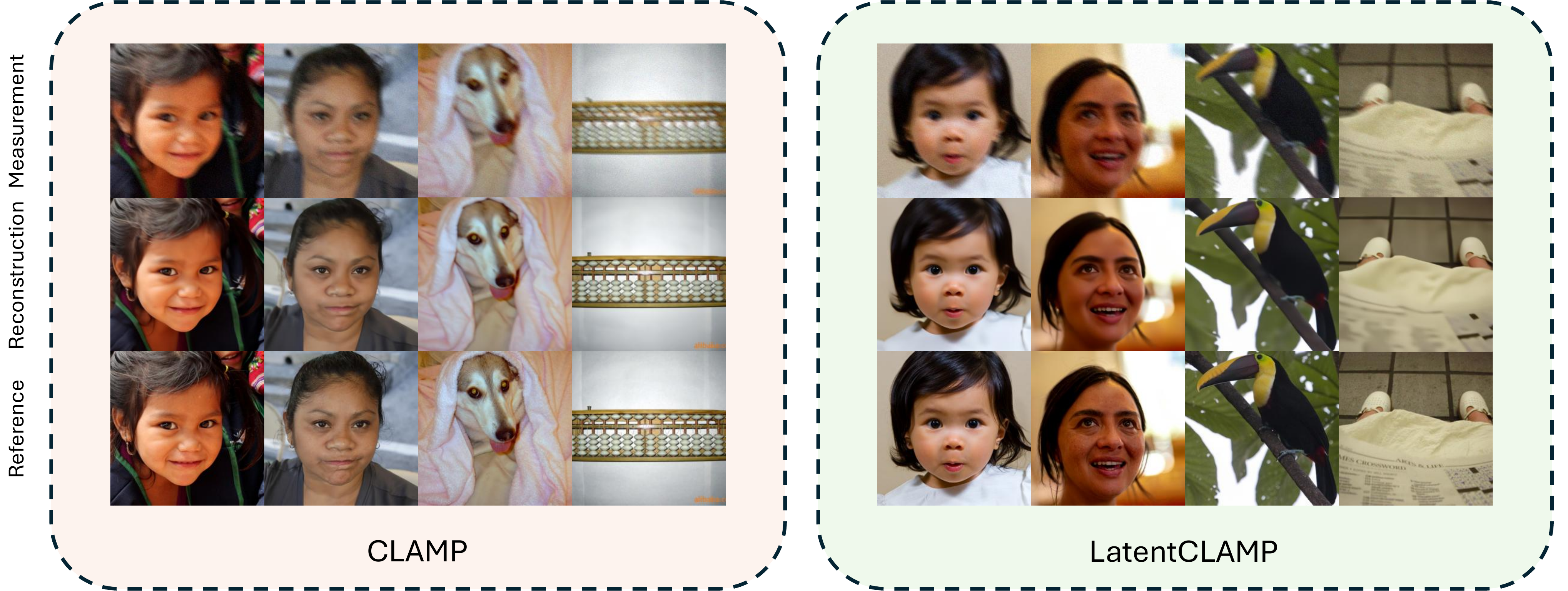}
  }
\caption{\textbf{Additional visualizations: motion deblurring (FFHQ and ImageNet).}
Additional reconstructions on FFHQ and ImageNet for motion deblurring.}

\end{figure}
\begin{figure}
\centering
  \makebox[\textwidth][c]{%
    \includegraphics[width=\linewidth]{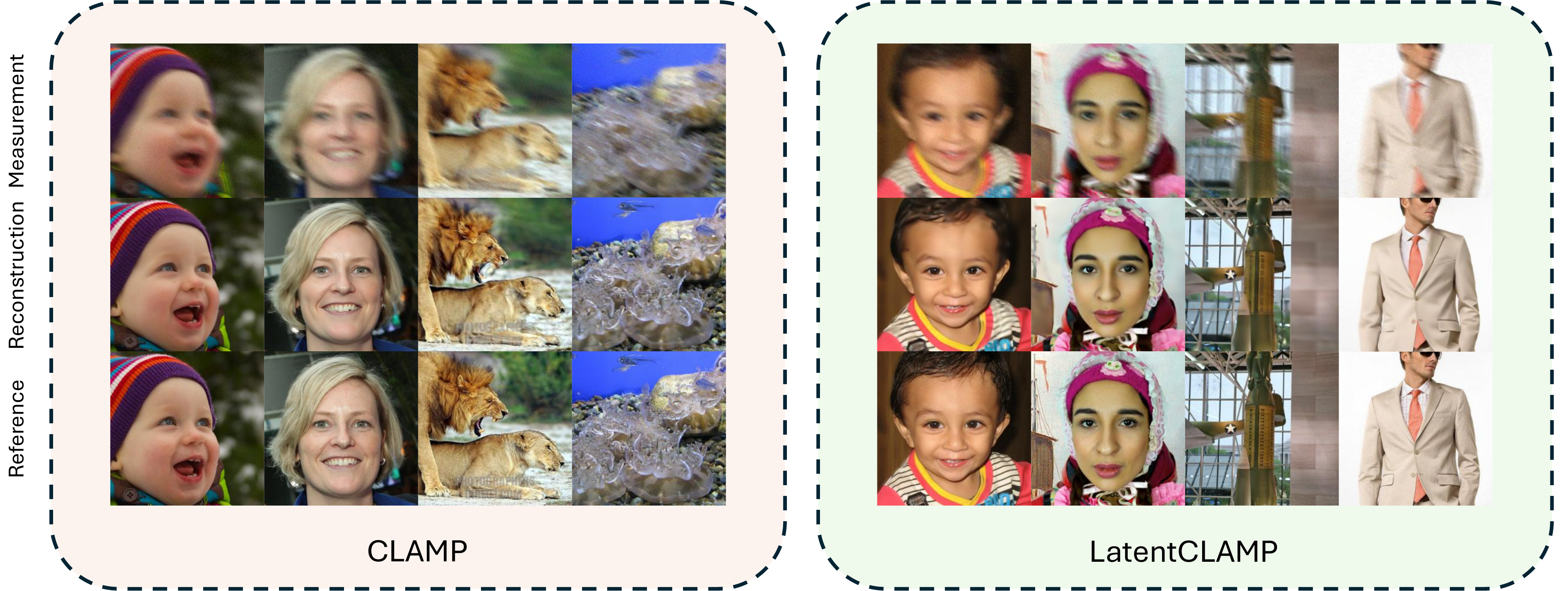}
  }
\caption{\textbf{Additional visualizations: nonlinear deblurring (FFHQ and ImageNet).}
Additional reconstructions on FFHQ and ImageNet for nonlinear deblurring.}

\end{figure}
\begin{figure}
\centering
  \makebox[\textwidth][c]{%
    \includegraphics[width=\linewidth]{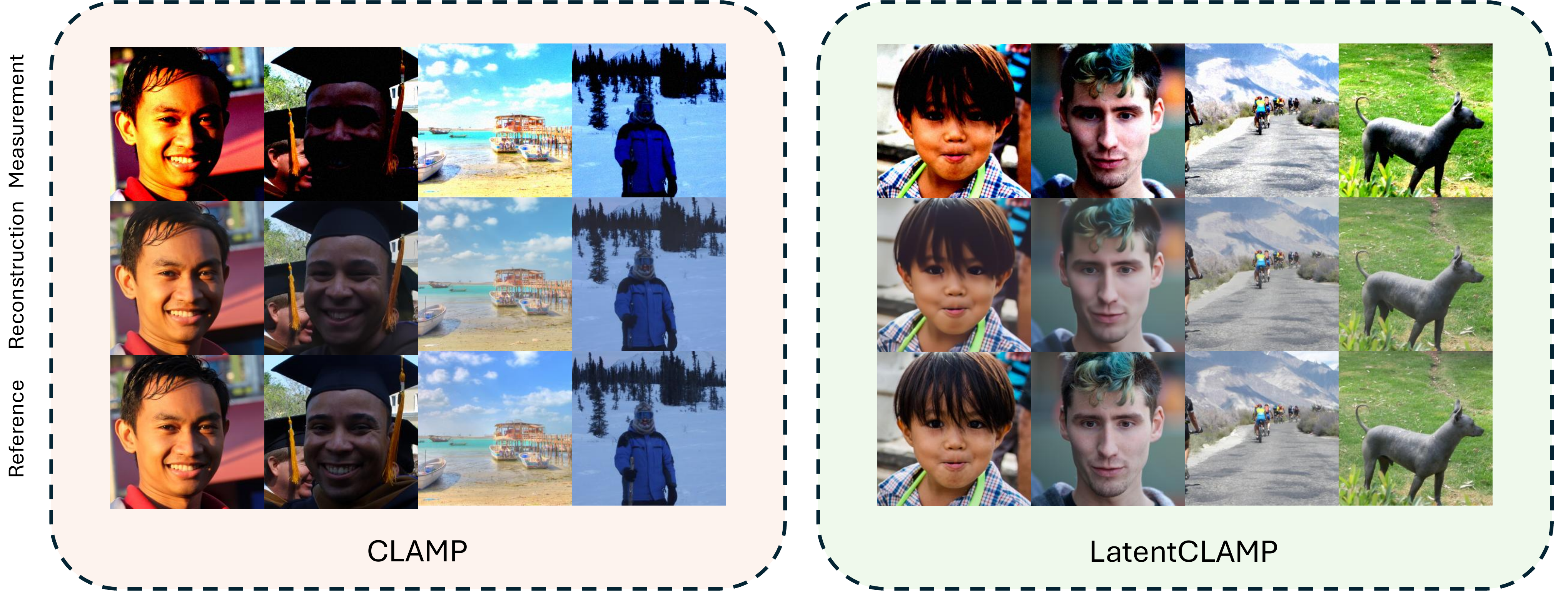}
  }
\caption{\textbf{Additional visualizations: high dynamic range (FFHQ and ImageNet).}
Additional reconstructions on FFHQ and ImageNet for high dynamic range reconstruction.}

\end{figure}

\end{document}